\def\supp{\mathop{\text{supp}\kern.2ex}}
\let\hat\widehat
\let\tilde\widetilde
\let\hat\widehat
\let\tilde\widetilde
\def\ds{\displaystyle}
\def\1{{(1)}}
\def\2{{(2)}}
\long\def\comment#1{}
\newcommand{\appsection}[1]{\let\oldthesection\thesection
  \renewcommand{\thesection}{Appendix \oldthesection}
  \section{#1}\let\thesection\oldthesection}
\begin{document}

\begin{center}
{\huge Compressive Network Analysis}\\
\vspace{.5cm}
{\Large Xiaoye Jiang$^{1}$, Yuan Yao$^{2}$, Han Liu$^{3}$, Leonidas Guibas$^{1}$}\\
\vspace{.5cm}
{\large $^{1}$Stanford University; $^{2}$Peking University;\\
$^{3}$Johns Hopkins University}\\
\vspace{.5cm}
\today
\end{center}

\begin{abstract}
Modern data acquisition routinely produces massive amounts of network data. Though many methods and models have been proposed to analyze such data, the research of network data is largely disconnected with the classical theory of statistical learning and signal processing.  In this paper, we present a new framework for modeling network data, which connects two seemingly different areas: {\it network data analysis} and {\it compressed sensing}.  From a nonparametric perspective,  we model an observed network using a large dictionary. In particular, we consider the network clique detection problem and show connections between our formulation with a new algebraic tool, namely {\it Randon basis pursuit in homogeneous spaces}. Such a connection allows us to identify rigorous recovery conditions for clique detection problems. Though this paper is mainly conceptual, we also develop practical approximation algorithms for solving empirical problems and demonstrate their usefulness on real-world datasets. 
\end{abstract}
\begin{quote}
{\bf Keywords}: network data analysis, compressive sensing, Radon basis pursuit, restricted isometry property, clique detection.
\end{quote}


\section{Introduction}
\label{sec:intro}

In the past decade, the research of network data has increased dramatically.  Examples include scientific studies involving web data or hyper text documents connected via hyperlinks, social networks or user profiles connected via friend links,  co-authorship and citation network connected by collaboration or citation relationships, gene or protein networks connected by regulatory relationships, and much more. Such data appear frequently in modern application domains and has led to numerous high-impact applications. For instance, detecting anomaly in ad-hoc information network is vital for corporate and government security; exploring hidden community structures helps us to better conduct online advertising and marketing; inferring  large-scale gene regulatory network is crucial for new drug design and disease control. Due to the increasing importance of network data, principled analytical  and modeling tools are crucially needed.

Towards this goal, researchers from the {\it network modeling} community have proposed many models to explore and predict the network data. These models roughly fall into two categories: {\it static} and {\it dynamic} models.  For the static model, there is only one single snapshot of the network being observed. In contrast, dynamic models can be applied to analyze datasets that contain many snapshots of the network indexed by different time points.  Examples of the static network models include the Erd\"{o}s-R\'{e}nyi-Gilbert random graph model \citep{ER:59, erdos1960erg}, the $p_{1}$ \citep{Holland:81}, $p_{2}$ \citep{Duijn:2004} and more general exponential random graph (or $p^{*}$) model \citep{wasserman1996},   latent space model \citep{Hoff01latentspace}, block model \citep{Lorrain71}, stochastic blockmodel \citep{Wasserman:87}, and mixed membership stochastic blockmodel \citep{Edoardo08}.   Examples of the dynamic network models include the preferential attachment model \citep{Barabasi:99}, the small-world model \citep{watts1998cds}, duplication-attachment model \citep{Kleinberg:99, Kumar:00}, continuous time  Markov model \citep{Snijders05modelsfor}, and dynamic latent space model \citep{sarkar05b}. A comprehensive review of these models is provided in \cite{Gldenberg:09}.  

Though many methods and models have been proposed, the research of network data analysis is largely disconnected with the classical theory of statistical learning and signal processing. The main reason is that, unlike the usual scientific data for which independent measurements can be repeatedly collected, network data are in general collected in one single realization and the nodes within the network are highly relational due to the existence of many linkages. Such a disconnection prevents us from directly exploiting the state-of-the-art statistical learning methods and theory to analyze network data. To bridge this gap, we present a novel framework  to model network data. Our framework assumes that the observed network has a sparse representation with respect to some dictionary (or basis space). Once the dictionary is given, we formulate the network modeling problem into a {\it compressed sensing} problem.  Compressed sensing, also known as compressive sensing and compressive sampling, is a technique for finding sparse solutions to underdetermined linear systems. In statistical machine learning, it is related to reconstructing a signal which has a sparse representation in a large dictionary.  The field of compressed sensing has existed for decades, but recently it has exploded due to the important contributions of \cite{candes2005, candes2007, candes2008, Tsaig06compressedsensing}. By viewing the observed network adjacency matrix as the output of an underlying function evaluated on a discrete domain of network nodes,  we can formulate the network modeling problem into a compressed sensing problem.

Specifically, we consider the network clique detection problem within this novel framework.  By considering a generative model in which the observed adjacency matrix is assumed to have a sparse representation in a large dictionary where each basis corresponds to a clique, we connect our framework with a new algebraic tool, namely {\it Randon basis pursuit in homogeneous spaces}. Our problem can be regarded as an extension of the work in \cite{jaga2008} which studies sparse recovery of \emph{functions on permutation groups}, while we reconstruct \emph{functions on $k$-sets} (cliques), often called the {\em homogeneous space} associated with a permutation group in the literature~\citep{diaconis1988}. It turns out that the discrete Radon basis becomes the natural choice instead of the Fourier basis considered in~\cite{jaga2008}. This leaves us a new challenge on addressing the noiseless exact recovery and stable recovery with noise. Unfortunately, the greedy algorithm for exact recovery in \cite{jaga2008} cannot be applied to noisy settings, and in general the Radon basis does not satisfy the Restricted Isometry Property (RIP) \citep{candes2008} which is crucial for the universal recovery. In this paper, we develop new theories and algorithms which guarantee exact, sparse, and stable recovery under the choice of Radon basis. These theories have deep roots in Basis Pursuit \citep{chen1999} and its extensions with uniformly bounded noise.  Though this paper is mainly conceptual: showing the connection between network modeling and  compressed sensing,  we also provide some rigorous theoretical analysis and practical algorithms on the clique recovery problem to illustrate the usefulness of our framework.

The main content of this paper can be summarized as follows. Section \ref{sec:mainidea} presents the general framework on compressive network analysis. In Section \ref{sec:cliquedetection}, \ref{sec:radonbp} and \ref{sec:maththeory}, we consider the clique detection problem under the compressive network analysis framework. A polynomial time approximation algorithm is provided in Section \ref{sec:algorithm} for the clique detection problem. We also demonstrate successful application examples in Section \ref{sec:application}. Section \ref{sec:conclusion} concludes the paper. 

\section{Main Idea}\label{sec:mainidea}

In this section we present the general framework of compressive network analysis with a nonparametric view.  We start with an introduction of notations:   let $u  = (u_{1}, \ldots, u_{d})^{T} \in \mathbb{R}^{d}$ be a vector and $I(\cdot)$ be the indicator function. We denote 
\begin{eqnarray}
\| u\|_{0} \equiv \sum_{j=1}^{d}I(u_{j}\neq 0),~~\| u\|_{2} \equiv \sqrt{\sum_{j=1}^{d}u^{2}_{j}}, ~~\| u\|_{\infty} \equiv \max_{j}|u_{j}|.
\end{eqnarray}
We also denote by $\langle \cdot, \cdot \rangle$ the Euclidean inner product and $\mathrm{sign}(u) = (\mathrm{sign}(u_{1}),\ldots, \mathrm{sign}(u_{d}))^{T}$, where
\begin{eqnarray}
 \mathrm{sign}(u_{j}) =
\left\{
\begin{array}{ccc}
  +1   &  u_{j} > 0 \\
  0   &   u_{j} = 0~ \\
  -1   &  u_{j}<0
\end{array}
\right.
\end{eqnarray}

We represent a network as a graph $G=(V,E)$, where $V=\{1,\ldots, n\}$ is the set of  nodes and $E\subset V\times V$ is the set of edges.  Let $B \in \mathbb{R}^{n\times n}$ be the adjacency matrix of the observed network with $B_{ij}$ represents a quantity associated with nodes $i$ and  $j$. With no loss of generality, we assume  that $B$  is symmetric: $B = B^{T}$ and $\mathrm{diag}(B)=0$.  With these assumptions, to model $B$ we only need to model its upper-triangle. For notational simplicity, we squeeze $B$ into a vector $b \in \mathbb{R}^{M}$ where $M = n(n-1)/2$ is the number of upper-triangle elements in $B$.   Let $f(V) \in \mathbb{R}^{M}$ be an unknown vector-valued function defined on $V$. We assume a generative model of the observed adjacency matrix $B$ (or equivalently, $b$):
\begin{eqnarray}
b = f(V) + z, \label{eq::general}
\end{eqnarray}
where $z \in \mathbb{R}^{M}$ is a noise vector. We can view $f(V)$ as evaluating a possibly infinite-dimensional function $f$ on a discrete set $V$,  thus the model \eqref{eq::general} is intrinsically  nonparametric and can model any static networks.

Without further regularity conditions or constraints, there is no hope for us to reliably estimate $f$. In our framework, we  assume that $f$ has a sparse representation with respect to an $M$ by $N$ dictionary $A=[\phi_{1}(V), \ldots, \phi_{N}(V)]$ where each $\phi_j(V)\in \mathbb{R}^{M}$ is a basis function, i.e., there exists a subset $S \subset \{1, \ldots, N\}$ with cardinality $|S| \ll N$, such that 
\begin{eqnarray}
f(V) = \sum_{q\in S} x_{q} \phi_{q}(V).
\end{eqnarray}
In the sequel, we denote by $A_{pq}$ the element on the $p$-th row and $q$-th column of $A$. Here $p$ indexes a pair of different nodes and $q$ indexes a basis $\phi_{q}(V)$.   To estimate $f$, we only need to reconstruct $x=(x_{1}, \ldots, x_{N})^{T}$. Given the dictionary $A$, we can estimate $f$ by solving  the following program:
\begin{eqnarray}\label{eq:P0}
{\rm (P_0)}&&~\min\|x\|_{0}~\text{s.t.}~\| b - Ax\|_{z} \leq \delta
\end{eqnarray}
where $\|\cdot\|_{z}$ is a vector norm constructed using the knowledge of $z$. The problem in \eqref{eq:P0} is non-convex.  In the sparse learning literature, a convex relaxation of \eqref{eq:P0} can be written as
\begin{eqnarray}\label{eq:P1}
{\rm (P_1)}&&~\min\|x\|_{1}~\text{s.t.}~\| b-  Ax\|_{z} \leq \delta.
\end{eqnarray}
One thing to note is that the dictionary $A$ can be either constructed based on the domain knowledge, or it can be learned from empirical data. For simplicity, we always assume $A$ is pre-given in this paper.  In the following sections, we use the clique detection problem as a case study to illustrate the usefulness of this framework.

\section{Clique Detection}\label{sec:cliquedetection}

In network data analysis, The problem of identifying communities or cliques\footnote{A clique means a complete subgraph of the network.} based on partial information arises frequently in many applications,  including identity management~\citep{guibas2008}, statistical ranking~\citep{diaconis1988, jaga2008}, and social networks~\citep{leskovec2010}. In these applications we are typically given a network with its nodes representing players, items, or characters, and edge weights summarizing the observed pairwise interactions. 
{\it The basic problem is to determine communities or cliques within the network by observing the frequencies of low order interactions,} since in reality such low order interactions are often governed by a considerably smaller number of high order
communities or cliques. Therefore  the clique detection problem can be formulated as \emph{compressed sensing} of cliques in large networks. To solve this problem, one has to answer two questions: {\it (i) what is the suitable representation
basis, and (ii) what is the reconstruction method?} 
 Before rigorously formulating the problem, we provide three motivating examples as a glimpse of typical situations which can be addressed within the framework in this paper.

\textbf{Example 1 (Tracking Team Identities)}
We consider the scenario of multiple targets moving in an environment monitored by sensors.
We assume every moving target has an identity and
they each belong to some teams or groups.
However, we can only obtain partial interaction information due
to the measurement structure. For example, watching a grey-scale video of a basketball
game (when it may be hard to tell apart the two teams), sensors may observe
ball passes or collaboratively offensive/defensive interactions
between teammates. The observations are partial due to the fact that
players mostly exhibit to sensors low order interactions in basketball games.
It is difficult to observe a single event which involves all team members.
Our objective is to infer membership information (which team the players belong to)
from such partially observed interactions.

\textbf{Example 2 (Inferring High Order Partial Rankings)}
The problem of clique identification also arises in ranking problems.
Consider a collection of items which are to be ranked by a set of users.
Each user can propose the set of his or her $j$ most favorite items (say top 3 items) but without specifying a relative preference within this set.
We then wish to infer what are the top $k>j$ most favorite items (say top 5 items).
This problem requires us to infer high order partial rankings from low order observations.

\textbf{Example 3 (Detecting Communities in Social Networks)}
Detecting communities in social networks is of extraordinary importance.
It can be used to understand the organization or collaboration structure of a social network. However, we do not have direct mechanisms to sense social communities. Instead, we have partial, low order interaction information. For example, we observe pairwise or triple-wise co-appearance among people who hang out for some leisure activities together. We hope to detect those social communities in the  network from such partially observation data.

In these examples we are typically given a network with some nodes representing players, items, or characters, and edge weights summarizing the observed pairwise interactions. Triple-wise and other low order information can be further exploited if we consider complete sub-graphs or cliques in the networks.
{\it The basic problem is to determine common interest groups or cliques within the network by observing the frequency of low order interactions}. Since in reality such low order interactions are often governed by a considerably smaller number of high order
communities. In this sense we shall formulate our problem as \emph{compressed sensing of cliques} in networks.

The problem we are going to address has a close relationship with community detection in social networks. Community structures are ubiquitous in social networks. However, there is no consistent definition of a ``community". In the majority of research studies, community detections based on partitions of nodes in a network. Among these works, the most famous one is based on the modularity~\citep{Newman06} of a partition of the nodes in a group. A shortcoming in partition-based methods is that they do not allow overlapping communities, which occur frequently in practice.  Recently there has been growing interest in studying overlapping community structures~\citep{Lanci09}. The relevance of cliques to overlapping communities was probably first addressed in the clique percolation method~\citep{Palla05}.  In that work, communities were modeled as maximal connected components of cliques in a graph where two $k$-cliques are said to be connected if they share $k-1$ nodes. In this paper, we pursue a compressive representation of signals or functions on networks based on clique information which in turns sheds light on multiple aspects of community
structure.

In this paper,  we use the same definition as in \cite{Palla05} but are more interested in identifying cliques.  We pursue an alternative approach on  exploring networks based on clique information which potentially sheds light on multiple aspects of community
structures. Roughly speaking, we assume that there is a frequency function defined on complete low order subsets. For example, in some social networks edge weights are bivariate functions defined on pairs of nodes reflecting strength of pairwise interactions. We also assume that there is another latent frequency function defined on complete high order subsets which we hope to infer. Intuitively, the interaction frequency of a particular low order subset should be the sum of frequencies of high order subsets which it belongs to. Hence we consider a {\em generative mechanism} in which there exists a linear mapping from frequencies on high order subsets (usually sparsely distributed) to low order subsets. One typically can collect data on low order subsets while the task is to find those few dominant high order subsets. This problem naturally fits into the general compressive network analysis framework we introduced in the previous section. Below we demonstrate that the Radon basis will be an appropriate representation for our purpose which allows the sparse recovery by a simple linear programming reconstruction approach.

\section{Radon Basis Pursuit}\label{sec:radonbp}

\subsection{Mathematical Formulation}

Under the general framework in \eqref{eq::general}, we formulate the clique detection problem into a compressed sensing problem named {\it Radon Basis Pursuit}. For this, we construct a dictionary $A$ so that each column of $A$  corresponds to one clique.  The intuition of such a construction is that we assume there are several hidden cliques within the network,  which are perhaps of different sizes and may have overlaps.  Every clique has certain weights. The observed adjacency matrix $B$ (or equivalently, its vectorized version $b$) is a linear combination of many clique basis contaminated by a noise vector $\epsilon$. 

For simplicity, we first restrict ourselves to the case that all the cliques are of the same size $k < n$. The case with mixed sizes will be discussed later.  Let $C_{1}, C_{2}, \ldots, C_{N} $ be all the cliques of size $k$ and each $C_{j} \subset V$. We have $N =  {n \choose k}$.  For each $q \in \{1,\ldots, N\}$, we construct the dictionary $A$ as the following
\begin{eqnarray}
A_{pq} = \left\{
\begin{array}{cl}
 1  & \text{if the $p$-th pair of nodes both lie in $C_{q}$}     \\
 0  & \text{otherwise}.  
\end{array}
 \right. \nonumber
\end{eqnarray}

The matrix $A$ constructed above is related to discrete Radon transforms. In fact, up to a
constant and column scaling, the transpose matrix $A^{*}$ 
is called the discrete Radon transform for two suitably defined homogeneous spaces~\citep{diaconis1988}. 
Our usage here is to exploit the transpose matrix of the Radon transform to construct an over-complete dictionary,
so that the observed output $b$ has a sparse representation with respect to it. More technical discussions of the Radon transforms is beyond the scope of this paper. 

The above formulation can be generalized to the case where $b$ is a vector of length ${n\choose j}$ ($j\ge 2$) with the $p$'th entry in $b$ characterizing a quantity associated with a $j$-set (a set with cardinality $j$). The dictionary $A$ will then be  a binary matrix $R^{j,k}$ with entries indicating whether a $j$-set is a subset of a $k$-clique (a clique with $k$ nodes), i.e.,
\begin{eqnarray}
R^{j,k}_{pq} = \left\{
\begin{array}{cl}
 1  & \text{if the $p$-th $j$-set of nodes all lie in the $k$-clique $C_{q}$}     \\
 0  & \text{otherwise}.  
\end{array}
 \right. \nonumber
\end{eqnarray}
Therefore, the case where $b$ is the vector of length ${n \choose 2}$ corresponds to a special case where $A=R^{2,k}$. Our algorithms and theory hold for general $R^{j,k}$ with $j<k$.

Now we provide two concrete reconstruction programs for the clique identification problems:
\begin{eqnarray*}\label{eq:P1}
{\rm (\mathcal{P}_1)}&&~\min\|x\|_{1}~\ \text{s.t.}~b =   Ax \\
{\rm (\mathcal{P}_{1,\delta})}&&~\min\|x\|_{1}\ ~\text{s.t.}~\| Ax-b\|_{\infty} \leq \delta. 
\end{eqnarray*}

 $\mathcal{P}_1$ is known as Basis Pursuit \citep{chen1999} where we consider an ideal case that the noise level is zero.  For
robust reconstruction against noise, we consider the relaxed program $\mathcal{P}_{1,\delta}$. 
The program in $\mathcal{P}_{1,\delta}$ differs from  the Dantzig selector \citep{candes2007} which uses the constraint in the form $\|A^* (Ax - b )\|_\infty \leq \delta$.   The reason for our choice of $\mathcal{P}_{1,\delta}$ lies in the fact that a more natural noise model for network data is bounded noise rather than Gaussian noise. Moreover, our linear programming formulation of $\mathcal{P}_{1,\delta}$ enables practical computation for large scale problems.

\subsection{Intuition}

Let $G=(V,E)$ be the network we are trying to model. The set of vertices $V$ represents individual identities such as
people in the social network. Each edge in $E$ is associated with some weights which represent interaction frequency information.

We assume that there are several common interest groups or communities within the network, represented by cliques (or complete sub-graphs) within graph $G$, which are perhaps of different sizes and may have overlaps. Every community has certain interaction frequency which can be
viewed as a function on cliques. However, we only receive partial measurements consisting of
low order interaction frequency on subsets in a clique. For example, in the simplest case we may only
observe pairwise interactions represented by edge weights. Our problem is to reconstruct the function on cliques from such partially observed data. A graphical illustration of this idea is provided in Figure \ref{fig:example1}, in which we see an observed network can be written as a linear combination of several overlapped cliques.

\begin{figure}[htp]
\begin{center}
\includegraphics[width=.85\textwidth]{./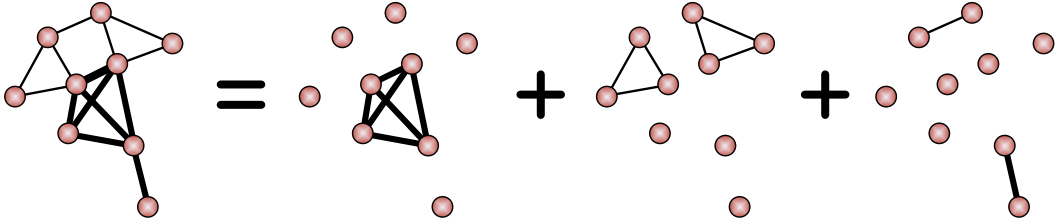}
\caption{ \label{fig:example1} An illustrative example of the main idea.}
\end{center}
\end{figure}

One application scenario is to identify two basketball teams from pairwise interactions among players. 
Suppose we  have $x_0$ which is a signal on all $5$-sets of a $10$-player set. We assume it is sparsely
concentrated on two $5$-sets which correspond to the two teams with nonzero weights. 
Assume we have observations $b$ of pairwise interactions $b=Ax_0+z$, where $z$ is uniform
random noise defined on $[-\epsilon, \epsilon]$. We solve $\mathcal{P}_{1,\delta}$, with $\delta=\epsilon$, which is a linear program over $x\in \mathbb{R}^{10\choose 5} = \mathbb{R}^{252}$ with parameters $A\in \mathbb{R}^{{10\choose 2}\times{10 \choose 5}} = \mathbb{R}^{45\times 252}$ and $b\in \mathbb{R}^{45}$.

\subsection{Connection with Radon Basis}

Let $V_j$ denote the set of all $j$-sets of $V=\{1,\cdots, n\}$ and $M^j$ be the set of real-valued functions on $V_j$.
The observed interaction frequencies $b$ on all $j$-sets, can be viewed
as a function in $M^{j}$. We build a matrix $\tilde{R}^{j,k}:M^k \to M^j$ ($j<k$) as a mapping from functions on all $k$-sets of $V$ to functions
on all $j$-sets of $V$. In this setup, each row represents a $j$-set and each column represents a $k$-set. The
entries of $\tilde{R}^{j,k}$ are either $0$ or $1$ indicating whether the $j$-set is a subset of
the $k$-set. Note that every column of $\tilde{R}^{j,k}$ has {\scriptsize ${k\choose j}$} ones.
Lacking {\it a priori} information, we assume that every $j$-set of a particular $k$-set has equal interaction probability,
whence choose the same constant $1$ for each column.
We further normalize $\tilde{R}^{j,k}$ to $R^{j,k}$ so that the $\ell_2$ norm of each column of $R^{j,k}$ is $1$. To summarize, we have
\begin{equation*} \label{eq:radonmat}\scriptsize
R^{j,k}_{(\sigma,\tau)}=\left\{\begin{array}{cc} \ds  \frac{1}{\sqrt{ {k \choose j}}}, & \mbox{if } \ds \sigma\subset \tau; \\ 0, & \mbox{otherwise},\end{array} \right.
\end{equation*}
where $\sigma$ is a $j$-set and $\tau$ is a $k$-set. As we will see, this construction leads to a canonical basis associated with the discrete Radon transform.
The size of matrix $R^{j,k}$ clearly depends on the total number of items $n=|V|$. We omit $n$ as its meaning will be clear from the context.

The matrix $R^{j,k}$ constructed above is related to discrete Radon transforms on homogeneous space $M^k$. In fact, up to a
constant, the adjoint operator $(R^{j,k})^*$ 
is called the discrete Radon transform from homogeneous space
$M^j$ to $M^k$ in~\cite{diaconis1988}.  Here all the $k$-sets form a homogeneous space.  The collection of all row vectors of $R^{j,k}$ is called as the $j$-th \emph{Radon basis} for $M^k$.
Our usage here is to exploit the transpose matrix of the Radon transform to construct an over-complete dictionary for $M^j$,
so that the observation $b$ can be represented by a possibly sparse function $x\in M^k$ ($k\geq j$).


The Radon basis was proposed as an efficient way to study partially ranked data in \cite{diaconis1988}, where
it was shown that by looking at low order Radon coefficients of a function on $M^k$, we usually get useful and interpretable
information. The approach here adds a reversal of this perspective, i.e. the reconstruction of sparse high order functions from low order Radon coefficients. We will discuss this in the sequel with a connection to the compressive sensing \citep{chen1999, candes2005}.

\section{Mathematical Theory}\label{sec:maththeory}

One advantage of our new framework on compressive network analysis  is that it enables rigorous theoretical analysis of the corresponding convex programs. 

\subsection{Failure of Universal Recovery}
Recently it was shown by \cite{candes2005} and \cite{candes2008} that $\mathcal{P}_1$ has a unique sparse solution $x_0$, if the matrix $A$ satisfies the \emph{Restricted Isometry Property} (RIP), i.e.
for every subset of columns $T \subset \{1,\ldots, N\}$ with $|T|\leq s$, there exists a certain universal constant $\delta_s\in [0,\sqrt{2}-1)$ such that
\[ (1-\delta_s)\|x\|_{2}^2 \le \|A_Tx\|_{2}^2 \le (1+\delta_s)\|x\|_{2}^2, \ \ \ \  \forall x\in \mathbb{R}^{|T|}, \]
where $A_{T}$ is the sub-matrix of $A$ with columns indexed by $T$. Then exact recovery holds for all $s$-sparse signals $x_0$ (i.e. $x_{0}$ has at most $s$ non-zero components),  whence called the \emph{universal recovery}.

Unfortunately, in our construction of the basis matrix $A$, RIP is not satisfied unless for very small $s$. The following theorem illustrates the failure of universal recovery in our case. 

\begin{theorem}
Let $n>k+j+1$ and $A=R^{j,k}$  with $j<k$. Unless $s< {k+j+1\choose k}$, there does not exist a $\delta_{s}<1$ such that the inequalities 
$$(1-\delta_s)\|x\|_{2}^2 \le \|A_Tx\|_{2}^2 \le (1+\delta_s)\|x\|_{2}^2,~~ \forall x\in \mathbb{R}^{|T|}$$
hold universally for every  $T\subset \{1,\ldots, N\}$ with $|T|\leq s$, where $N ={n\choose k}$.
\end{theorem}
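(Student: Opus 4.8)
The plan is to show that the lower inequality in the RIP fails as soon as some collection of at most $s$ columns of $A = R^{j,k}$ is linearly dependent. Indeed, if the columns of $A_T$ are linearly dependent there is a nonzero $x \in \mathbb{R}^{|T|}$ with $A_T x = 0$, so that $\|A_T x\|_2^2 = 0$ while $\|x\|_2^2 > 0$; the required bound $(1-\delta_s)\|x\|_2^2 \le \|A_T x\|_2^2$ then forces $\delta_s \ge 1$, ruling out any $\delta_s < 1$. It therefore suffices to exhibit a single set $T$ with $|T| = {k+j+1 \choose k}$ whose columns are linearly dependent: such a $T$ witnesses the failure of the RIP for every sparsity level $s \ge {k+j+1 \choose k}$, because it satisfies $|T| \le s$ and so lies within the universal quantifier over column subsets.

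To construct $T$, I would fix a subset $W \subseteq V$ with $|W| = k+j+1$ (possible since $n > k+j+1$) and let $T$ be the collection of all $k$-subsets of $W$, so that $|T| = {k+j+1 \choose k}$. The crucial structural observation is a support restriction: for a $k$-set $\tau \subseteq W$, the associated column of $A$ has nonzero entries only in the rows indexed by $j$-sets $\sigma$ with $\sigma \subset \tau$, and every such $\sigma$ satisfies $\sigma \subseteq \tau \subseteq W$. Hence each column of $A_T$ is supported entirely within the coordinate subspace spanned by the rows corresponding to the $j$-subsets of $W$, of which there are ${k+j+1 \choose j}$. Consequently $\mathrm{rank}(A_T) \le {k+j+1 \choose j}$.

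The heart of the argument is then an elementary counting comparison showing that $A_T$ has more columns than the dimension of the space they occupy, i.e. ${k+j+1 \choose k} > {k+j+1 \choose j}$. Writing $m = k+j+1$ and using the symmetry ${m \choose k} = {m \choose j+1}$ together with the ratio ${m \choose j+1}/{m \choose j} = (m-j)/(j+1) = (k+1)/(j+1)$, this inequality reduces to $k > j$, which holds by hypothesis. Therefore $A_T$ has strictly more columns than its rank permits, its columns are linearly dependent, and we obtain the desired nonzero null vector $x$, completing the failure argument.

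I expect the only genuinely delicate point to be the bookkeeping in the support-restriction step, namely verifying that restricting attention to the $k$-subsets of $W$ confines the combined support of those columns to exactly the $j$-subsets of $W$, with no stray $j$-set contributing; once that is pinned down, everything is driven by the single binomial inequality ${m \choose j+1} > {m \choose j}$. I would also note that the same $W$-based construction, supplemented by a short argument that any collection of fewer than ${k+j+1 \choose k}$ columns is linearly independent (whence the minimum over the finitely many admissible subsets of the least singular value is strictly positive, yielding some $\delta_s < 1$), would establish that the threshold ${k+j+1 \choose k}$ is sharp; the theorem as stated, however, only requires the failure direction proved above.
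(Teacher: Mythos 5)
Your proposal is correct and follows essentially the same route as the paper's own proof: restrict to all $k$-subsets of a fixed $(k+j+1)$-element set, observe that the resulting columns are supported on the ${k+j+1\choose j}$ rows indexed by $j$-subsets of that set, deduce linear dependence from the count ${k+j+1\choose k} > {k+j+1\choose j}$, and conclude that a nonzero null vector forces $\delta_s \ge 1$. Your explicit verification of the binomial inequality via the symmetry ${m\choose k}={m\choose j+1}$ and the ratio $(k+1)/(j+1)$ is a slightly more careful rendering of a step the paper merely asserts from $j<k$, but the argument is the same.
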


Note that ${k+j+1\choose k}$ does not depend on the network size $n$, which will be problematic.  We can only recover a constant number of cliques no matter how large the network is.  The main problem for such a negative result is that the RIP tries to guarantee exact recovery for {\it arbitrary} signals with a sparse representation in $A$.  For many applications, such a condition is too strong to be realistic. Instead of studying such ``universal'' conditions,  in this paper we seek conditions that secure exact recovery of a collection of sparse signals $x_0$,  whose sparsity pattern satisfies certain conditions more appropriate to our setting. Such conditions could be more natural in reality, which will be shown in the sequel as simply requiring bounded overlaps between cliques. 
 
\begin{remark}
Recall that the matrix $A$ has altogether $N = {n\choose k}$ columns. Each column in fact corresponds to a $k$-clique. Therefore, we could also use a $k$-clique to index a column of $A$.  In this sense, let $T = \{i_{1},\ldots, i_{k}\} \subset \{1, \ldots, N\}$ be a  subset of size $k$. An equivalent notation is to represent $T$ as a class of sets: $T=\{\tau_{1}, \ldots, \tau_{k}\}$ where each $\tau_{i}\subset \{1,\ldots, n\}$ and $|\tau| = k$.
\end{remark}

 \begin{proof} 
 We can extract a set of columns $T=\{\tau : \tau\subset \{1,2,\cdots, k+j+1\}~\text{and}~|\tau|=k\}$ ($\tau$ is
interpreted as a $k$-set) and form a submatrix $A_T$. 
Recall that $A$ has altogether ${n\choose j}$ number of rows. Combined with the condition that $n>k+j+1$ and the fact that the number of nonzero rows of $A_{T}$ should be exactly ${k+j+1 \choose j}$. We know that there must exist rows in $A_{T}$ which only contains zeroes.

By discarding zero rows, it is easy
to show that the rank of $A_T$ is at most ${k+j+1\choose j}$, which is less than the number of columns. To see that the rank of $A_T$ is at most ${k+j+1\choose j}$, we need to exploit the fact that $j < k$, therefore
\begin{eqnarray}
{k+j+1\choose j} <{k+j+1\choose k},
\end{eqnarray}
from which we see that the number of nonzero rows of $A_{T}$ is smaller than the number of columns. 

Thus, the columns in $A_T$ must be linearly dependent. In other words,
there exist a nonzero vector $h \in \mathbb{R}^{N}$ where $\mathrm{supp}(h)\subset T$ such that $Ah=0$.
When $s\ge{k+j+1\choose k}$, Since $|\mathrm{supp}(h)|\leq |T| < s$, we can not expect universal sparse recovery for all $s$-sparse signals .
 \end{proof}

\subsection{Exact Recovery Conditions}

Here we present  our exact recovery conditions for $x_0$ from the observed data $b$ by solving the linear program $\mathcal{P}_1$. Suppose $A$ is an $M$-by-$N$ matrix and $x_0$ is a sparse signal.
Let $T=\mathrm{supp}(x_0)$, $T^c$ be the complement of $T$, and $A_T$ (or $A_{T^c}$) be the submatrix of $A$ where we only extract column set $T$ (or $T^c$, respectively).  
The following proposition from \cite{candes2005} characterizes the conditions that $\mathcal{P}_{1}$ has  a unique condition. To make this paper self-contained, we also include the proof in this section.
\begin{proposition}\label{pro::P1} 
{\rm \citep{candes2005}} Let $x_{0} = (x_{01}, \ldots, x_{0N})^{T}$, we assume that $A_T^*A_T$ is invertible and there exists a vector $w\in \mathbb{R}^M$ such that
\begin{enumerate}
\item $\left<A_j, w \right>= \mathrm{sign}(x_{0j}), \forall j \in T$; 
\item $|\left<A_j, w \right>| < 1, \forall j \in T^c$.
\end{enumerate}
Then $x_0$ is the unique solution for $\mathcal{P}_1$.
\end{proposition}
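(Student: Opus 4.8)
The plan is to run the standard dual-certificate (KKT) argument for $\ell_1$ minimization, proving that every feasible competitor has strictly larger $\ell_1$ norm. First I would take an arbitrary feasible point $x$, i.e.\ one with $Ax = b = Ax_0$, and set $h = x - x_0$, so that $h$ lies in the null space of $A$: $Ah = 0$. Splitting $\|x\|_1 = \|x_0 + h\|_1$ over $T$ and $T^c$ and using $x_{0j} = 0$ on $T^c$, I would lower-bound each term on $T$ by the subgradient inequality $|x_{0j} + h_j| \ge |x_{0j}| + \mathrm{sign}(x_{0j})\, h_j$. This yields
\[
\|x\|_1 \ge \|x_0\|_1 + \sum_{j \in T} \mathrm{sign}(x_{0j})\, h_j + \sum_{j \in T^c} |h_j|.
\]

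The key step is to eliminate the signed contribution on $T$ using the certificate $w$. Since $\langle A_j, w\rangle = \mathrm{sign}(x_{0j})$ on $T$ by condition~1, and $\langle Ah, w\rangle = \sum_j h_j \langle A_j, w\rangle = 0$ because $Ah = 0$, I can rewrite $\sum_{j \in T} \mathrm{sign}(x_{0j})\, h_j = -\sum_{j \in T^c} \langle A_j, w\rangle\, h_j$. Substituting gives
\[
\|x\|_1 \ge \|x_0\|_1 + \sum_{j \in T^c} \bigl(|h_j| - \langle A_j, w\rangle\, h_j\bigr) \ge \|x_0\|_1 + \sum_{j \in T^c} |h_j|\bigl(1 - |\langle A_j, w\rangle|\bigr),
\]
and every summand is nonnegative by condition~2, so $\|x\|_1 \ge \|x_0\|_1$ and $x_0$ is a minimizer.

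For uniqueness I would track the equality cases. Because $|\langle A_j, w\rangle| < 1$ \emph{strictly} for $j \in T^c$, the factor $1 - |\langle A_j, w\rangle|$ is positive, so $\|x\|_1 = \|x_0\|_1$ forces $h_j = 0$ for all $j \in T^c$; hence $\mathrm{supp}(h) \subset T$. Then $0 = Ah = A_T h_T$, and invertibility of $A_T^* A_T$ (equivalently, full column rank of $A_T$) forces $h_T = 0$, so $h = 0$ and $x = x_0$.

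I expect the only real subtlety to be the bookkeeping in the equality-case analysis: the mere existence of $w$ already yields optimality, but \emph{uniqueness} genuinely needs \textbf{both} ingredients separately, namely the strict inequality of condition~2 (to annihilate $h$ off the support) and the injectivity of $A_T$ from invertibility of $A_T^* A_T$ (to annihilate $h$ on the support). Everything else is routine — the subgradient bound and the null-space identity $\langle Ah, w\rangle = 0$ — so there is no genuinely hard obstacle here beyond assembling these pieces in the right order.
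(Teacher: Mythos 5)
Your proof is correct and follows essentially the same dual-certificate argument as the paper's sufficiency proof: both use the subgradient inequality on $T$, convert the signed terms via $\langle A_j, w\rangle = \mathrm{sign}(x_{0j})$ together with $\langle Ah, w\rangle = 0$, invoke the strict bound on $T^c$ to force the error off the support to vanish, and then use injectivity of $A_T$ to finish. The paper's proof additionally establishes necessity of the two conditions via KKT and strict complementarity, but that part is not needed for the statement as written.
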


\begin{proof} The necessity of the two conditions come from the KKT conditions of $\mathcal{P}_1$. If we consider an equivalent form of $\mathcal{P}_1$

\begin{eqnarray*}
\min & & 1^T \xi \\
\mbox{subject to}&  & A x - b = 0 \\
& & -\xi \leq x \leq \xi \\
& & \xi \geq 0
\end{eqnarray*}
whose Lagrangian is
\[ L(x,\xi; \gamma,\lambda,\mu) = 1^T\xi + \gamma^T(Ax - b)  - \lambda_+^T(\xi - x) - \lambda_- ^T(\xi + x) - \mu^T \xi. \]
Here $\gamma \in \mathbb{R}^{M}$, $\lambda_{+} = (\lambda_{+}(1), \ldots, \lambda_{+}(N))^{T}\in\mathbb{R}^{N}_{+}$, $\lambda_{-} = (\lambda_{-}(1), \ldots, \lambda_{-}(N))^{T}\in\mathbb{R}^{N}_{+}$, $\mu \in \mathbb{R}^{N}_{+}$ are the Lagrange multipliers.

Then the KKT condition gives
\begin{enumerate}
 \item $A^* \gamma + (\lambda_+ - \lambda_- )= 0$, 
 \item $1 - (\lambda_+ + \lambda_-) - \mu = 0$,
\end{enumerate}
with $\lambda, \mu \geq 0$ and $\lambda_+(j)\lambda_-(j)=0$ for all $j$. 

Clearly $T = \mathrm{supp}(x_0) =\{ j: \xi_j > 0\}$. Let $w = \gamma$, 
by the Strictly Complementary Theorem for linear programming in \cite{yebook1997},  there exist $\mu$ and $\xi$ such that $1 >\mu_j>0$ for all $j \in T^c$ with $\xi_j = 0$, and $\mu_j=0$ for all $j\in T$ with $\xi_j >0$. Thus, the first equation leads to 
\[  \langle w, A_j\rangle =  -(\lambda_+(j) - \lambda_-(j))= -\mathrm{sign} (x_{0j}), \ \ \ \ \ j \in T; \]
the second equation leads to 
\[ |\langle w, A_j\rangle | = |\lambda_+(j) - \lambda_-(j)| = 1 - \mu_j < 1. \]
Therefore, the two conditions are necessary for $x_0$ to be the unique solution of $\mathcal{P}_{1}$.

To prove that these two conditions are sufficient to guarantee $x_0$ is the unique minimizer to $\mathcal{P}_1$, we need to show any minimizer $y_0$ to the problem $\mathcal{P}_1$ must be equal to $x_0$. Since $x_0$ obeys the constraint $Ax_0=b$, we must have $$\|y_0\|_1\leq \|x_0\|_1.$$

Now take a $w$ obeying the two conditions, we then compute
\begin{eqnarray*}
\|y_0\|_1&=& \sum_{j\in T}|x_{0j}+(y_{0j}-x_{0j})|+\sum_{j\in T} |y_{0j}|\\
&\ge & \sum_{j\in T}\mathrm{sign}(x_{0j}) (x_{0j}+(y_{0j}-x_{0j}))+\sum_{j\not\in T} y_{0j} \left< w, A_j\right>\\
&=& \sum_{j\in T} |x_{0j}|+\sum_{j\in T} (y_{0j}-x_{0j})\left<w, A_j\right> +
\sum_{j\not\in T} y_{0j} \left< w, A_j\right>\\
&=&\sum_{j\in T} |x_{0j}|+\left<w, \sum_{j\in T\cup T^c} y_{0j}A_j-\sum_{j\in T} x_{0j}A_j\right>\\
&=&\|x_0\|_1+\langle w, b-b\rangle \\
&=&\|x_0\|_1
\end{eqnarray*}  

Thus, the inequalities in the above computation must in fact be equality. Since $|\left<w, A_j\right>|$ is strictly less than $1$ for all $j\not \in T$, this in particular forces $y_{0j}=0$ for all $j\not \in T$. Thus
$$\sum_{j\in T} (y_{0j}-x_{0j})A_j=f-f=0.$$

Since all columns in $A_T$ are independent, we must have $y_{0j}=x_{0j}$ for all $j\in T$. Thus $x_0=y_0$. This concludes the proof of our theorem. 

\end{proof}

The above theorem points out the necessary and sufficient condition that in the noise-free setting $\mathcal{P}_1$ exactly recover the sparse signal $x_0$.
The necessity and sufficiency comes from the KKT condition in convex optimization theory~\citep{candes2005}. However this condition is difficult to check due to the presence of $w$. If we further assume that $w$ lies in the column span of $A_T$,  the condition in Proposition \ref{pro::P1} reduces to the following condition.

\textbf{Irrepresentable Condition (IRR)} The matrix $A$ satisfies the IRR condition with respect to $T=\mathrm{supp}(x_0)$, if $A^*_TA_T$ is invertible and
\begin{equation*}
\|A^*_{T^c}A_T(A^*_TA_T)^{-1}\|_\infty<1,
\end{equation*}
or, equivalently,
\begin{equation*}
\|(A^*_TA_T)^{-1}A^{*}_TA_{T^c}\|_1<1,
\end{equation*}

where $\|\cdot \|_\infty$ stands for the matrix sup-norm, i.e., $\|A\|_\infty := \max_i \sum_j |A_{ij}|$ and $\|A\|_{1} = \max_{j}\sum_{i} |A_{ij}|$.

\begin{proposition}
By restricting that $w$ lies in the image of $A_T$, the conditions in proposition \ref{pro::P1} reduce to the IRR condition. 
\end{proposition}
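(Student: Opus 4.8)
The plan is to show that once $w$ is confined to $\mathrm{Im}(A_T)$, the first condition of Proposition~\ref{pro::P1} pins $w$ down uniquely, and then the second condition becomes exactly the IRR inequality. First I would write $w = A_T c$ for some $c \in \mathbb{R}^{|T|}$ and translate the two conditions into matrix form. Condition~1 reads $A_T^* w = s$, where $s := (\mathrm{sign}(x_{0j}))_{j\in T}$, and substituting $w = A_T c$ gives $A_T^*A_T\, c = s$. Since $A_T^*A_T$ is assumed invertible, this forces $c = (A_T^*A_T)^{-1}s$, so the only admissible choice is
\[ w = A_T(A_T^*A_T)^{-1}s. \]
Thus restricting to $\mathrm{Im}(A_T)$ eliminates the existential quantifier over $w$ entirely: there is exactly one candidate.

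Next I would substitute this $w$ into Condition~2, $\|A_{T^c}^* w\|_\infty < 1$, obtaining
\[ \bigl\| A_{T^c}^* A_T (A_T^*A_T)^{-1} s \bigr\|_\infty < 1. \]
Writing $M := A_{T^c}^* A_T (A_T^*A_T)^{-1}$, this is $\|Ms\|_\infty < 1$. Because every entry of $s$ is $\pm 1$ we have $\|s\|_\infty = 1$, so the induced-norm bound $\|Ms\|_\infty \le \|M\|_\infty\,\|s\|_\infty = \|M\|_\infty$ shows that the IRR inequality $\|M\|_\infty < 1$ implies Condition~2, hence (via Proposition~\ref{pro::P1}) exact recovery. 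Conversely, since the induced $\ell_\infty$ matrix norm $\|M\|_\infty = \max_i \sum_j |M_{ij}|$ is attained at a sign vector, demanding $\|Ms\|_\infty < 1$ uniformly over all sign patterns on $T$ is precisely the IRR condition, so the reduced condition and IRR coincide.

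Finally I would verify the equivalence of the two displayed forms of IRR. Since $A_T^*A_T$ is symmetric, so is its inverse, whence $M^* = (A_T^*A_T)^{-1} A_T^* A_{T^c}$. The standard identity $\|M\|_\infty = \|M^*\|_1$ for induced matrix norms then converts $\|M\|_\infty < 1$ into $\|(A_T^*A_T)^{-1}A_T^*A_{T^c}\|_1 < 1$, matching the second form.

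The step that I expect to require the most care is the logical direction of the ``reduction'': Proposition~\ref{pro::P1} is stated for a \emph{fixed} sign pattern, whereas IRR is a property of $A$ and $T$ alone and is therefore uniform over all sign patterns. The honest statement is that, after restricting $w$ to $\mathrm{Im}(A_T)$, the recovery condition for a given $x_0$ is $\|Ms\|_\infty < 1$ for \emph{its own} sign vector $s$, and IRR is exactly the worst case of this over all $s$ with $\|s\|_\infty \le 1$. The remaining work is simply making the induced-norm inequality, and the fact that it is attained on a sign vector, fully explicit.
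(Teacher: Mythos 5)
Your proof is correct and follows essentially the same route as the paper: write $w = A_T v$, use Condition~1 to force $w = A_T(A_T^*A_T)^{-1}\mathrm{sign}(x_0)$, and substitute into Condition~2 to obtain the IRR inequality. Your additional care about the distinction between the fixed sign vector $s$ (giving $\|Ms\|_\infty < 1$) and the uniform-over-sign-patterns statement (giving $\|M\|_\infty < 1$) is a point the paper glosses over, and it makes the logical content of the ``reduction'' more honest than the paper's one-line equivalence claim.
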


\begin{proof}
Since $w$ lies in the image of $A_T$, we can write  $w=A_Tv$. To make sure that the first condition in Proposition \ref{pro::P1} holds, we must have 
$v=(A^*_TA_T)^{-1}\mathrm{sign}(x_0)$, so 
$$w=A_T(A_T^*A_T)^{-1} \mathrm{sign}(x_0).$$
Now the second condition in proposition \ref{pro::P1} can be equivalently written as $$\|A_{T^c}^* A_T(A^*_TA_T)^{-1}\|_\infty <1,$$
which is exactly the IRR condition. 
\end{proof}

Intuitively, the IRR condition requires that, for the true sparsity signal $x_{0}$,  the relevant bases $A_{T}$ is not highly correlated with irrelevant bases $A^{c}_{T}$. Note that this condition only depends on $A$ and $x_0$, which is easier to check. The assumption that $w$  lies in the column span of $A_T$ is mild; it is actually a necessary condition so that $x_0$ can be reconstructed by Lasso~\citep{tibshirani1996} or Dantzig selector~\citep{candes2007}, even under Gaussian-like noise assumptions~\citep{zhao2006,yuan2007}.


\subsection{Detecting Cliques of Equal Size}
In this subsection, we present sufficient conditions of IRR which can be easily verified. We consider the case that $A = R^{j,k}$ with $j<k$. Given data $b$ about
all $j$-sets, we want to infer important $k$-cliques.
Suppose $x_0$ is a sparse signal on all $k$-cliques. We have the following theorem, which is a direct result of Lemma \ref{lemma:worstcase}.

\begin{theorem}\label{thm:worstcase}
Let $T=\mathrm{supp}(x_0)$, if we enforce the overlaps among $k$-cliques in $T$ to be
no larger than $r$, then $r\leq j-2$  guarantees the IRR condition. 
\end{theorem}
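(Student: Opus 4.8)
The plan is to verify the IRR condition directly, which by the preceding discussion amounts to two things: that $A_T^*A_T$ be invertible, and that $\|A_{T^c}^* A_T (A_T^* A_T)^{-1}\|_\infty < 1$. Both hinge on a single elementary computation of the column inner products of $A = R^{j,k}$. Indexing columns by $k$-cliques $\tau$ and rows by $j$-sets $\sigma$, two columns $A_\tau, A_{\tau'}$ share a nonzero entry in row $\sigma$ exactly when $\sigma \subset \tau\cap\tau'$, so
\[
\langle A_\tau, A_{\tau'}\rangle = \frac{1}{{k \choose j}}\,\#\{\sigma : |\sigma|=j,\ \sigma \subset \tau\cap\tau'\} = \frac{{|\tau\cap\tau'| \choose j}}{{k \choose j}},
\]
and in particular $\langle A_\tau, A_\tau\rangle = 1$, consistent with the $\ell_2$-normalization of $R^{j,k}$.

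First I would use the overlap hypothesis to collapse the Gram matrix. For distinct $\tau_a, \tau_b \in T$ the assumption $|\tau_a \cap \tau_b| \le r \le j-2 < j$ forces ${|\tau_a \cap \tau_b| \choose j} = 0$, so every off-diagonal entry of $A_T^*A_T$ vanishes and $A_T^*A_T = I$. This is trivially invertible, and the IRR condition reduces to the clean statement $\|A_{T^c}^* A_T\|_\infty < 1$, i.e. that every row sum of the nonnegative cross-Gram matrix is strictly below one. Next I would bound a fixed row, indexed by $\tau' \in T^c$, by a double-counting argument:
\[
\sum_{\tau_a \in T} \langle A_{\tau'}, A_{\tau_a}\rangle = \frac{1}{{k\choose j}} \sum_{\tau_a \in T} {|\tau'\cap\tau_a| \choose j} = \frac{1}{{k\choose j}} \sum_{\substack{\sigma \subset \tau' \\ |\sigma|=j}} \#\{\tau_a \in T : \sigma \subset \tau_a\}.
\]
The key observation is that the overlap hypothesis makes the inner count at most one: if a $j$-set $\sigma$ lay in two distinct $\tau_a, \tau_b \in T$, then $\sigma \subset \tau_a\cap\tau_b$ would give $|\tau_a\cap\tau_b|\ge j$, contradicting $r \le j-2$. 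Hence each $j$-subset of $\tau'$ is counted at most once, and the row sum is at most ${k\choose j}/{k\choose j} = 1$.

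The main obstacle, and the heart of the argument, is upgrading this to a strict inequality; this is presumably the quantitative content isolated in Lemma \ref{lemma:worstcase}. For this I would exhibit at least one $j$-subset of $\tau'$ lying in no $\tau_a \in T$. If no clique in $T$ meets $\tau'$ in $j$ or more nodes, the row sum is $0$ and we are done; otherwise fix $\tau_{a_0}\in T$ with $E := \tau'\cap\tau_{a_0}$ of size at least $j$. Since $\tau'\notin T$ we have $\tau' \ne \tau_{a_0}$, so $|E| \le k-1$ and there is a node $x \in \tau'\setminus E$; adjoining to $x$ any $j-1$ nodes of $E$ produces a $j$-set $\sigma \subset \tau'$. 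Then $\sigma \not\subset \tau_{a_0}$ because $x \notin \tau_{a_0}$, and $\sigma$ cannot lie in any other $\tau_b \in T$ either: that would place the $j-1$ chosen nodes in $\tau_{a_0}\cap\tau_b$, forcing $|\tau_{a_0}\cap\tau_b|\ge j-1 > r$. Thus $\sigma$ is uncovered, the inner sum over $j$-subsets loses at least one unit term, and the row sum is at most $1 - {k\choose j}^{-1} < 1$. Since $\tau'\in T^c$ was arbitrary, this gives $\|A_{T^c}^* A_T\|_\infty < 1$ and completes the verification of IRR. The delicate point throughout is the bookkeeping that converts the single overlap bound $r \le j-2$ into its two quantitative consequences—orthogonality of the columns $A_{\tau_a}$ (via $|\tau_a\cap\tau_b| < j$) and the existence of the uncovered $j$-set (via $j-1 > r$)—so I would be careful to invoke $r \le j-2$ in exactly these two places and nowhere else.
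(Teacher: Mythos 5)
Your proof is correct and follows essentially the same route as the paper's (via Lemma \ref{lemma:worstcase}, Condition 1): the overlap bound $r\le j-2$ forces $A_T^*A_T=I$, the row sums of $A_{T^c}^*A_T$ are bounded by $1$ because each $j$-subset of $\tau'$ can lie in at most one clique of $T$, and strictness follows by exhibiting an uncovered $j$-set built from $j-1$ nodes of $\tau'\cap\tau_{a_0}$ together with a node of $\tau'\setminus\tau_{a_0}$. Your coordinate-free double-counting phrasing is a cleaner rendering of the paper's disjoint-families argument and explicit construction $\rho_0=\{1,\ldots,j-1,s+1\}$, but it is the same underlying idea.
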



\begin{lemma}\label{lemma:worstcase}
Let $T=\mathrm{supp}(x_0)$ and $j\ge 2$. Suppose for any $\sigma_1, \sigma_2\in T$, the two cliques corresponding to $\sigma_{1}$ and $\sigma_{2}$ have overlaps no larger than $r$, we have
\begin{enumerate}
\item If $r\leq j-2$, then $\|A^*_{T^c}A_T(A^*_TA_T)^{-1}\|_\infty<1$;
\item If $r=j-1$, then $\|A^*_{T^c}A_T(A^*_TA_T)^{-1}\|_\infty\le 1$ where equality holds with certain examples; 
\item If $r=j$, there are examples such that $\|A^*_{T^c}A_T(A^*_TA_T)^{-1}\|_\infty>1$.
\end{enumerate}
\end{lemma}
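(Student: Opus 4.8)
The plan is to reduce everything to the Gram matrix of the columns of $A=R^{j,k}$. First I would record the basic inner product: since each column places the value $1/\sqrt{\binom{k}{j}}$ on the $j$-subsets it contains, for two $k$-cliques $\tau,\tau'$ one has
\[ \langle A_\tau, A_{\tau'}\rangle = \binom{|\tau\cap\tau'|}{j}\Big/\binom{k}{j}. \]
Thus all diagonal Gram entries equal $1$, and the entry between two cliques vanishes exactly when their overlap is below $j$. Under the hypothesis $|\tau_1\cap\tau_2|\le r$ with $r\le j-1$, every off-diagonal entry of $A_T^*A_T$ is zero, so $A_T^*A_T=I$ and $(A_T^*A_T)^{-1}=I$. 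Hence for parts 1 and 2 the target collapses to $\|A_{T^c}^*A_T\|_\infty=\max_{\tau'\in T^c}\sum_{\tau\in T}\binom{|\tau'\cap\tau|}{j}/\binom{k}{j}$. To bound a row I would double count: writing $d_T(\sigma)=|\{\tau\in T:\sigma\subset\tau\}|$ for a $j$-set $\sigma$, one gets $\sum_{\tau\in T}\binom{|\tau'\cap\tau|}{j}=\sum_{\sigma\subset\tau',|\sigma|=j}d_T(\sigma)$. Because $r<j$, no $j$-set can sit in two cliques of $T$ (that would force their overlap $\ge j$), so $d_T(\sigma)\le 1$; summing over the $\binom{k}{j}$ subsets of $\tau'$ bounds the row sum by $1$, giving the ``$\le 1$'' halves of parts 1 and 2.

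For the equality in part 2, I would exhibit a construction. Fix a $k$-set $\tau'$ and, for each $j$-subset $\sigma\subset\tau'$, set $\tau_\sigma=\sigma\cup W_\sigma$ where the $W_\sigma$ are disjoint blocks of $k-j$ fresh vertices outside $\tau'$; let $T=\{\tau_\sigma\}$ and take $n$ large enough. Then $\tau_{\sigma_1}\cap\tau_{\sigma_2}=\sigma_1\cap\sigma_2$ has size at most $j-1$, with $j-1$ attained, so $r=j-1$; each $j$-subset of $\tau'$ lies in exactly one clique, so the $\tau'$ row sum is exactly $1$, and since $\tau'\in T^c$ this attains the norm. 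For the strict inequality in part 1, I would argue by contradiction: if some row $\tau'\in T^c$ attains $1$, then $d_T(\sigma)=1$ for every $j$-subset $\sigma\subset\tau'$. If two such subsets $\sigma_1,\sigma_2$ sharing $j-1$ elements were covered by distinct cliques $\tau_1,\tau_2\in T$, then $\tau_1\cap\tau_2\supseteq\sigma_1\cap\sigma_2$ has size $\ge j-1>r$, contradicting the overlap bound; hence $j$-subsets adjacent in the Johnson graph $J(k,j)$ share the same covering clique. Since $J(k,j)$ is connected (as $j<k$), all $j$-subsets of $\tau'$ are covered by a single clique $\tau_*\in T$, forcing $\tau_*\supseteq\tau'$ and so $\tau_*=\tau'$, contradicting $\tau'\in T^c$. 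Thus every row sum is strictly below $1$.

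Part 3 is where the Gram matrix is no longer the identity, and controlling $(A_T^*A_T)^{-1}$ is the main obstacle; I would neutralize it with a padding construction that keeps the Gram matrix block diagonal. Fix $\tau'$ and, for each $j$-subset $\sigma\subset\tau'$, introduce two cliques $\tau_\sigma^{(1)}=\sigma\cup P_\sigma^{(1)}$ and $\tau_\sigma^{(2)}=\sigma\cup P_\sigma^{(2)}$ with all padding sets private and disjoint from $\tau'$. Two cliques then meet in exactly $\sigma$ (overlap $j=r$) when they share the same $\sigma$, and in at most $j-1$ vertices otherwise, so $A_T^*A_T$ is block diagonal with $2\times 2$ blocks $\left(\begin{smallmatrix}1 & \beta\\ \beta & 1\end{smallmatrix}\right)$, where $\beta=1/\binom{k}{j}$, whose inverse is explicit. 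Every clique meets $\tau'$ in exactly its $\sigma$, so the cross vector $A_T^*A_{\tau'}$ equals $\beta\mathbf{1}$; a short computation then gives the $\tau'$ row of $A_{T^c}^*A_T(A_T^*A_T)^{-1}$ equal to $\frac{\beta}{1+\beta}\mathbf{1}$ over $2\binom{k}{j}$ entries, with $\ell_1$ norm $2\binom{k}{j}\beta/(1+\beta)=2\binom{k}{j}/(\binom{k}{j}+1)>1$ whenever $\binom{k}{j}\ge 2$, i.e. for all $2\le j<k$.

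The routine parts are the Gram computation and the counting identity; the conceptual content sits in two places. The upgrade from ``$\le 1$'' to strict inequality in part 1 hinges on the connectivity of $J(k,j)$, and the counterexample in part 3 hinges on arranging the overlaps so that $A_T^*A_T$ factors into $2\times 2$ blocks, since that is what lets us invert it in closed form and see that the raw row mass $2$ survives, up to the harmless factor $1/(1+\beta)$, even after multiplying by $(A_T^*A_T)^{-1}$. Theorem~\ref{thm:worstcase} is then immediate from part 1.
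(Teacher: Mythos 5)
Your proposal is correct, and its skeleton matches the paper's: compute the Gram entries $\langle A_\tau,A_{\tau'}\rangle=\binom{|\tau\cap\tau'|}{j}/\binom{k}{j}$, observe that $A_T^*A_T=I$ whenever $r\le j-1$ so that everything reduces to row sums of $A_{T^c}^*A_T$, bound those row sums by a covering/counting argument (your double count via $d_T(\sigma)\le 1$ is the same argument as the paper's disjointness of the families $\mathcal{M}_i$ of $j$-sets), and use the identical "fresh padding" construction for the equality case in part 2. You deviate from the paper in two sub-steps, both in valid and arguably nicer ways. First, for strictness in part 1 the paper exhibits, after a WLOG relabeling, an explicit $j$-set $\rho_0\subset\tau'$ that no $\mathcal{M}_i$ can contain; you instead argue by contradiction that a row sum of $1$ would force every $j$-subset of $\tau'$ to be covered exactly once, and then use connectivity of the Johnson graph $J(k,j)$ to force a single clique $\tau_*\supseteq\tau'$, contradicting $\tau'\in T^c$. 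This avoids coordinates entirely, at the cost of invoking (standard) Johnson-graph connectivity. Second, for part 3 the paper adds a single extra clique $\sigma_0=\rho_1\cup\omega_0$ to the part-2 configuration, so the Gram matrix has exactly one nontrivial $2\times 2$ block and the violating row sum is $\frac{1+2\epsilon-\epsilon^2}{1+\epsilon}$, barely above $1$; you instead duplicate every clique, getting $2\binom{k}{j}$ columns, a fully block-diagonal Gram matrix, and a row sum $\frac{2\binom{k}{j}}{\binom{k}{j}+1}$ close to $2$. The paper's example is more economical in $|T|$; yours shows the IRR quantity can be violated by a factor approaching $2$, not just infinitesimally, and keeps the inverse-Gram computation uniform across blocks.
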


One thing to note is that Theorem \ref{thm:worstcase} is only an easy-to-verify condition based on the worst-case analysis, which is sufficient but not necessary. In fact, what really matters is the IRR condition.
It uses a simple characterization of allowed clique overlaps which guarantees the IRR Condition. Specifically, 
clique overlaps no larger than $j-2$ is sufficient to guarantee the exact sparse recovery by $\mathcal{P}_1$, while larger overlaps may violate the IRR Condition. 
Since this theorem is based on a worst-case analysis, in real applications, one may encounter examples which have overlaps larger than $j-2$ while $\mathcal{P}_1$ still works.

In summary, IRR is sufficient and almost necessary to guarantee exact recovery. Theorem \ref{thm:worstcase} tells us the intuition behind the IRR is that {\em overlaps among
cliques must be small enough}, which is  easier to check. In the next subsection, we show that IRR is also sufficient to guarantee stable recovery with noises.

\begin{proof}
To prove Lemma \ref{lemma:worstcase}, given any $\tau\in T^c$, we define 
$$\mu_\tau\equiv\sum_{\sigma\in T}\frac{{|\tau\cap \sigma|\choose j}}{{k\choose j}}. $$
the intuition of such a definition is that 
\begin{eqnarray}
\sup_{\tau\in T^c} \mu_\tau=\|A^*_{T^c} A_T\|_\infty.
\end{eqnarray} 
As we will see in the following proofs, we essentially try to bound $\mu_\tau$ for $\tau\in T^c$.

Before we present the detailed technical proof, we first introduce the high-level idea: our main purpose is to bound $\|A^*_{T^c}A_T(A^*_TA_T)^{-1}\|_\infty$. Since each entry of the matrix $ A^*_TA_T$ is indexed by two $k$-sets, the value of this entry represents how many $j$-sets are contained in the intersection of these two $k$-sets. Under the condition that  $r\leq j-1$, it's straightforward that the matrix $ A^*_TA_T$ is an identity. Therefore, bounding  $\|A^*_{T^c}A_T(A^*_TA_T)^{-1}\|_\infty$ is equivalent as bounding $\|A^*_{T^c}A_T\|_\infty$, which is exactly $\sup_{\tau\in T^c} \mu_\tau$.

{\em Proof of the case under Condition 1}

Under Condition 1, since any $\sigma_1, \sigma_2\in T$ satisfy $|\sigma_1\cap \sigma_2|
\le j-2$, hence any two columns in $T$ are orthogonal. 
This implies $A^*_TA_T$ is an identity matrix.  

Now given $\tau\in T^c$, we will prove $\mu_\tau<1$ under condition 1. If this
is true, then  $$\sup_{\tau\in T^c} \mu_\tau=\|A^*_{T^c} A_T\|_\infty=
\|A^*_{T^c}A_T (A^*_TA_T)^{-1}\|_\infty<1 $$

Let $T=\{\sigma_1, \sigma_2, \cdots, \sigma_{|T|}\}$ 
where $\sigma_i$($1\le i\le |T|$) are $k$-sets. 
We need to prove $$\mu_\tau=\sum_{i=1}^{|T|}\frac{{|\tau\cap \sigma_i|\choose j}}{{k\choose j}}<1$$ for all $\tau \in T^{c}$.

Let $\mathcal{M}_i=\{\rho: |\rho|=j, \rho\subset \tau\cap\sigma_i \}$, so $\mathcal{M}_i$ is a collection of $j$-sets
of $\tau\cap \sigma_i$ (Here if $|\tau\cap\sigma_i|<j$, then $\mathcal{M}_i$ is simply an empty set).
Obviously, we have $|\mathcal{M}_i|={|\tau\cap \sigma_i|\choose j}$. 
So $$\sum_{i=1}^{|T|} {|\tau\cap\sigma_i|\choose j}=\sum_{i=1}^{|T|} |\mathcal{M}_i|.$$

Now we note the fact that for any 
$1\le i, l\le |T|$, we have $\mathcal{M}_i\cap \mathcal{M}_l=\emptyset$. This is true because otherwise
suppose $\rho\in \mathcal{M}_1\cap \mathcal{M}_2$, then this mean $\rho$ is a $j$-set of $\mathcal{M}_1$ and $\mathcal{M}_2$. Hence
$\rho\subset \tau\cap\sigma_1, \rho\subset \tau\cap\sigma_2$, which implies that 
$$|\sigma_1\cap\sigma_2|\ge |(\tau\cap\sigma_1)\cap (\tau\cap\sigma_2)|\ge |\rho|\ge j.$$
This contradicts with the condition that $\sigma_i$'s($1\le i\le T$) have overlaps at most $j-2$. 
So $\mathcal{M}_i$ must be pairwise disjoint. Hence
$$\sum_{i=1}^{|T|} {|\tau\cap\sigma_i|\choose j}=\sum_{i=1}^{|T|} |\mathcal{M}_i|=|\cup_{i=1}^{|T|} \mathcal{M}_i|$$

For any $1\le i\le |T|$, every $\rho\in \mathcal{M}_i$ is a $j$-set of $\tau\cap\sigma_i$. Hence
$\rho$ is of course a $j$-set of $\tau$. The set $\tau $ is of size $k$. So if we let 
$\mathcal{M}_0=\{\rho: |\rho|=j, \rho\subset \tau\}$ which is the collection of all $j$-sets of $\tau$, 
then we have $\cup_{i=1}^{|T|} \mathcal{M}_i\subset \mathcal{M}_0$. So 
$|\cup_{i=1}^{|T|} \mathcal{M}_i|\le  |\mathcal{M}_0|\le {k\choose j}$. 

Till now, we actually proved $\mu_\tau\le 1$. All the above proof about $\mu_\tau\le 1$ for any 
$\tau\in T^c$ will remain valid 
for condition 2. In the next, 
we prove if any $\sigma_i, \sigma_l\in T$ satisfy $|\sigma_i\cap\sigma_l|\le j-2$, 
then equality can not hold. 

Without loss of generality, we assume $|\sigma_1\cap\tau|\ge j$, otherwise if none of $\sigma_i$'s 
satisfies $|\sigma_i\cap\tau|\ge j$, then $\mu_\tau=0$ which actually finishes the proof. To show the the equality will not hold, we only need to find one $j$-set that is does not belong to $\cup_{i}{\mathcal{M}_{0}}$.

In this case, we can let $\tau=\{1,2,\cdots,k\}$, $\sigma_1=\{1,2,\cdots,s, k+1, k+2, 2k-s\}$ 
where $j\le s\le k-1$($s\le k-1$ because otherwise $\sigma_1=\tau$ which contradicts with the fact
that $\sigma_1\in T, \tau\in T^c$). 
Now we show that $\rho_0=\{1,2, \cdots, j-1, s+1\}$ is not a member of $\cup_{i=1}
^{|T|} \mathcal{M}_i$. 
Clearly $\rho_0$ is not a member of $\mathcal{M}_1$ because $s+1\not\in \sigma_1$. 
Now it remains to show that $\rho_0$ is not
a member of any $\mathcal{M}_i$($2\le i\le |T|$). If this was not true, say $\rho_0\in \mathcal{M}_2$, then 
$\rho_0\subset (\tau\cap\sigma_2)\subset \sigma_2$, then $\{1, 2, \cdots, j-1\}\subset \sigma_1\cap
\sigma_2$, which contradicts with the condition that $|\sigma_1\cap\sigma_2|\le j-2$. 

While it is clear that $\rho_0\ in \mathcal{M}_0$, so this means $\cup_{i=1}^{|T|}\mathcal{M}_i$ is a proper subset of 
$\mathcal{M}_0$. So $|\cup_{i=1}^{|T|} \mathcal{M}_i|< {k\choose j}$ which means $\mu_\tau<1$.

{\em Proof of the case under Condition 2}

Under condition 2, then almost the same as proof for lemma 1. We have 
$A^*_TA_T$ is an identity matrix and $\mu_\tau\le 1$. However, one can not show $\mu_\tau<1$ in
this case. We have the following example where if $n$ is large enough, then 
$\mu_\tau$ can happens to be equal to one exactly. 

Let $\tau=\{1,2,\cdots,k\}\in T^c$. Denote all the $j$-sets of $\tau$
to be $\rho_1, \rho_2, \cdots, \rho_{{k\choose j}}$. when $n$ is large enough, we choose $k \choose j$ disjoint $(k-j)$-sets of $\{k+1, k+2, \cdots, n\}$, denoted by $\omega_1, \omega_2, \cdots, \omega_
{{k\choose j}}$. 

Let $T=\{\sigma_1, \sigma_2, \cdots, \sigma_{|T|}\}$, where $\sigma_i=\rho_i\cup\omega_i$. Hence $|T|={k\choose j}$ and  
$\sigma_i$'s satisfy $|\sigma_i\cap\sigma_j|\le j-1$. But 
$$\sum_{i=1}^{|T|}\frac{{|\tau\cap\sigma_i|\choose j}}{{k\choose j}}=\sum_{i=1}^{|T|}\frac{1}{{k\choose j}}=1.$$

{\em Proof of the case under Condition 3}

Under condition 3, we can construct examples where $$\|A^*_{T^c}A_T(A^*_TA_T)^{-1}\|_\infty>1.$$ 
Let $\rho_1, \rho_2, \cdots, \rho_{{k\choose j}}$ be all $j$-sets of $\{1,2,\cdots, k\}$. For large enough $n$, it is possible to choose ${k\choose j }+1$ disjoint $(k-j)$-sets of $\{k+1, k+2, \cdots, n\}$, say
$\omega_0, \omega_1, \omega_2, \cdots, \omega_
{{k\choose j}}$. Let $\sigma_i=\rho_i\cup \omega_i$ for $1\leq i \leq {k\choose j}$ and $\sigma_0=\rho_1\cup \omega_{0}$. Define $T= \{\sigma_0, \sigma_1, \sigma_2, \cdots, \sigma_{{k\choose j}}\}$ which is of size $|T|={k\choose j}+1$. 

In this case, $|\sigma_i\cap \sigma_l|=j-1$ for any $1\le i, l\le {k\choose j}$ and $|\sigma_0\cap \sigma_1|=j$, 
$|\sigma_0\cap \sigma_i|\le j-1$ for any $2\le i\le {k\choose j}$. Then 
$A^*_TA_T$ is a ${k\choose j}+1$ by ${k\choose j}+1$ matrix shown below with rows and columns corresponds to 
$\{\sigma_0, \sigma_1, \cdots, \sigma_{{k\choose j}}\}$
\[ A^*_TA_T=\left[ \begin{array}{ccccccc}
1 & \epsilon & \vline & 0 & 0 & \cdots & 0\\
\epsilon & 1 & \vline & 0 & 0 & \cdots & 0\\\hline
0 & 0 & \vline & 1 & 0 & \cdots & 0\\
0 & 0 & \vline & 0 & 1 & \cdots & 0\\
0 & 0 & \vline & \vdots & \vdots & \ddots & 0\\
0 & 0 & \vline & 0 & 0 & \cdots & 1\\
\end{array} \right]\] 
Here $\epsilon=\frac{1}{{k\choose j}}$. The inverse of the matrix is
\[ (A^*_TA_T)^{-1}= \left[ \begin{array}{ccccccc}
\frac{1}{1-\epsilon^2} & -\frac{\epsilon}{1-\epsilon^2} & \vline & 0 & 0 & \cdots & 0\\
-\frac{\epsilon}{1-\epsilon^2} & \frac{1}{1-\epsilon^2} & \vline & 0 & 0 & \cdots & 0\\\hline
0 & 0 & \vline & 1 & 0 & \cdots & 0\\
0 & 0 & \vline & 0 & 1 & \cdots & 0\\
0 & 0 & \vline & \vdots & \vdots & \ddots & 0\\
0 & 0 & \vline & 0 & 0 & \cdots & 1\\
\end{array} \right]\] 

Consider $\tau=\{1,2,\cdots, k\}\in T^c$, then the row corresponds to $\tau$ for $A^*_{T^c}A_T$
is a vector of length $|T|={k\choose j}+1$ with each entry being $\epsilon=\frac{1}{{k\choose j}}$.
So the row vector corresponds to $\tau$ in $A^*_{T^c}A_T(A^*_TA_T)^{-1}$ is a vector of
length ${k\choose j}+1$, 
$[\frac{\epsilon}{1+\epsilon}, \frac{\epsilon}{1+\epsilon}, \epsilon, \epsilon, \cdots, \epsilon]$. 
This vector has row sum 
$$\frac{2\epsilon}{1+\epsilon}+({k\choose j}-1)\epsilon=\frac{2\epsilon}{1+\epsilon}+(
\frac{1}{\epsilon}-1)\epsilon=\frac{1+2\epsilon-\epsilon^2}{1+\epsilon}>\frac{1+2\epsilon-\epsilon}{1+\epsilon}=1$$
Hence in this example $\|A^*_{T^c}A_T (A^*_TA_T)^{-1}\|_\infty>1$. 
\end{proof}

In the following, we construct explicit conditions which allow large overlaps while the IRR still holds, as long as such heavy overlaps do not occur too often among the cliques in $T$. 
The existence of a partition of $T$ in the next theorem is a reasonable assumption in the network settings where network hierarchies exist. 
In social networks, it has been observed by~\cite{girvan2002} that communities themselves also join together to form meta-communities. The assumptions that we made in the next theorem 
where we allow relatively larger overlaps between communities from the same meta-community, while we allow relatively smaller overlaps between communities from different meta-communities 
characterize such a scenario.

\begin{theorem}
Assume $(k+1)/2\le j<k$.  let $T=\mathrm{supp}(x_0)$. Suppose there exist a partition $T=T_1\cup T_2\cup\cdots \cup T_m$ with each $T_i$ satisfies $|T_i|\leq K$, such that 
\begin{itemize}
\item for any $\sigma_i,\sigma_j$ belong to the same partition, $|\sigma_i\cap\sigma_j|\leq r$;
\item for any $\sigma_i,\sigma_j$ belong to different partitions, $|\sigma_i\cap\sigma_j|\leq 2j-k-1$. 
\end{itemize}
If $K$ satisfies $$(K-1){r\choose j}/{k\choose j}< 1/4, \ \ \ \ \ 
\Bigg({k-1\choose j}+(K-1){(k+r)/2\choose j}\Bigg)/{k\choose j}\leq 3/4, $$
then IRR holds. 
\end{theorem}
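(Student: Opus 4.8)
The plan is to reduce the global bound $\|A^*_{T^c}A_T(A^*_TA_T)^{-1}\|_\infty<1$ to an independent single-block estimate, exploiting the hypothesis $j\ge (k+1)/2$ twice. Recall that for two $k$-sets $\sigma,\tau$ one has $\langle A_\sigma,A_\tau\rangle=\binom{|\sigma\cap\tau|}{j}/\binom{k}{j}$, so every Gram entry is controlled by a pairwise overlap. First I would observe that columns indexed by \emph{different} partitions are orthogonal: if $\sigma\in T_a$, $\sigma'\in T_b$ with $a\neq b$, then $|\sigma\cap\sigma'|\le 2j-k-1<j$ (the strict inequality is just $j<k+1$), hence $\binom{|\sigma\cap\sigma'|}{j}=0$ and $\langle A_\sigma,A_{\sigma'}\rangle=0$. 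Consequently $A^*_TA_T$ is block diagonal with one block $G_a:=A^*_{T_a}A_{T_a}$ per partition, and so is its inverse.

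Next I would show that each row of the target matrix is confined to a single block. Fix $\tau\in T^c$; its row of $A^*_{T^c}A_T$ is the vector $g_\tau=(\langle A_\tau,A_\sigma\rangle)_{\sigma\in T}$. If $g_\tau$ had a nonzero entry against some $\sigma\in T_a$ \emph{and} some $\sigma'\in T_b$ with $a\neq b$, then $|\tau\cap\sigma|\ge j$ and $|\tau\cap\sigma'|\ge j$; since both intersections live inside the $k$-set $\tau$, inclusion–exclusion gives $|\sigma\cap\sigma'|\ge |(\tau\cap\sigma)\cap(\tau\cap\sigma')|\ge 2j-k>2j-k-1$, contradicting the cross-partition hypothesis. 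Hence $g_\tau$ is supported on a single block $T_a$, and the $\tau$-row sum of $A^*_{T^c}A_T(A^*_TA_T)^{-1}$ equals $\|(g_\tau^{(a)})^\top G_a^{-1}\|_1\le \|G_a^{-1}\|_\infty\,\|g_\tau^{(a)}\|_1$, where $g_\tau^{(a)}$ is the restriction of $g_\tau$ to $T_a$ and I used that $G_a^{-1}$ is symmetric so its induced $1$-norm equals its $\infty$-norm.

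It then remains to bound the two factors on a single block using the two numerical hypotheses on $K$. For the Gram inverse, write $G_a=I+E_a$ with $E_a$ the off-diagonal part; since within-partition overlaps are $\le r$, every entry of $E_a$ is at most $\binom{r}{j}/\binom{k}{j}$ and each row has at most $K-1$ of them, so $\|E_a\|_\infty\le (K-1)\binom{r}{j}/\binom{k}{j}<1/4$ by the first condition, and the Neumann series yields $\|G_a^{-1}\|_\infty\le (1-\|E_a\|_\infty)^{-1}<4/3$. For the row $g_\tau^{(a)}$, I would argue that at most one $\sigma\in T_a$ can satisfy $|\tau\cap\sigma|>(k+r)/2$: two such sets would force $|\sigma\cap\sigma'|\ge |\tau\cap\sigma|+|\tau\cap\sigma'|-k>r$, again a contradiction. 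That single set contributes at most $\binom{k-1}{j}$ (because $\tau\neq\sigma$ gives $|\tau\cap\sigma|\le k-1$), while each of the remaining $\le K-1$ sets contributes at most $\binom{(k+r)/2}{j}$ by monotonicity of the (possibly fractional) binomial $\binom{\cdot}{j}$; thus $\|g_\tau^{(a)}\|_1\le\big(\binom{k-1}{j}+(K-1)\binom{(k+r)/2}{j}\big)/\binom{k}{j}\le 3/4$ by the second condition. Combining, every $\tau$-row sum is strictly below $(4/3)(3/4)=1$, giving $\|A^*_{T^c}A_T(A^*_TA_T)^{-1}\|_\infty<1$, i.e. IRR holds.

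I expect the main obstacle to be isolating the two decoupling facts above — block-orthogonality of $A_T$ across partitions, and that each $\tau\in T^c$ interacts with at most one block — since both rest on the inclusion–exclusion estimates enabled by $j\ge(k+1)/2$, and it is precisely this decoupling that lets the two hypotheses factor cleanly as the product of the $<4/3$ and $\le 3/4$ bounds. Once the reduction to one block is in place, the Neumann-series estimate and the $\ell_1$ bound on $g_\tau^{(a)}$ are routine; the only minor care needed is the interpretation of the non-integer $\binom{(k+r)/2}{j}$ via monotonicity.
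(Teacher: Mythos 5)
Your proof is correct and follows essentially the same route as the paper's: cross-partition orthogonality making $A_T^*A_T$ block diagonal, the two inclusion--exclusion counting arguments (each $\tau\in T^c$ can have overlap $\ge j$ with cliques from only one partition, and at most one $\sigma$ in that partition can have overlap with $\tau$ exceeding $(k+r)/2$), a Neumann-series bound on the Gram inverse, and the concluding $(3/4)\cdot(4/3)$ product. The only cosmetic difference is that you localize each row of $A_{T^c}^*A_T$ to a single block before multiplying by the inverse, whereas the paper invokes global submultiplicativity of the $\|\cdot\|_\infty$ norm directly; the underlying estimates are identical.
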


\begin{proof} We will show the following two inequalities hold. 
$$\|A_T^*A_T-I\|_\infty\leq (K-1){r\choose j}/{k\choose j}, \ \ \ \ \ 
\|A_{T^c}^*A_T\|_\infty \leq \Bigg({k-1\choose j}+(K-1){(k+r)/2\choose j}\Bigg)/{k\choose j}. $$

We first bound the sup-norm of $A_T^*A_T-I$. Note that when $\sigma_i$ and $\sigma_j$ belong to different partitions of $T$, then $|\sigma_i\cap \sigma_j|=0$ because their overlap is no larger than $2j-k-1$ which is strictly smaller than $j$. So $A_T^*A_T$ is a block diagonal matrix with block sizes $|T_1|$, $|T_2|$, $\cdots$, $|T_m|$, and each diagonal entry of $A_T^*A_T$ is one.

Thus, for any $\sigma\in T$, only cliques from the same partition as $\sigma$
may have overlaps with $\sigma$ greater than $j$. Thus, the row sum of 
$A_T^*A_T-I$ can be bounded by $(K-1){r\choose j}/{k\choose j}$. So the first inequality is now established. 

To prove the second inequality, we observe that for a fixed $\tau\in T^c$, $|\tau\cap \sigma_i|\ge j$ and $|\tau \cap \sigma_j|\ge j$ can not hold
at the same time for any $\sigma_i$ and $\sigma_j$ belong to different partitions. 
This is because otherwise, we will have 
\begin{eqnarray*}
|\tau|&\ge & |\tau\cap (\sigma_i\cup\sigma_j)|= |\tau\cap \sigma_i|+|\tau\cap \sigma_j|-|\tau\cap\sigma_i\cap\sigma_j|\\
&\ge& j+j-(2j-k-1)=k+1
\end{eqnarray*}
Thus, all $\sigma$'s which have intersections with a fixed $\tau $ no less than $j$ must lie in the same partition of $T$. 

For the same reason, we can show that for a fixed $\tau\in T^c$, 
$|\tau\cap\sigma_i|\ge (k+r+1)/2$ and $|\tau\cap\sigma_j|\ge (k+r+1)/2$ can not hold at the same time for $\sigma_i$ and $\sigma_j$ belong to the same
partition of $T$. This is because otherwise, we will have 
\begin{eqnarray*}
|\tau|&\ge & |\tau\cap (\sigma_i\cup\sigma_j)|= |\tau\cap \sigma_i|+|\tau\cap \sigma_j|-|\tau\cap\sigma_i\cap\sigma_j|\\
&\ge& (k+r+1)/2+(k+r+1)/2-r=k+1
\end{eqnarray*}

Thus we know the maximum row sum of $A_{T^c}^*A_T$ is bounded from above by
$$\Bigg({k-1\choose j}+(K-1){(k+r)/2\choose j}\Bigg)/{k\choose j}.$$

Now if $K$ further satisfies 
$$(K-1){r\choose j}/{k\choose j}< 1/4, \ \ \ \ \ 
\Bigg({k-1\choose j}+(K-1){(k+r)/2\choose j}\Bigg)/{k\choose j}\leq 3/4. $$

then, we have 
$$\|A_T^*A_T-I\|_\infty< 1/4, \ \ \ \ \ 
\|A_{T^c}^*A_T\|_\infty \leq  3/4.$$
Thus, 
\begin{eqnarray*}
&&\|A_{T^c}^*A_T(A_T^*A_T)^{-1}\|_\infty\\
&\leq & \|A_{T^c}^*A_T\|_\infty \|(A_T^*A_T)^{-1}\|_\infty\\
&\leq & \|A_{T^c}^*A_T\|_\infty (1+\sum_{i=1}^\infty \|(A_T^*A_T-I)\|_\infty^i)\\
&<& 3/4(1+\sum_{i=1}^\infty (1/4)^i)=1
\end{eqnarray*}
So IRR holds under our conditions. 
\end{proof}

The basis matrix $A=R^{j,k}$ have ${n\choose k}$ bases, which is not polynomial with respect to $k$. As we will see from later sections, a 
practical implementation of the Radon basis pursuit for the clique detection problem works on a subset of bases among all ${n\choose k}$ bases. In that case, we are actually solving 
$\mathcal{P}_1$ and $\mathcal{P}_{1,\delta}$ with the basis matrix $\bar{A}$, which is only a submatrix of $A$ with a subset of column bases extracted. We have the following theorem regarding 
this scenario.

\begin{theorem} Denote the set of all cliques for columns in $\bar{A}$ by $S$, where $\bar{A}$ is a submatrix of $A$. Assume any two $k$-cliques in $\bar{A}$ have intersections at most $r$, i.e. $\forall \sigma_i,\sigma_j \in T\cup T^c$, $|\sigma_i \cap \sigma_j |\leq r$, where $T=\mathrm{supp}(x_0)\subset S$, and $T^c$ is the complement of $T$ with respect to $S$. Then IRR holds if
\begin{equation}\label{eq:condition_r}
r \leq \left( \frac{1}{|T| (1+\sqrt{|T|})}\right)^{1/j} k 
\end{equation}
\end{theorem}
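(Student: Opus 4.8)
The plan is to verify the IRR condition $\|A^*_{T^c}A_T(A^*_TA_T)^{-1}\|_\infty < 1$ directly, in the same spirit as the two preceding theorems, but controlling the inverse $(A^*_TA_T)^{-1}$ through the smallest eigenvalue of $A^*_TA_T$ rather than through an $\infty$-norm Neumann series; this is what produces the factor $1+\sqrt{|T|}$ in the stated threshold. First I would record the Gram structure of the Radon columns: for two $k$-cliques $\sigma,\sigma'$ the column $A_\sigma$ carries the value $1/\sqrt{\binom{k}{j}}$ on exactly the $\binom{k}{j}$ many $j$-subsets of $\sigma$, so that
$$\langle A_\sigma, A_{\sigma'}\rangle = \frac{\binom{|\sigma\cap\sigma'|}{j}}{\binom{k}{j}}.$$
By hypothesis every pair of cliques in $S=T\cup T^c$ overlaps in at most $r$ nodes, so each such off-diagonal inner product is bounded by $\alpha := \binom{r}{j}/\binom{k}{j}$, while the diagonal entries equal $1$.

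From this I would read off the two elementary bounds
$$\|A^*_TA_T-I\|_\infty \leq (|T|-1)\alpha, \qquad \|A^*_{T^c}A_T\|_\infty \leq |T|\alpha,$$
since a row of $A^*_TA_T$ has at most $|T|-1$ off-diagonal entries and a row of $A^*_{T^c}A_T$ has exactly $|T|$ entries, each of magnitude at most $\alpha$. The key move is then to handle the inverse spectrally: since $A^*_TA_T$ is symmetric, $\|A^*_TA_T-I\|_2\le \|A^*_TA_T-I\|_\infty\le(|T|-1)\alpha$, whence $\lambda_{\min}(A^*_TA_T)\ge 1-(|T|-1)\alpha>0$ (so the matrix is invertible), and converting back to the $\infty$-norm on a $|T|\times|T|$ matrix costs a factor $\sqrt{|T|}$:
$$\|(A^*_TA_T)^{-1}\|_\infty \leq \sqrt{|T|}\,\|(A^*_TA_T)^{-1}\|_2 = \frac{\sqrt{|T|}}{\lambda_{\min}(A^*_TA_T)} \leq \frac{\sqrt{|T|}}{1-(|T|-1)\alpha}.$$
Submultiplicativity of the $\infty$-norm then gives
$$\|A^*_{T^c}A_T(A^*_TA_T)^{-1}\|_\infty \leq \frac{|T|^{3/2}\,\alpha}{1-(|T|-1)\alpha},$$
which is strictly below $1$ precisely when $\alpha\,(|T|^{3/2}+|T|-1)<1$.

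It remains to translate the hypothesis on $r$ into this inequality, via the combinatorial estimate
$$\alpha = \frac{\binom{r}{j}}{\binom{k}{j}} = \prod_{i=0}^{j-1}\frac{r-i}{k-i} \leq \Big(\frac{r}{k}\Big)^{j},$$
where each factor satisfies $\tfrac{r-i}{k-i}\le\tfrac{r}{k}$ because $r\le k$. The assumption $r\leq \big(1/(|T|(1+\sqrt{|T|}))\big)^{1/j}k$ yields $\alpha\le (r/k)^j\le 1/(|T|+|T|^{3/2})$, and then $\alpha(|T|^{3/2}+|T|-1)\le (|T|^{3/2}+|T|-1)/(|T|^{3/2}+|T|)<1$, so IRR holds. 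I expect the only real subtlety to be the choice to pass through the spectral norm: a direct $\infty$-norm Neumann expansion of $(A^*_TA_T)^{-1}$ would yield a differently-shaped threshold, whereas the eigenvalue perturbation bound on $\lambda_{\min}$ combined with the $\sqrt{|T|}$ norm-equivalence factor is exactly what makes the constant $1+\sqrt{|T|}$ close up tightly; the binomial-ratio inequality is the other place requiring a small but essential argument.
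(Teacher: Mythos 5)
Your proposal is correct and follows essentially the same route as the paper's own proof: submultiplicativity of the $\infty$-norm, the $\sqrt{|T|}$ norm-equivalence to pass to the spectral norm, a diagonal-dominance/Gershgorin-type lower bound $\lambda_{\min}(A^*_TA_T)\ge 1-(|T|-1)\binom{r}{j}/\binom{k}{j}$, and the binomial-ratio estimate $\binom{r}{j}/\binom{k}{j}\le (r/k)^j$ to translate the hypothesis on $r$. The only (harmless) differences are that you derive the eigenvalue bound via $\|M\|_2\le\|M\|_\infty$ for symmetric $M$ rather than citing Gershgorin explicitly, and you retain the sharper factor $|T|-1$ in the denominator, which lets you close the non-strict boundary case $r=\bigl(1/(|T|(1+\sqrt{|T|}))\bigr)^{1/j}k$ cleanly — a detail the paper glosses over.
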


\begin{proof} Note that 
\begin{eqnarray*}
\|\bar{A}_{T^c}^* \bar{A}_T (\bar{A}_T^*\bar{A}_T)^{-1} \|_\infty & \leq & \|\bar{A}_{T^c}^* \bar{A}_T \|_\infty \|(\bar{A}_T^* \bar{A}_T)^{-1}\|_\infty  \\
& \leq & \|\bar{A}_{T^c}^* \bar{A}_T \|_\infty \cdot \sqrt{|T|} \|(\bar{A}_T^* \bar{A}_T)^{-1}\|_2 
\end{eqnarray*}
So it suffices to show 
\[  \|\bar{A}_{T^c}^* \bar{A}_T \|_\infty \cdot \sqrt{|T|} \|(\bar{A}_T^* \bar{A}_T)^{-1}\|_2  < 1\]
under condition (\ref{eq:condition_r}).

Firstly, 
\begin{eqnarray*}
\|\bar{A}^*_{T^c} \bar{A}_T\|_\infty & = & \max_{\tau \in T^c} \sum_{\sigma\in T}\frac{{|\tau\cap \sigma|\choose j}}{{k\choose j}}  \leq  \frac{|T|{r\choose j}}{{ k\choose j} }, \ \ \ \mbox{since $|\tau\cap \sigma|\leq r$}.
\end{eqnarray*}
At least we need 
\begin{equation} \label{eq:c1}
|T|{r\choose j} / { k\choose j}  <1.
\end{equation} 

Secondly, let $K = \bar{A}_T^* \bar{A}_T$, then
$$ K_{ii} = 1 $$
and since $\forall \sigma_i,\sigma_j \in T$, $|\sigma_i \cap \sigma_j|\leq r$, we have 
$$ K_{ij} \leq \frac{{r \choose j}} {{k \choose j}}. $$
Under condition (\ref{eq:c1}), $K$ is diagonal dominant, i.e.
$$ K_{ii} > \sum_{j\neq i} |K_{ij}|. $$
Then by Girshgorin Circle Theorem, 
$$  \lambda_{\min} \geq 1 - \sum_{j\neq i}|K_{ij}| \geq 1 - (|T|-1) {r \choose j}/{k \choose j}\geq 1 - |T|{r \choose j}/{k \choose j}. $$

Therefore it suffices to have 
\[ \frac{|T|{r\choose j}} {{ k\choose j}}  \frac{\sqrt{|T|}}{ 1 - |T|{r \choose j}/{k \choose j}} <1 \]
which gives 
\[ {r \choose j} < \frac{1}{ |T|(1+\sqrt{|T|} )} {k \choose j}. \]
To satisfy this, it suffices to assume 
\[ r < \left(\frac{1}{ |T|(1+\sqrt{|T|} )} \right)^{1/j} k. \]

\end{proof}

\subsection{Stable Recovery Theorems}

In applications, one always encounters examples with noise such that exact sparse recovery is impossible.
In this setting, $\mathcal{P}_{1, \delta}$ will be a good replacement of $\mathcal{P}_1$ as a robust
reconstruction program. Here we present stable recovery theorem of $\mathcal{P}_{1,\delta}$ with bounded noise.

\begin{theorem}
Under the general framework \eqref{eq::general}, we assume that $\|z\|_\infty \leq \epsilon$, $|T|= s$, and the IRR
$$\|A^*_{T^c} A_T (A^*_T A_T)^{-1} \|_\infty \leq \alpha\le 1/s.$$
Then the following error bound holds for any solution $\hat{x}_\delta$ of
$\mathcal{P}_{1,\delta}$,
\begin{equation}\label{eqn:stablerecovery}
\|\hat{x}_\delta - x_0 \|_1 \leq  \frac{2s (\epsilon + \delta)} {1 - \alpha s} \|A_T(A^*_TA_T)^{-1}\|_1.
\end{equation}
\end{theorem}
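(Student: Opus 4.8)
The plan is to follow the standard primal-analysis route for $\ell_1$ recovery with bounded noise: set $h := \hat{x}_\delta - x_0$, split it over $T = \mathrm{supp}(x_0)$ and its complement $T^c$, and control each block. First I would record the two elementary ingredients. \emph{Feasibility of the truth}: since $Ax_0 - b = -z$ and $\|z\|_\infty \le \epsilon \le \delta$, the vector $x_0$ is feasible for $\mathcal{P}_{1,\delta}$, so optimality of $\hat{x}_\delta$ forces $\|\hat{x}_\delta\|_1 \le \|x_0\|_1$ (this is where the implicit requirement $\delta \ge \epsilon$ enters). \emph{Residual bound}: because $\hat{x}_\delta$ obeys the constraint up to $\delta$ and $x_0$ up to $\epsilon$, the triangle inequality gives $\|Ah\|_\infty \le \|A\hat{x}_\delta - b\|_\infty + \|b - Ax_0\|_\infty \le \delta + \epsilon$. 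Then, exactly as in the sufficiency part of Proposition \ref{pro::P1}, expanding $\|\hat{x}_\delta\|_1 = \|x_{0,T}+h_T\|_1 + \|h_{T^c}\|_1 \le \|x_0\|_1$ and using $\mathrm{supp}(x_0)=T$ yields the cone condition $\|h_{T^c}\|_1 \le \|h_T\|_1$.

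The heart of the argument is the algebraic identity obtained by left-multiplying $Ah = A_T h_T + A_{T^c} h_{T^c}$ by $(A^*_T A_T)^{-1}A^*_T$ (well defined because IRR presupposes $A^*_T A_T$ invertible). Writing $W := A_T(A^*_T A_T)^{-1}$ and $C := A^*_{T^c}A_T(A^*_T A_T)^{-1}$, so that $W^* = (A^*_T A_T)^{-1}A^*_T$ and $C^* = (A^*_T A_T)^{-1}A^*_T A_{T^c}$, this reads
$$ h_T = W^*(Ah) - C^*\, h_{T^c}. $$
I would estimate $h_T$ in the $\ell_\infty$ norm. For the first term, the $j$-th entry of $W^*(Ah)$ is $\langle W_{\cdot j}, Ah\rangle$, bounded by $\|W_{\cdot j}\|_1\|Ah\|_\infty$, so $\|W^*(Ah)\|_\infty \le \|W\|_1(\delta+\epsilon)$, since $\|W\|_1 = \max_j \|W_{\cdot j}\|_1$. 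For the second term I would detour through $\ell_1$: using $\|v\|_\infty \le \|v\|_1$ and $\|C^*\|_1 = \|C\|_\infty$ together with the IRR hypothesis, $\|C^* h_{T^c}\|_\infty \le \|C^* h_{T^c}\|_1 \le \|C\|_\infty \|h_{T^c}\|_1 \le \alpha\|h_{T^c}\|_1$.

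To close the loop I would chain the cone condition with the block size: $\alpha\|h_{T^c}\|_1 \le \alpha\|h_T\|_1 \le \alpha s\|h_T\|_\infty$, since $h_T$ has at most $s$ nonzero entries. This is precisely the step that manufactures the factor $\alpha s$ and explains why the hypothesis $\alpha \le 1/s$ is exactly what keeps $1-\alpha s > 0$. Collecting terms gives $\|h_T\|_\infty \le \|W\|_1(\delta+\epsilon) + \alpha s\|h_T\|_\infty$, hence $\|h_T\|_\infty \le \|W\|_1(\delta+\epsilon)/(1-\alpha s)$. Finally $\|h\|_1 = \|h_T\|_1 + \|h_{T^c}\|_1 \le 2\|h_T\|_1 \le 2s\|h_T\|_\infty$ delivers the claimed bound $\|\hat{x}_\delta - x_0\|_1 \le \frac{2s(\epsilon+\delta)}{1-\alpha s}\|A_T(A^*_T A_T)^{-1}\|_1$.

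The one genuine subtlety — and the step I would watch most carefully — is the mixed-norm bookkeeping: one must estimate $h_T$ in $\ell_\infty$ so that the cross term $C^* h_{T^c}$ can absorb the IRR bound only after passing through $\ell_1$, and convert back via $\|h_T\|_1 \le s\|h_T\|_\infty$ only at the very end. Performing the steps in a different order (for instance estimating $h_T$ directly in $\ell_1$) yields a valid but cosmetically different constant and can obscure the clean $1/(1-\alpha s)$ denominator that the stated assumption $\alpha \le 1/s$ is tuned to produce.
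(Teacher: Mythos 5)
Your proof is correct and reaches the paper's bound with exactly the same constant; it shares the paper's skeleton but organizes the key estimate differently. The paper's proof uses your same three preliminary ingredients (feasibility of $x_0$, the residual bound $\|Ah\|_\infty \le \delta+\epsilon$, and the cone condition $\|h_{T^c}\|_1\le\|h_T\|_1$), but then sandwiches the scalar quantity $|\langle Ah,\, A_T(A_T^*A_T)^{-1}h_T\rangle|$: from below by $\left(\tfrac{1}{s}-\alpha\right)\|h_T\|_1^2$, using $\|h_T\|_2^2\ge \tfrac{1}{s}\|h_T\|_1^2$ for $s$-sparse vectors together with IRR and the cone condition, and from above by $(\delta+\epsilon)\,\|A_T(A_T^*A_T)^{-1}\|_1\,\|h_T\|_1$ via H\"older, then divides through by $\|h_T\|_1$. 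You instead solve the normal equations for $h_T$, obtaining the exact identity $h_T=W^*(Ah)-C^*h_{T^c}$ with $W=A_T(A_T^*A_T)^{-1}$ and $C=A_{T^c}^*A_T(A_T^*A_T)^{-1}$, and close by absorption in $\ell_\infty$. These are dual views of one computation: pairing your identity with $h_T$ and applying the paper's individual estimates reproduces its bilinear-form sandwich line by line, which is why the constants agree exactly. What your version buys: it avoids the $\ell_2$/Cauchy--Schwarz detour, it makes completely transparent where the factor $\alpha s$ and the hypothesis $\alpha\le 1/s$ enter (the absorption step), and it yields as a byproduct the entrywise bound $\|h_T\|_\infty\le \|A_T(A_T^*A_T)^{-1}\|_1(\delta+\epsilon)/(1-\alpha s)$, which is finer information than the $\ell_1$ statement alone. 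What the paper's version buys is brevity: a single displayed chain of inequalities rather than an identity plus norm bookkeeping. One further point in your favor: you correctly flagged that $\delta\ge\epsilon$ (or at least $\|z\|_\infty\le\delta$) is needed for $x_0$ to be feasible and hence for $\|\hat x_\delta\|_1\le\|x_0\|_1$; the paper's proof uses this same inequality but leaves that assumption implicit.
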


\begin{proof}
Let $h = \hat{x}_\delta - x_{0}$.  Note that $\|A \hat{x}_\delta -b\|_\infty \leq \delta$ and $z=Ax_{0}-b$ with $\|z\|_\infty \leq \epsilon$. Then
\begin{equation} \label{eq:Ah} 
\|Ah\|_\infty = \|A\hat{x}_\delta - A x_{0} \|_\infty = \|A\hat{x}_\delta - b+ b - Ax_{0}\|_\infty\leq \|A\hat{x}_\delta-b\|_\infty + \|z\|_\infty \leq \delta + \epsilon. 
\end{equation}
We denote $\hat{x}_\delta |_T $ as constraining $\hat{x}_{\delta}$ on the support $T$, i.e. all the entries of $\hat{x}_{\delta}$ corresponding to $T^{c}$ will be set to zero. From the optimization problem in ($\mathcal{P}_{1,\delta}$), we know that $\|x_{0}\|_1 \geq \|\hat{x}_{\delta}\|_1$,
\begin{equation}\label{eq:ht} 
\|h_{T}\|_1 = \|x_{0}-\hat{x}_\delta |_T  \|_1 \geq \|x_{0}\|_1 - \|\hat{x}_\delta|_T \|_1 \geq \|\hat{x}_\delta \|_1 - \|\hat{x}_\delta|_T \|_1 = \| \hat{x}_\delta|_{T^c} \|_1 = \|h_{T^c} \|_1 .
\end{equation}
Therefore, 
\begin{eqnarray*}
&&|\langle A h , A_T (A^*_T A_T)^{-1} h_T\rangle| \\
& =  & |\langle A_T h_T , A_T(A^*_T A_T)^{-1} h _T \rangle + \langle A_{T^c} h_{T^c},A_T(A^*_T A_T)^{-1} h _T \rangle | \\
& \geq & \|h_T\|_2^2 - |\langle h_{T^c} ,    A^*_{T^c}A_T(A^*_T A_T)^{-1} h _T \rangle | \\
& \geq & \|h_T\|_2^2 - \|h_{T^c}\|_1 \|A^*_{T^c}A_T(A^*_T A_T)^{-1} h _T \|_\infty \\
& \geq & \frac{1}{s} \|h_T\|_1^2 - \alpha \|h_{T^c} \|_1 \|h_T\|_\infty \\
& \geq & \frac{1}{s} \|h_T\|_1^2 - \alpha \|h_{T^c}\|_1 \|h_T\|_1 \\
& \geq & \left( \frac{1}{s} - \alpha \right) \|h_T\|_1^2
\end{eqnarray*}
where the last step is due to $\|h_T\|_1 \geq \|h_{T^c} \|_1$ in the inequality (\ref{eq:ht}). On the other hand,
\begin{eqnarray*}
&&|\langle A h , A_T (A^*_T A_T)^{-1} h_T\rangle| \\
& \leq  & \|A h\|_\infty \|A_T(A^*_T A_T)^{-1} h_T \|_1  \\
& \leq &  (\delta +\epsilon) \|A_T(A^*_T A_T)^{-1} \|_1 \|h_T\|_1
\end{eqnarray*} 
using (\ref{eq:Ah}). Combining these two inequalities yields
\[   \|h_T\|_1 \leq \frac{s(\delta+\epsilon)}{1-\alpha s} \|A_T(A^*_T A_T)^{-1} \|_1, \]
as desired. 

\end{proof}

In the special case where $k=j+1$, we have:
\begin{corollary}
Let $k=j+1, |T|= s$, and for any $\sigma_1, \sigma_2\in T$, the two cliques corresponding to $\sigma_{1}$ and $\sigma_{2}$ have overlaps no larger than $r$. Then we have
$\|A^*_{T^c}A_T (A^*_TA_T)^{-1}\|_\infty \le 1/(j+1)$, and thus the following error bound for solution $\hat{x}_\delta$ of $\mathcal{P}_{1,\delta}$ holds:
\[ \|\hat{x}_\delta - x_0 \|_1 \leq  \frac{2s (\epsilon + \delta)} {1 - \frac{s}{j+1}} \sqrt{j+1}, \ \ \ \ \ \ s<j+1. \]
\end{corollary}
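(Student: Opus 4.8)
The plan is to specialize the preceding stable recovery theorem to the case $k=j+1$, which amounts to verifying two quantitative facts: that the irrepresentable constant $\|A^*_{T^c}A_T(A^*_TA_T)^{-1}\|_\infty$ is at most $1/(j+1)$, and that $\|A_T(A^*_TA_T)^{-1}\|_1=\sqrt{j+1}$. Once both are in hand, the error bound follows by substituting $\alpha=1/(j+1)$ into \eqref{eqn:stablerecovery}: the hypothesis $\alpha\le 1/s$ of that theorem becomes $s\le j+1$, while the stated strict inequality $s<j+1$ is exactly what keeps the denominator $1-\alpha s=1-s/(j+1)$ positive.

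First I would record the inner-product structure particular to $k=j+1$. Each column of $A=R^{j,j+1}$ has exactly $\binom{k}{j}=j+1$ nonzero entries, each equal to $1/\sqrt{j+1}$, so for two distinct $(j+1)$-cliques $\sigma,\tau$ one has
\[
\langle A_\sigma,A_\tau\rangle=\frac{\binom{|\sigma\cap\tau|}{j}}{\binom{j+1}{j}}=\begin{cases}1/(j+1),&|\sigma\cap\tau|=j,\\[2pt]0,&|\sigma\cap\tau|<j,\end{cases}
\]
since two distinct $(j+1)$-sets meet in at most $j$ points. The overlap hypothesis forces any two cliques of $T$ to meet in fewer than $j$ points, so the columns of $A_T$ are orthonormal and $A^*_TA_T=I$. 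Consequently $\|A_T(A^*_TA_T)^{-1}\|_1=\|A_T\|_1$, and since the absolute column sum of any column of $A_T$ equals $(j+1)\cdot 1/\sqrt{j+1}=\sqrt{j+1}$, the second fact $\|A_T(A^*_TA_T)^{-1}\|_1=\sqrt{j+1}$ is immediate.

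With $A^*_TA_T=I$, the irrepresentable constant collapses to $\|A^*_{T^c}A_T\|_\infty=\sup_{\tau\in T^c}\mu_\tau$ in the notation of Lemma~\ref{lemma:worstcase}, where now $\mu_\tau=n_\tau/(j+1)$ and $n_\tau$ counts the cliques $\sigma\in T$ that share a common $j$-set with $\tau$, i.e. satisfy $|\tau\cap\sigma|=j$. The crux is to upgrade the generic bound $\mu_\tau\le 1$ of Lemma~\ref{lemma:worstcase} to $n_\tau\le 1$. I would argue by contradiction in the spirit of the disjointness computation in that lemma: if $\tau$ shared $j$-sets with two distinct $\sigma_1,\sigma_2\in T$, then each $\tau\cap\sigma_i$ is a $j$-subset $\rho_i$ of the $(j+1)$-set $\tau$; whether or not $\rho_1=\rho_2$, two $j$-subsets of a $(j+1)$-set overlap in at least $j-1$ elements, and since $\sigma_i\supseteq\rho_i$ this forces $|\sigma_1\cap\sigma_2|\ge j-1$. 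In the small-overlap regime $r\le j-2$ that guarantees IRR by Theorem~\ref{thm:worstcase}, this contradicts $|\sigma_1\cap\sigma_2|\le j-2$. Hence $n_\tau\le 1$, so $\mu_\tau\le 1/(j+1)$ and $\|A^*_{T^c}A_T(A^*_TA_T)^{-1}\|_\infty\le 1/(j+1)$.

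Finally I would set $\alpha=1/(j+1)$, note that $s<j+1$ yields $\alpha s=s/(j+1)<1$ (hence also $\alpha\le 1/s$), and substitute the two computed quantities into \eqref{eqn:stablerecovery} to obtain
\[
\|\hat{x}_\delta-x_0\|_1\le \frac{2s(\epsilon+\delta)}{1-s/(j+1)}\,\sqrt{j+1}.
\]
The routine parts are the inner-product evaluation, the orthonormality of $A_T$, and the column-sum computation $\sqrt{j+1}$. The genuinely delicate step, and the main obstacle, is the combinatorial claim $n_\tau\le 1$ — that no single $\tau\in T^c$ can be $j$-correlated with two distinct cliques of $T$. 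This is precisely where the exact value $1/(j+1)$ (rather than the weaker $\mu_\tau\le 1$) originates, and it is the place where the small-overlap hypothesis is used essentially, beyond merely producing the orthonormality of $A_T$.
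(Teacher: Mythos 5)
Your proposal is correct and follows essentially the same route as the paper's own proof: establish $A^*_TA_T=I$ from the small-overlap hypothesis, bound the column sums of $A_T$ by $\sqrt{j+1}$, show that any $\tau\in T^c$ can meet at most one clique of $T$ in $j$ or more points (your inclusion--exclusion inside the $(j+1)$-set $\tau$ is the same computation the paper phrases as $|\tau|\ge j+j-(j-2)=j+2$, a contradiction), and then substitute $\alpha\le 1/(j+1)$ into the stable recovery bound \eqref{eqn:stablerecovery}. Your reading of the (under-specified) overlap parameter $r$ as $r\le j-2$ also matches the paper's implicit assumption, so there is nothing substantive to flag.
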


\begin{proof}
This corollary follows follows from the Lemma above. Note that when the conditions in Theorem 2 hold, $A^*_T A_T = I$ and $\|A_T\|_1 \leq \sqrt{{k \choose j}} = \sqrt{j+1}$. 

Now it suffice to establish the fact that in this special case, we have
$$\|A^*_{T^c}A_T(A^*_TA_T)^{-1}\|_\infty\le \frac{1}{j+1}<1$$
Note that since any $\sigma_1, \sigma_2\in T$ satisfy $|\sigma_1\cap\sigma_2|\le j-2$, 
we have $A^*_TA_T$ is an identity matrix. 
So $\|A^*_{T^c}A_T(A^*_TA_T)^{-1}\|_\infty=\|A^*_{T^c}A_T\|_\infty$. 
Now assume $\tau\in T^c$, let $S_{\tau}=\{\sigma: |\sigma\cap \tau|\ge j, \sigma\in T\}$, 
then $|S_{\tau}|\le 1$. This is because otherwise, suppose 
$\{\sigma_1,\sigma_2\}\subset S_{\tau}$ such that $|S_{\tau}|\ge 2$, then we have
\begin{eqnarray*}
|\tau|&\ge& |\tau\cap (\sigma_1\cup \sigma_2)|=|\tau\cap \sigma_1|+|t\cap \sigma_2|-|t\cap \sigma_1\cap\sigma_2|\\
&\ge & j+j-(j-2)=j+2
\end{eqnarray*}
which contradicts with the fact that $\tau$ is a $j+1$-set.
So there exist at most one $\sigma_0\in T$ such that $|\tau\cap \sigma|\ge j$. 
Let $v_\tau$ be the row vector of $A^*_{T^c}A_T$ with row index correspond to $\tau$. 
Then $\|v_\tau\|_\infty\le \frac{{j\choose j}}{{j+1\choose j}}=\frac{1}{j+1}<1$. 
\end{proof}

\subsection{Identifying Cliques with Mixed Sizes}

In general settings, we need to identify high order cliques of mixed sizes, i.e., cliques of sizes $k_1, k_2, \cdots, k_l$ ($k_1<k_2<\cdots<k_l$), based on
the observed data $b$ on all $j$-sets. One  way  to construct the basis matrix $A$ is by
concatenating $R^{j,k}$ with different $k$'s satisfying $k>j$.
We can then solve $\mathcal{P}_1$ and $\mathcal{P}_{1,\delta}$ for exact recovery and stable recovery with this newly concatenated basis matrix $A$. We have the following theorem:

\begin{theorem}
Suppose $x_0$ is a sparse signal on cliques of sizes $k_1, k_2, \cdots, k_\ell  (j \leq k_1<k_2<\cdots<k_\ell \leq k)$ and $b=Ax_0$.
Let $T=\mathrm{supp}(x_0)$. 
\begin{enumerate}
\item If the cliques in $T$ have no overlaps, then they can be identified
by $\mathcal{P}_1$. 
\item Moreover, if the data $b=Ax_0+z$ is contaminated by the noise $z$,  $\mathcal{P}_{1,\delta}$  provides an estimate of
$x_0$ for which the inequality in (\ref{eqn:stablerecovery}) still holds.
\end{enumerate}
\end{theorem}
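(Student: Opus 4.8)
The plan is to reduce both assertions to the Irrepresentable Condition (IRR) and then borrow the machinery already in place: IRR forces $x_0$ to be the unique minimizer of $\mathcal{P}_1$ (Proposition~\ref{pro::P1} together with its reduction to IRR), while IRR with a sufficiently small constant is exactly the hypothesis of the Stable Recovery Theorem.

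First I would use the no-overlap hypothesis to compute the Gram matrix of the concatenated dictionary. A column indexed by a clique $\sigma$ of size $k_\sigma$ carries the value $1/\sqrt{\binom{k_\sigma}{j}}$ on each $j$-subset of $\sigma$, so $\langle A_\sigma, A_\rho\rangle = \binom{|\sigma\cap\rho|}{j}\big/\sqrt{\binom{k_\sigma}{j}\binom{k_\rho}{j}}$. Since the cliques in $T$ are pairwise disjoint, $|\sigma\cap\rho|=0<j$ for distinct $\sigma,\rho\in T$, hence distinct columns of $A_T$ share no $j$-set and $A_T^*A_T=I$. This collapses the IRR quantity to $\|A_{T^c}^*A_T\|_\infty=\sup_{\tau\in T^c}\mu_\tau$, where $\mu_\tau=\sum_{\sigma\in T}\binom{|\tau\cap\sigma|}{j}\big/\sqrt{\binom{|\tau|}{j}\binom{|\sigma|}{j}}$, the mixed-size analogue of the quantity in Lemma~\ref{lemma:worstcase}.

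For Part~1 I would show $\mu_\tau<1$ for every $\tau\in T^c$. As the $\sigma\in T$ are disjoint, the sets $\tau\cap\sigma$ are disjoint subsets of $\tau$, so their $j$-subset families are disjoint and $\sum_\sigma\binom{|\tau\cap\sigma|}{j}\le\binom{|\tau|}{j}$ (the Condition~1 counting of Lemma~\ref{lemma:worstcase}). Bounding each summand by $\binom{|\tau\cap\sigma|}{j}/\sqrt{\binom{|\sigma|}{j}}\le\sqrt{\binom{|\tau\cap\sigma|}{j}}$ via $|\tau\cap\sigma|\le|\sigma|$, the claim reduces to the superadditivity inequality $\sum_\sigma\sqrt{\binom{|\tau\cap\sigma|}{j}}\le\sqrt{\binom{|\tau|}{j}}$. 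I would prove this by induction from the two-set case $\sqrt{\binom aj}+\sqrt{\binom bj}\le\sqrt{\binom{a+b}{j}}$, which on squaring and using Vandermonde becomes $\sum_{t=1}^{j-1}\binom at\binom b{j-t}\ge 2\sqrt{\binom aj\binom bj}$; this follows by AM-GM on the extreme cross terms $\binom a1\binom b{j-1}$ and $\binom a{j-1}\binom b1$ together with the identity $a\binom a{j-1}/\binom aj=aj/(a-j+1)\ge j\ge2$ (the degenerate $j=2$ case being direct). Since the cross terms are strictly positive, and the single-nonzero-term case is strict because $\tau\ne\sigma$, the inequality is strict, giving $\mu_\tau<1$. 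Thus IRR holds and exact recovery by $\mathcal{P}_1$ follows.

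For Part~2 I would feed the same structure into the Stable Recovery Theorem: with $A_T^*A_T=I$ and IRR constant $\alpha=\sup_\tau\mu_\tau$ from Part~1, that theorem delivers the bound~(\ref{eqn:stablerecovery}). The main obstacle here is quantitative rather than structural: the Stable Recovery Theorem needs $\alpha\le 1/s$ so that $1-\alpha s>0$, whereas disjointness as used above only yields $\alpha<1$. A single $\tau\in T^c$ overlapping one $\sigma\in T$ in nearly all of $\sigma$ makes $\mu_\tau$ approach $1$ regardless of $s$, so invoking the stable bound requires additionally controlling the regime---$s$ small, or the admissible sizes restricted so that $\sup_\tau\mu_\tau\le 1/s$. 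I would therefore either impose this mild extra condition or rerun the stable-recovery estimate directly under $A_T^*A_T=I$; that computation reproduces the same $\mu<1/s$ threshold, confirming that verifying $\alpha\le 1/s$ is the true crux of the second claim.
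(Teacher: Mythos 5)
Your Part 1 argument is essentially identical to the paper's proof: the same reduction to $A_T^*A_T=I$ and $\sup_{\tau\in T^c}\mu_\tau<1$, the same counting via disjointness of the $j$-subset families of the sets $\tau\cap\sigma$, and the same superadditivity inequality $\sqrt{\binom{n_1}{j}}+\sqrt{\binom{n_2}{j}}<\sqrt{\binom{n_1+n_2}{j}}$ proved by squaring, Vandermonde, and AM--GM on the extreme cross terms (the paper likewise treats $j=2$ separately and reduces to $n_1\binom{n_1}{j-1}>\binom{n_1}{j}$).

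Your remark on Part 2 is not a defect of your proposal but a correct diagnosis of a gap in the paper itself: the paper's proof of this theorem stops after establishing exact recovery and never verifies the hypothesis $\|A_{T^c}^*A_T(A_T^*A_T)^{-1}\|_\infty\le\alpha\le 1/s$ that its own stable recovery theorem (whose conclusion is exactly the inequality (\ref{eqn:stablerecovery})) requires. As you observe, disjointness of the cliques in $T$ yields only $\alpha<1$; a single $\tau\in T^c$ containing nearly all of some $\sigma\in T$ drives $\mu_\tau$ arbitrarily close to $1$ independently of $s$, so the condition $\alpha\le 1/s$ is not automatic. Part 2 as stated therefore needs either an added quantitative hypothesis (for instance $\sup_{\tau\in T^c}\mu_\tau\le 1/s$, or size restrictions that enforce it) or a reworked stable-recovery estimate under $A_T^*A_T=I$; the paper supplies neither, and your proposal is the more careful treatment on this point.
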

\begin{proof}
We prove under the condition that any $\sigma_1,\sigma_2\in T$ satisfy 
$|\sigma_1\cap \sigma_2|=0$, then solve $\mathcal{P}_1$ will exactly identify $x_0$. 

For simplicity, given any $\tau\in T^c$, we define 
$$\mu_\tau=\sum_{\sigma\in T} \frac{1}{\sqrt{{|\tau|\choose j}{|\sigma|\choose j}}}{|\tau\cap \sigma|\choose j}$$

Note that the intersection of $\sigma_1$ and $\sigma_2$ is zero implies that $A^*_TA_T=I$, moreover,
given $\tau\in T^c$, the collection of sets
$\{\tau\cap \sigma| \sigma\in T\}$ are disjoint. Note that if there is only one $\sigma_0$ satisfies 
$|\tau\cap \sigma_0|\ge j$, then
$$\mu_\tau=\frac{1}{\sqrt{{|\tau|\choose j}{|\sigma_0|\choose j}}}{|\tau\cap \sigma_0|\choose j}<1,$$
because it is the inner product of two column vectors corresponds to $\tau$ and $\sigma_0$ of $A$, where there are no two columns in $A$ are identical. 

Now suppose there are at least two $\sigma$'s satisfy, $|\tau\cap \sigma|\ge j$, then we have
\begin{eqnarray*}
\mu_\tau&=&\sum_{\sigma\in T} \frac{1}{\sqrt{{|\tau|\choose j}{|\sigma|\choose j}}}{|\tau\cap \sigma|\choose j}\\
&\le& \sum_{\sigma\in T, |\tau\cap \sigma|\ge j} \frac{1}{\sqrt{{|\tau|\choose j}{|\tau\cap\sigma|\choose j}}}{|\tau\cap\sigma|\choose j}\\
&=& \sum_{\sigma \in T, |\tau\cap \sigma|\ge j} \frac{\sqrt{{|\tau\cap \sigma|\choose j}}}{\sqrt{{|\tau|\choose j}}}
\end{eqnarray*}

Since the collection of sets $\{\tau\cap \sigma| \sigma\in T\}$ are disjoint, so if we can prove $$\sqrt{{|\tau\cap \sigma_1|\choose j}}+\sqrt{{|\tau\cap \sigma_2|\choose j|}}<
\sqrt{{|\tau\cap (\sigma_1\cup \sigma_2)|\choose j}},$$ then we know that 
$$\mu_\tau\le \sum_{\sigma\in T, |\tau\cap \sigma|\ge j} \frac{\sqrt{{|\tau\cap \sigma|\choose j}}}{\sqrt{{|\tau|\choose j}}}< \sqrt{{|\tau\cap 
(\cup_{\sigma\in T, |\tau\cap \sigma|\ge j} \sigma)|\choose j}}/\sqrt{{|\tau|\choose j}}\le1$$

Now we only need to prove the following inequality: suppose $j\ge 2$, given $n_1\ge j, n_2\ge j$, 
we need to prove
$\sqrt{{n_1\choose j}}+\sqrt{{n_2\choose j}}<\sqrt{{n_1+n_2\choose j}}$

The case of $j=2$ can be verified directly, while for $j\ge 3$, we square both sides and we now we only need to prove
${n_1\choose j}+{n_2\choose j}+2\sqrt{{n_1\choose j}{n_2\choose j}} <  {n_1+n_2\choose j}$. Since
$${n_1+n_2\choose j}=\sum_{s=0}^{j} {n_1\choose j-s}{n_2\choose s}.$$ So we know we only need to prove
$2\sqrt{{n_1\choose j}{n_2\choose j}}< n_2{n_1\choose j-1}+n_1{n_2\choose j-1}$. Since
$n_2{n_1\choose j-1}+n_1{n_2\choose j-1}\ge 2\sqrt{n_1n_2{n_1\choose j-1}{n_2\choose j-1}}$, so we only need to verify
$n_1{n_1\choose j-1}> {n_1\choose j}$, this can be easily verified by writing out explicitly both sides. 
\end{proof}

The above theorem provides us a sufficient condition to guarantee exact sparse
recovery with concatenated bases and the stable recovery theory is also established.

\section{A Polynomial Time Approximation Algorithm}\label{sec:algorithm}
In practical applications,  we have pairwise interaction data in a network with $n$ nodes and we wish to infer high order cliques up to size $k$.
Directly constructing $A$ by concatenating Radon basis matrices $R^{j,j}, R^{j,j+1}\ldots, R^{j,k}$
and solving $\mathcal{P}_{1,\delta}$ would incur exponential complexity since $A$ has exponentially many columns with respect to $k$. This would be intractable for inferring high order cliques in large networks.
In this section, we describe a polynomial time (with respect to both $n$ and $k$) approximation algorithm for solving $\mathcal{P}_{1,\delta}$.
Recall that the primal and dual programs $\mathcal{P}_{1,\delta}$ and $\mathcal{D}_{1,\delta}$ are:
\begin{eqnarray*}
{\rm (\mathcal{P}_{1,\delta})}&&~\min\|x\|_1 \ ~\text{s.t.}~\|Ax-b\|_\infty \leq \delta\\
{\rm (\mathcal{D}_{1,\delta})}&&~\max-\delta \|\gamma\|_1-b^*\gamma \ ~\text{s.t.}~\|A^*\gamma\|_\infty \leq 1.
\end{eqnarray*}
\begin{proposition}
The problem  ${\rm (\mathcal{D}_{1,\delta})}$ is the dual of ${\rm (\mathcal{P}_{1,\delta})}$.
\end{proposition}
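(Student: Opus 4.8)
The plan is to derive $\mathcal{D}_{1,\delta}$ as the Lagrangian dual of $\mathcal{P}_{1,\delta}$, paralleling the LP reformulation already used in the proof of Proposition \ref{pro::P1}. First I would rewrite the $\ell_\infty$ constraint $\|Ax-b\|_\infty\le\delta$ as the pair of one-sided linear inequalities $Ax-b\le\delta\mathbf{1}$ and $b-Ax\le\delta\mathbf{1}$, and attach nonnegative multipliers $\lambda_+,\lambda_-\in\mathbb{R}^{M}_{+}$ to them, keeping the objective as $\|x\|_1$. Collecting terms and writing $\gamma:=\lambda_+-\lambda_-$, this produces the Lagrangian
\[ L(x,\lambda_+,\lambda_-)=\|x\|_1+\langle A^*\gamma,\,x\rangle-\langle b,\gamma\rangle-\delta\,\mathbf{1}^{T}(\lambda_++\lambda_-). \]

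The next step is to form the dual function $g(\lambda_+,\lambda_-)=\min_x L$. The only $x$-dependent part is $\|x\|_1+\langle A^*\gamma,x\rangle$, which separates coordinatewise into terms $|x_i|+(A^*\gamma)_i\,x_i$; each such term has infimum $0$ when $|(A^*\gamma)_i|\le 1$ and $-\infty$ otherwise. Hence the dual function is finite precisely on the set $\|A^*\gamma\|_\infty\le 1$, where it equals $-\langle b,\gamma\rangle-\delta\,\mathbf{1}^{T}(\lambda_++\lambda_-)$. This already reproduces exactly the constraint of $\mathcal{D}_{1,\delta}$, with $b^*\gamma=\langle b,\gamma\rangle$ since the data are real.

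The remaining and most delicate step is to eliminate the redundancy in the pair $(\lambda_+,\lambda_-)$ in favor of the single variable $\gamma$. For a fixed $\gamma$, maximizing the dual function amounts to minimizing $\mathbf{1}^{T}(\lambda_++\lambda_-)$ over all decompositions $\lambda_+-\lambda_-=\gamma$ with $\lambda_\pm\ge 0$; coordinatewise this minimum equals $|\gamma_i|$, attained by taking the positive and negative parts of $\gamma$, so $\mathbf{1}^{T}(\lambda_++\lambda_-)$ collapses to $\|\gamma\|_1$. Substituting yields the program of maximizing $-b^*\gamma-\delta\|\gamma\|_1$ subject to $\|A^*\gamma\|_\infty\le 1$, which is precisely $\mathcal{D}_{1,\delta}$. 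I expect this collapse — recognizing that the optimal splitting recovers the $\ell_1$ term — to be the main (though essentially routine) obstacle, and the sign bookkeeping between $\lambda_+-\lambda_-$ and $\lambda_++\lambda_-$ is where care is most needed.

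Finally, since both programs are linear, I would invoke LP duality (or Slater's condition, noting that any $x$ with $\|Ax-b\|_\infty<\delta$ is strictly feasible when $\delta>0$) to conclude that strong duality holds with no duality gap, so that $\mathcal{D}_{1,\delta}$ is genuinely the dual of $\mathcal{P}_{1,\delta}$.
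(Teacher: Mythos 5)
Your derivation is correct and lands exactly on $\mathcal{D}_{1,\delta}$, but it takes a slightly different mechanical route than the paper. The paper first converts $\mathcal{P}_{1,\delta}$ into a genuine LP by introducing epigraph variables $\xi$ (minimizing $\mathbf{1}^T\xi$ subject to $-\xi\le x\le\xi$, $\xi\ge 0$, and the two-sided box constraint on $Ax-b$), writes the Lagrangian with multipliers $\gamma_\pm,\lambda_\pm,\mu$, and reads the dual objective and the constraint $\|A^*\gamma\|_\infty\le 1$ off the KKT stationarity conditions, using complementary slackness $\gamma_+(\tau)\gamma_-(\tau)=0$ to identify $\mathbf{1}^T(\gamma_++\gamma_-)$ with $\|\gamma\|_1$. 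You instead keep the non-smooth objective $\|x\|_1$, so the constraint $\|A^*\gamma\|_\infty\le 1$ emerges from the coordinatewise minimization $\inf_{x_i}\bigl(|x_i|+(A^*\gamma)_i x_i\bigr)$ (i.e., from the fact that the conjugate of the $\ell_1$ norm is the indicator of the $\ell_\infty$ ball), and the collapse $\mathbf{1}^T(\lambda_++\lambda_-)\to\|\gamma\|_1$ is obtained by explicitly optimizing over all nonnegative splittings $\gamma=\lambda_+-\lambda_-$ rather than by appealing to complementarity at an optimal point. Your version is arguably cleaner as a bona fide dual-function derivation and is self-contained; the paper's version keeps everything inside standard LP duality, which is the machinery (strict complementarity, duality gaps) it reuses in Proposition \ref{pro::P1} and in the cutting-plane algorithm. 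One small caveat: your Slater remark presumes a strictly feasible $x$ exists, which need not hold for arbitrary $b$ and $\delta$; the LP-duality fallback you mention (or primal feasibility plus boundedness) is the safer justification, though strong duality is not actually needed for the proposition as stated.
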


\begin{proof} Consider an alternative form of $\mathcal{P}_{1,\delta}$,
\begin{eqnarray*}
 \min & & 1^T \xi \\
 \mbox{subject to}&  & -\delta\cdot 1 \leq A x - b \leq \delta\cdot 1 \\
& & -\xi \leq x \leq \xi \\
& & \xi \geq 0
\end{eqnarray*}
whose Lagrangian is
\[ L(x,\xi; \gamma,\lambda,\mu) = 1^T\xi - \gamma_+^T(\delta \cdot 1-Ax+b) - \gamma_-^T(Ax - b + \delta\cdot 1) - \lambda_+^T(\xi - x) - \lambda_- ^T(\xi + x) - \mu^T \xi .\]

Here if we assume $A$ is a matrix of size $M$ by $N$, then 
$\gamma_{+}=(\gamma_{+}(1),\ldots,\gamma_+(M))\in \mathbb{R}^{M}_+$,
$\gamma_{-}=(\gamma_{-}(1),\ldots,\gamma_-(M))\in \mathbb{R}^{M}_+$, 
$\lambda_{+} = (\lambda_{+}(1), \ldots, \lambda_{+}(N))^{T}\in\mathbb{R}^{N}_{+}$, 
$\lambda_{-} = (\lambda_{-}(1), \ldots, \lambda_{-}(N))^{T}\in\mathbb{R}^{N}_{+}$, 
$\mu \in \mathbb{R}^{N}_{+}$ are the Lagrange multipliers.

Then the KKT condition gives
\begin{enumerate}
 \item $A^*( \gamma_+ - \gamma_- )+ (\lambda_+ - \lambda_- )= 0$, 
 \item $1 - (\lambda_+ + \lambda_-) - \mu = 0$,
\end{enumerate}
with $\gamma, \lambda, \mu \geq 0$ and $\gamma_+(\tau)\gamma_-(\tau)=\lambda_+(\tau)\lambda_-(\tau)=0$ for all $\tau$. 

Now we can see that the dual function of $1^T\xi$ is 
$$-\delta (\gamma_+^T+\gamma_-^T)\cdot 1-(\gamma_+^T-\gamma_-^T)b,$$  which is $-\delta \|\gamma\|_1-b^*\gamma$, 
while the constraints for $\gamma$ is $\|A^*\gamma\|_\infty\leq 1$. 
\end{proof}

The key of our algorithm is that we use a polynomial number of variables and constraints to approximate both programs,
yielding an approximate solution for $\mathcal{P}_{1, \delta}$. More precisely, we apply a sequential
primal-dual interior point method to solve the relaxed programs:
\begin{eqnarray*}
{\rm (\mathcal{P}_{1,\delta,T})}&&~\min\|x\|_1 \ ~\text{s.t.}~\|A_Tx-b\|_\infty \leq \delta\\
{\rm (\mathcal{D}_{1,\delta,T})}&&~\max-\delta \|\gamma\|_1-b^*\gamma \ ~\text{s.t.}~\|A_T^*\gamma\|_\infty \leq 1.
\end{eqnarray*}
Here $A_T$ is a submatrix of $A$ where we extract a subset of columns $T$.
We approximate the solution to the original programs by solving the above
relaxed programs where we only use polynomially many columns indexed by $T$. In
particular, we want to find an interior point $\gamma$
for $\mathcal{D}_{1,\delta,T}$ which is also feasible for $\mathcal{D}_{1,\delta}$. With this $\gamma$ available, we can use duality gaps to check convergence
because the current dual objective provides a lower bound for $\mathcal{D}_{1,\delta}$ and any interior point for $\mathcal{P}_{1,\delta,T}$ provides an upper bound for $\mathcal{P}_{1,\delta}$.

Let $A_{i}$ be the $i$-th column of $A$. We need to  sequentially update the column set $T$. When we have
a solution $\gamma$ (which is called the approximate analytic center) for the relaxed program $\mathcal{D}_{1,\delta,T}$, we need to find a new column $A_i$ ($i\in T^c$) which is not feasible in $\mathcal{D}_{1,\delta,T}$.
By incorporating $A_i$ into $T$, the feasible region of $\mathcal{D}_{1,\delta,T}$ is reduced to better
approximate  that of $\mathcal{D}_{1,\delta}$. When the current solution $\gamma$ has no violated constraint, i.e.,
$\gamma$ is feasible for $\mathcal{D}_{1,\delta}$, we use interior point methods to find a series of interior points which converge to the solution
of $\mathcal{D}_{1,\delta, T}$. However, we may obtain a new
interior point $\gamma$ which is not feasible for $\mathcal{D}_{1,\delta}$.
We then go back and add violated constraints.
A formal description is provided in Algorithm \ref{algo:accpm}.
\begin{algorithm}
\caption{\label{algo:accpm}Cutting Plane Method for Solving $\mathcal{P}_{1,\delta}$}
\label{alg1}
\begin{algorithmic}
\STATE Initialize $A=I$, $x=b$, $\gamma=(1,1,\cdots,1)^t$.
\WHILE{TRUE}
\IF{$\exists$ $|A^*_i\gamma|> 1$ where $i\in T^c$}
\STATE $T\leftarrow T\cup \{i\}$, formulate new $\mathcal{D}_{1,\delta,T}$ and $\mathcal{P}_{1,\delta,T}$.
\STATE Find new interior points $\gamma$ and $x$ for $\mathcal{D}_{1,\delta, T}$ and $\mathcal{P}_{1,\delta, T}$ respectively.
\ELSIF {the duality gap is small}
\STATE Get the dual solution $\hat{x}$ and stop.
\ELSE
\STATE  {Find a new interior point $\gamma$ for $\mathcal{D}_{1,\delta,T}$, which optimizes the dual objective.}
\ENDIF
\ENDWHILE
\end{algorithmic}
\end{algorithm}

In Algorithm \ref{algo:accpm}, the first IF statement involves a problem of
finding a violated dual constraint for the current relaxed program.
In the special case where
$\gamma$ are dual variables associated with edges,
the problem becomes the {\em maximum edge weight clique problem}, which is known to be NP-hard.
We use a simple greedy heuristic algorithm, which iteratively adds new nodes in order to maximize summation of edge weights
to solve this problem~\citep{george1978}, which runs in $\mathcal{O}(nk^2)$ time and can return a $0.94$-approximate solution in the average case.
Note that, if $\gamma$ is feasible for the dual relaxation problem with no additional violated constraints, then
$0.94\gamma$ must be feasible for $\mathcal{D}_{1,\delta}$ whose objective is discounted by $0.94$. Thus, we will terminate with an $0.94$-approximate solution.

Let $\eta$ be the threshold to check the duality gap. 
Algorithm \ref{algo:accpm} can also be understood as the column generation method~\citep{dantzig1960}, since
adding a new inequality constraint in the dual program adds a variable to the primal program and thus
adds a column to the basis matrix.
For more details of the algorithm, see~\citet{mitchell2003} and~\citet{yebook1997}. Theoretically,
if one is able to find a violated constraint in constant time
and uses interior point methods to locate approximate centers of the
primal-dual feasible
regions, then Algorithm 1 has computational complexity
$\mathcal{O}(M/\eta^2)$, where $M$ is the number
of dual variables~\citep{mitchell2003,yebook1997}.
In our case, $M\asymp n^2$ and find a violated constraint has
complexity $\mathcal{O}(nk^2)$, thus
algorithm \ref{algo:accpm} has complexity $\mathcal{O}(n^3k^2/\eta^2)$. 

Finally, we note that other iterative algorithms, e.g., Bregman iterations, which have
guaranteed convergence rates~\citep{cai2009} can be used to find
solutions of linear program relaxations in our algorithms.
We also note that, in practice, we never need to explicitly construct the matrix $A$  because there are many
 combinatorial structures within the basis matrix to exploit. For example, operations such as evaluating
inner products between the bases can be evaluated efficiently by directly comparing two sets.

\section{Application Examples}\label{sec:application}

In this section, we provide four application examples to illustrate the effectiveness of the proposed framework in this paper.
As we will see, our clique-based model can deal with overlaps between 
cliques which gives us more community structural information compared against using purely 
clustering methods and the state-of-the-art clique percolation method.
In these examples, we use the clique {\em volume} and {\em conductance}, which arguably are the simplest evaluation criteria of clustering quality,
to evaluate different algorithms.
The clique volume is the sum of edge weights inside the clique, while the clique conductance is
the ratio between the number of weights leaving the clique and the clique volume~\citep{leskovec2010}.

More precisely, let $B_{uv}$ be the element on the $u$-th row and $v$-th column of the adjacency matrix $B$.
The {\em conductance} $\phi(S)$ of a set of nodes $S$ is defined as $$\phi(S)=\frac{\sum_{\{(u,v): u\in S, v\notin S\}} 
B_{uv}}{\min(\mbox{Vol}(S), \mbox{Vol}(V\setminus S))}$$
and {\it volume} is $\mbox{Vol}(S)=\sum_{\{u,v \in S\}} B_{uv}.$

\subsection{Basketball Team Detection}

\begin{figure}[htp]
\begin{center}
\begin{tabular}{cc}
\includegraphics[width=0.5\textwidth]{./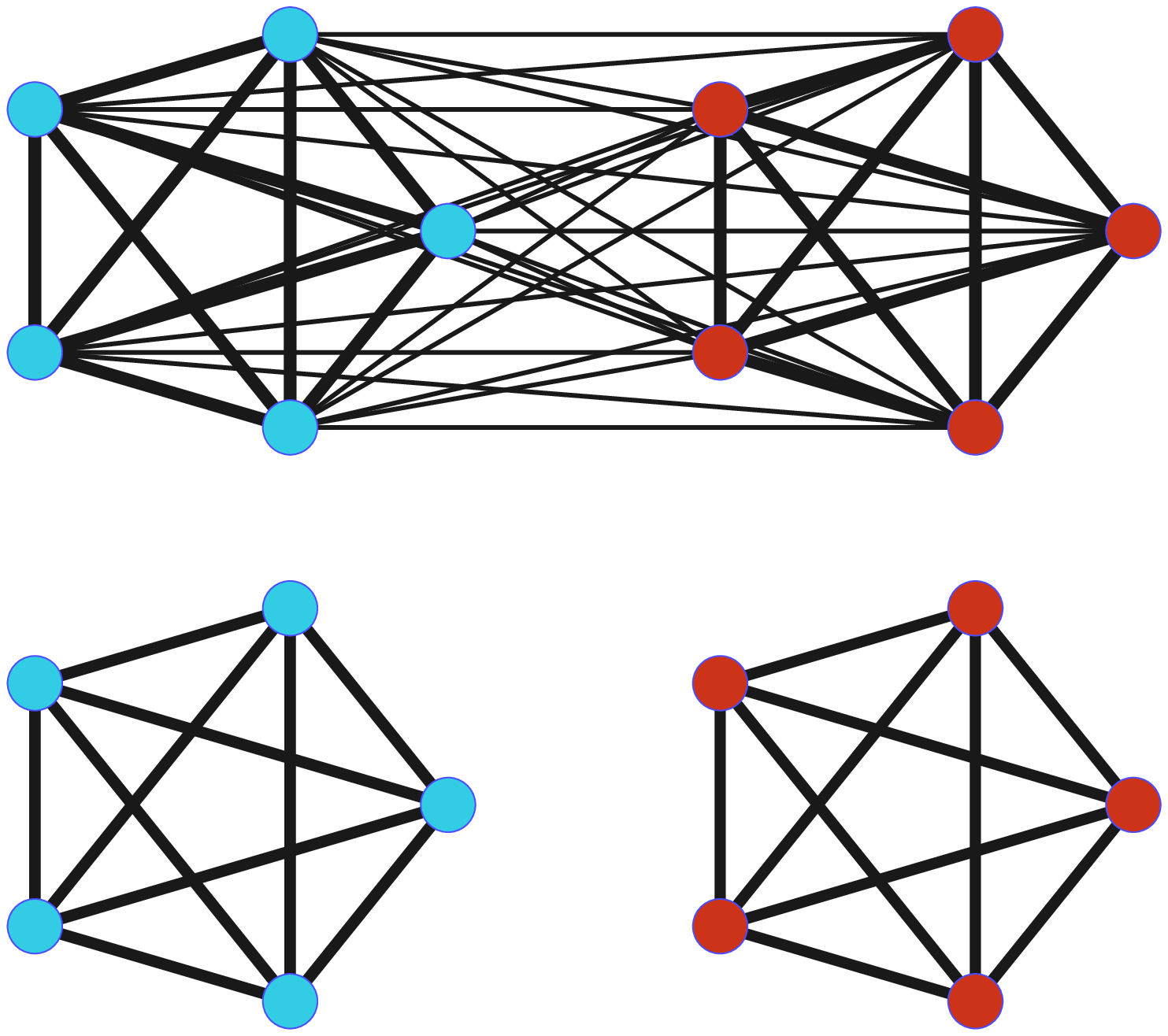} &
\hspace{-0.3in} \includegraphics[width=85mm]{./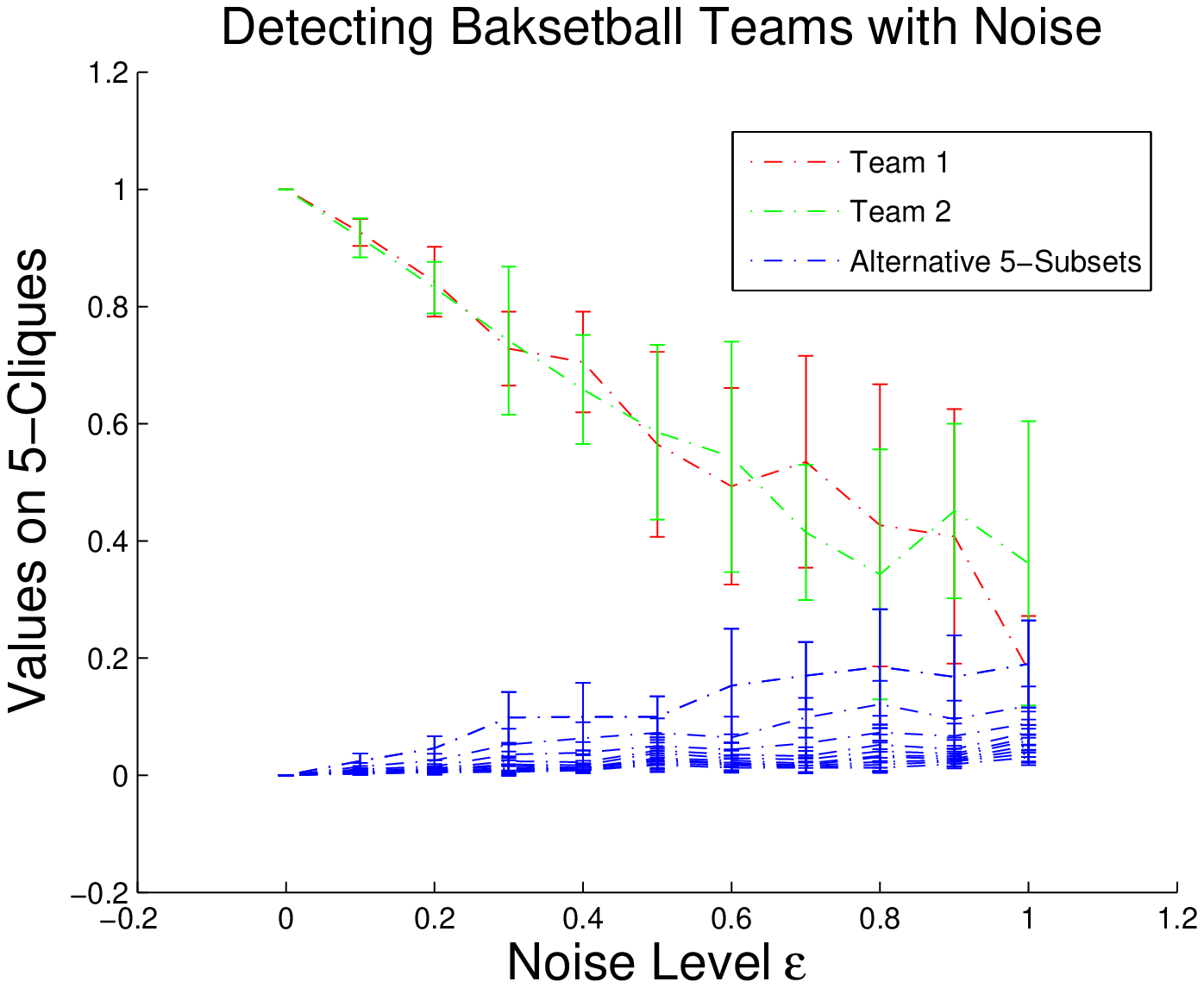} \\
(a) & (b) 
\end{tabular}
\caption{ \label{fig:Basketball}Detecting Basketball Teams with Noise. (a) Two teams in a virtual Basketball Game, with intra-team interaction $1$ and cross-team interaction noise no more than $\epsilon$;  (b) Under a large noise level $\epsilon<0.9$, the two teams are identifiable. For each noise level, we run 100 simulations repeatedly, whose errorbar plot of weights on cliques are shown. }
\end{center}
\end{figure}

Detecting two basketball teams from pairwise interactions among plays is an ideal scenario since the two teams do not overlap. 
Suppose we have $x_0$ which is the true signal indicating the two teams among all $5$-sets of the $10$-player set, i.e., it is sparsely
concentrated on two $5$-sets which correspond to the two teams with magnitudes both equal to one. 
Assume we have observations $b$ of pairwise interactions, i.e. $b=Ax_0+z$, where $z$ is bounded
random noise uniformly distributed in $[-\epsilon, \epsilon]$. We solve $\mathcal{P}_{1,\delta}$, with $\delta=\epsilon$, which is a linear programming search over $x\in \mathbb{R}^{10\choose 5} = \mathbb{R}^{252}$ with a parameter matrix $A \in \mathbb{R}^{{10\choose 2}\times{10 \choose 5}} = \mathbb{R}^{45\times 252}$ and $b\in \mathbb{R}^{45}$.
\begin{figure*}[ht!]
\begin{center}
\begin{tabular}{cc}
\includegraphics[width=80mm]{./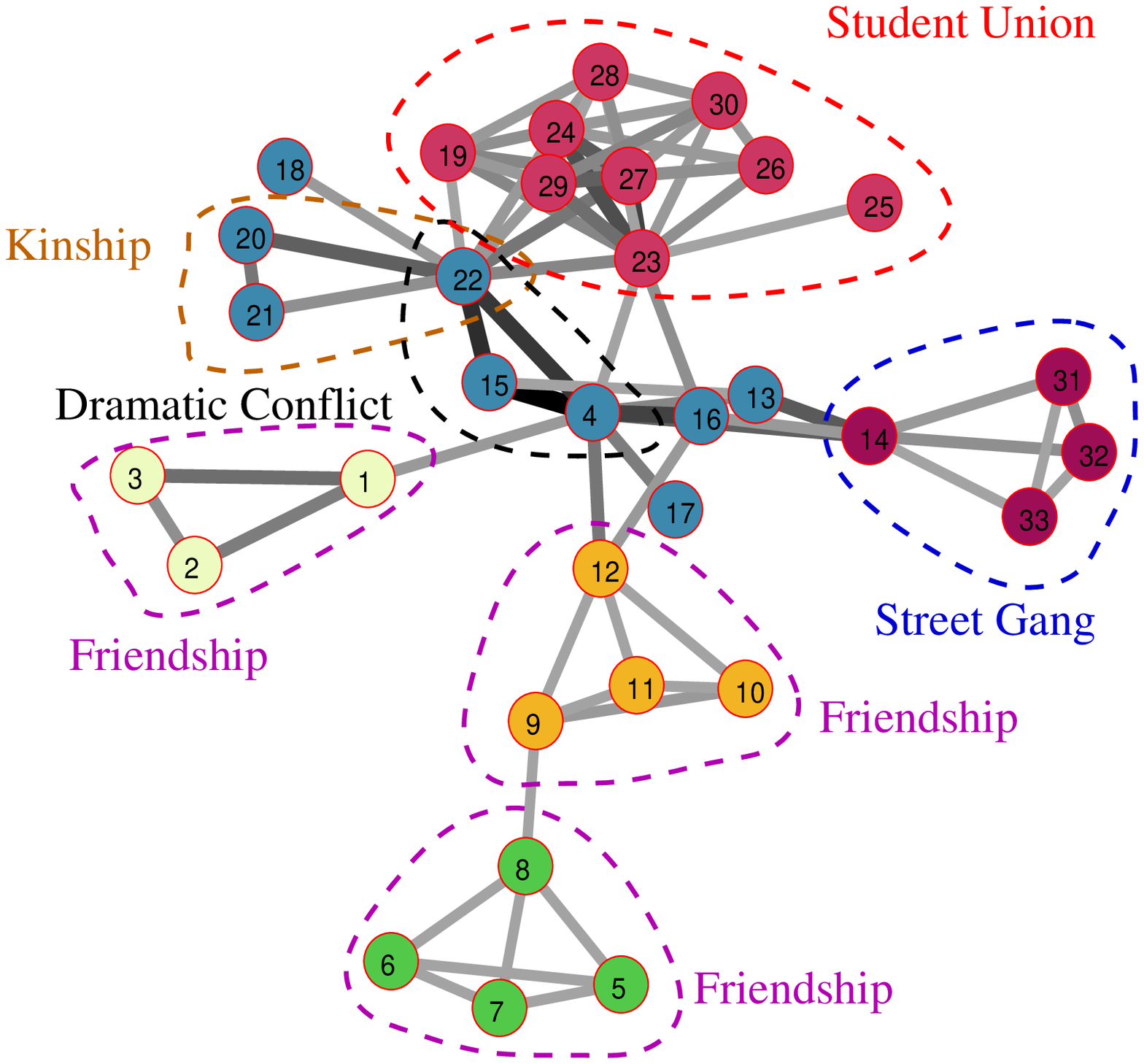} &
\includegraphics[width=80mm]{./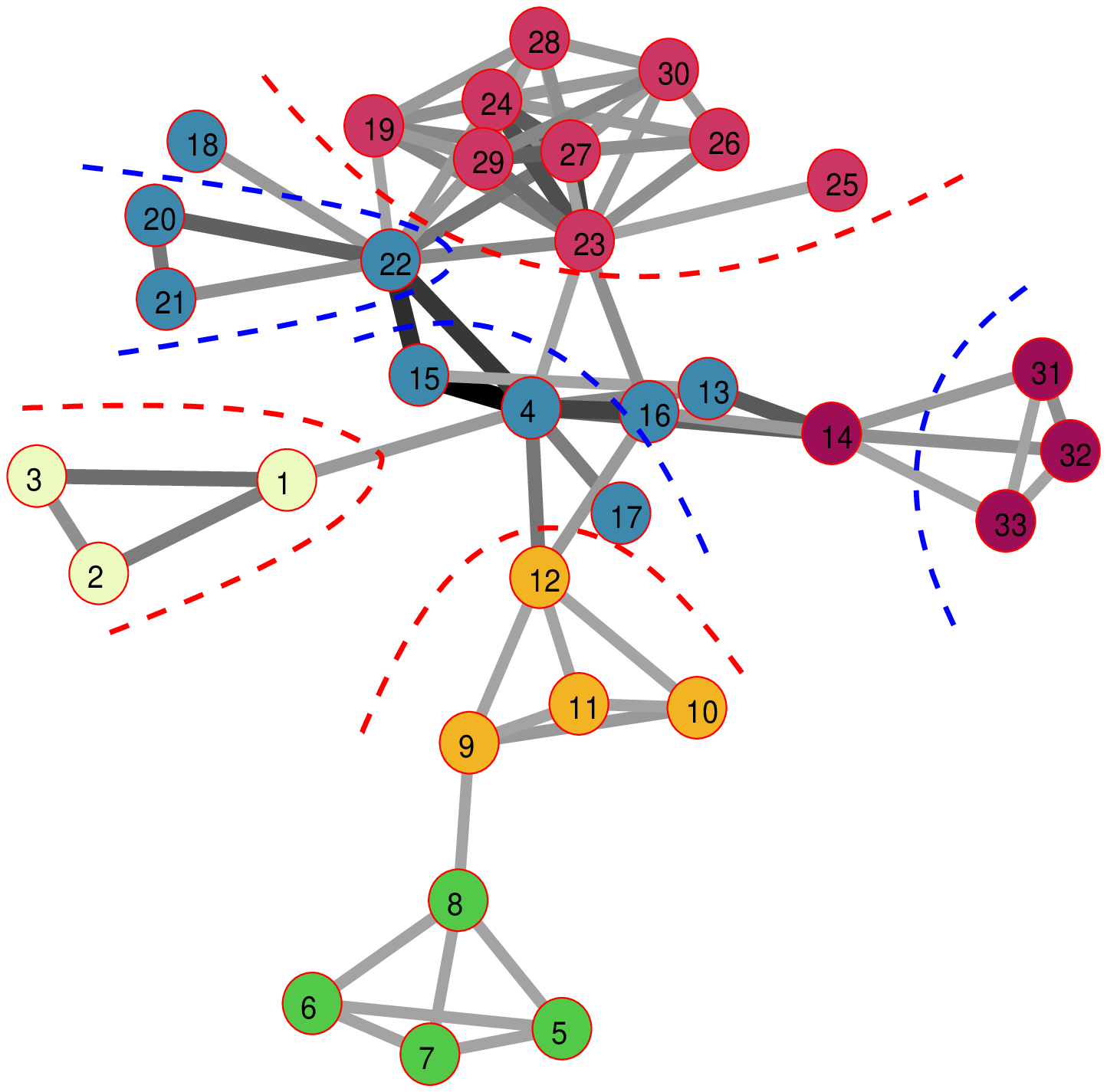} \\
(a) & (b) \\
\includegraphics[width=80mm]{./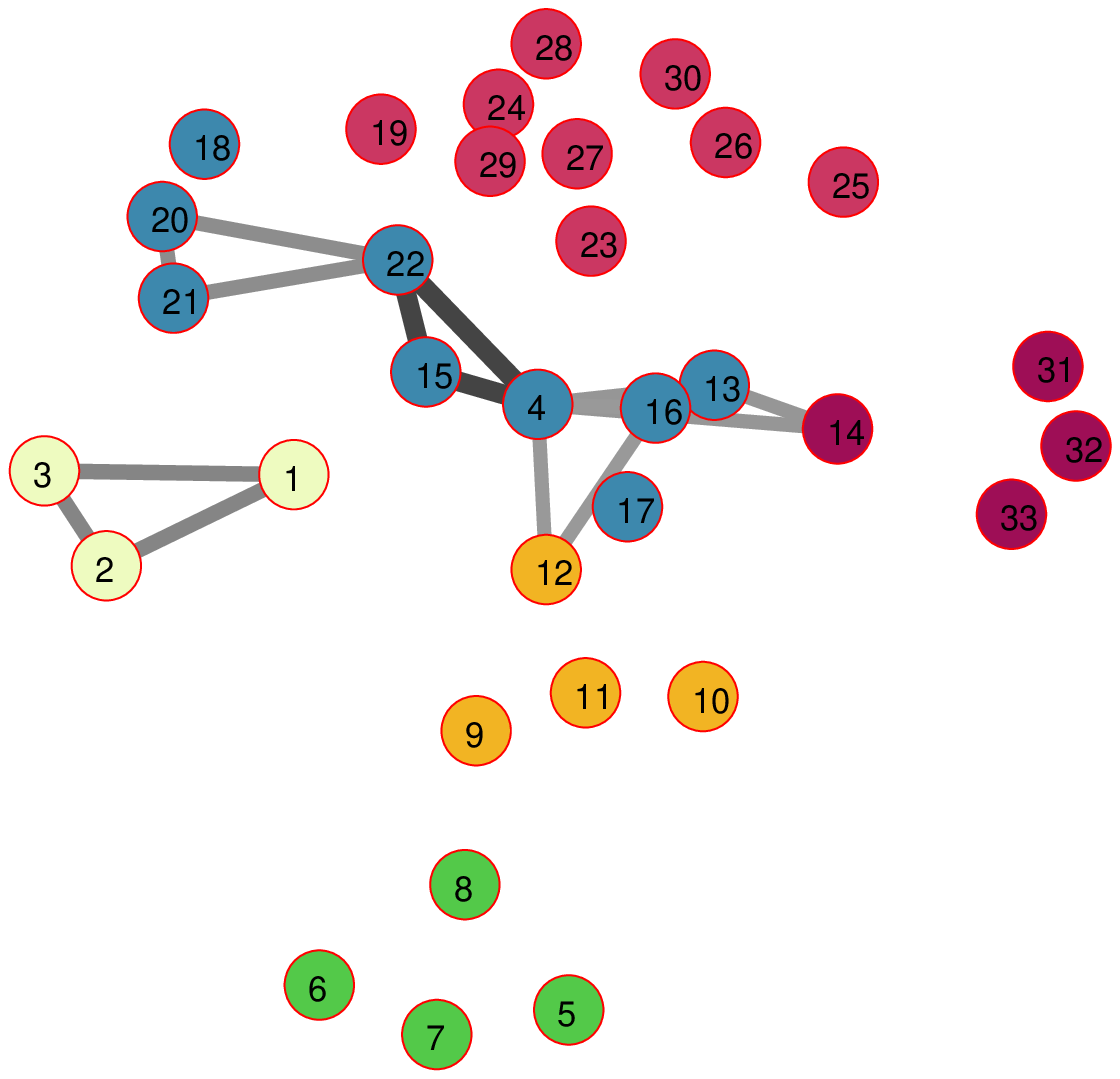} & 
\includegraphics[width=80mm]{./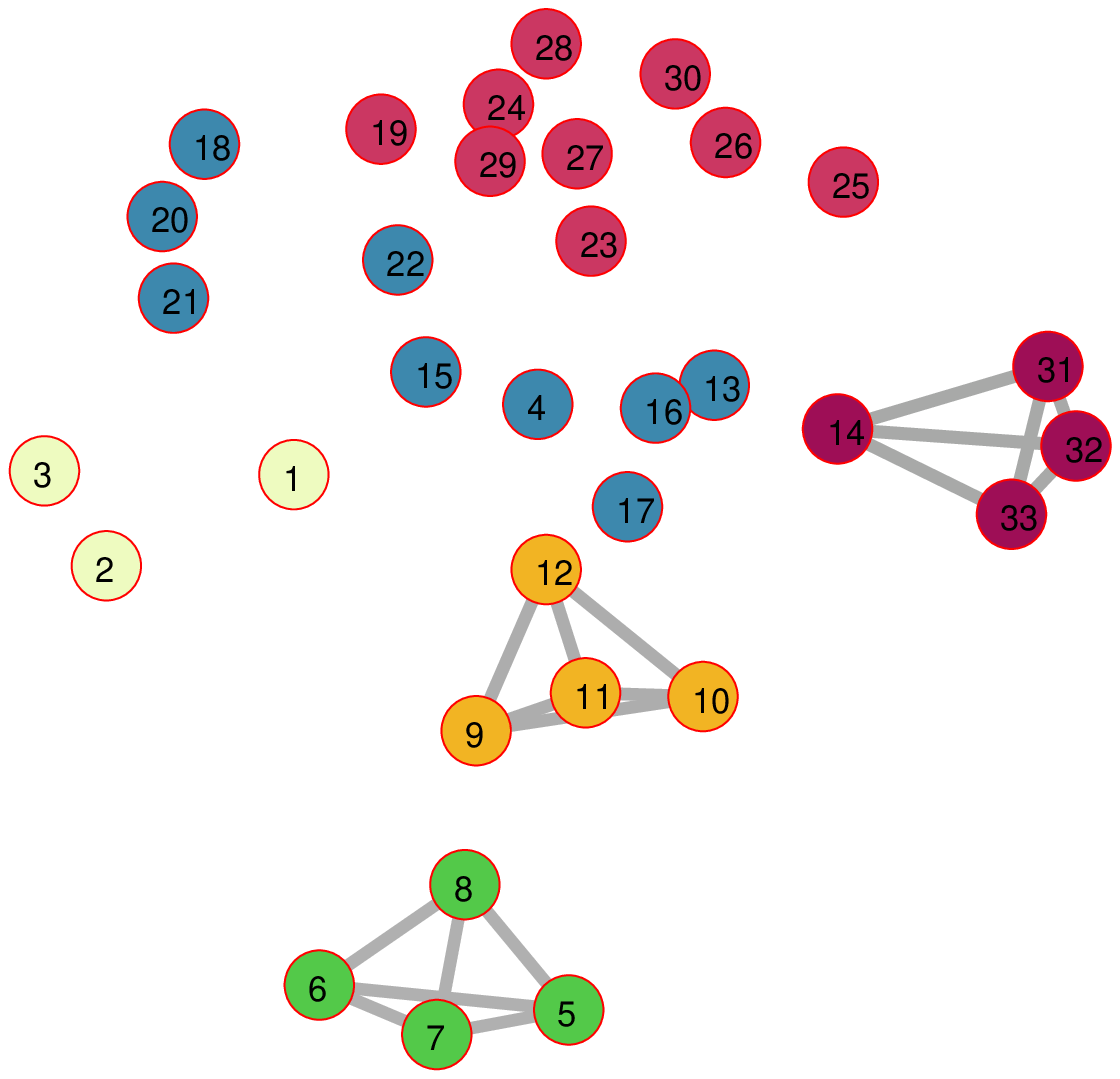} \\
(c) & (d) 
\end{tabular}
\end{center}
\caption{\label{fig:Lesmis} Decomposition of {\sl Les Miserables} social network. 
(a) Social network of characters in {\sl Les Miserables}; (b) Spectral clustering result; (c) The identified 3-cliques; 
(d) The identified 4-cliques.}
\end{figure*}
The results are shown in Figure \ref{fig:Basketball}.  In Figure \ref{fig:Basketball}-(a), we see that the two basketball teams are perfected detected as expected. Since the two $5$-sets correspond to the two teams have no overlap, hence satisfy the irrepresentable Condition (IRR). 
In Figure \ref{fig:Basketball}-(b), we try to detect the two teams under different noise levels $\epsilon \in [0, 1]$. 
The two basketball teams can be detected under fairly large noise levels.
This example can also be dealt with using spectral clustering techniques where we normalize the pairwise interaction data to get
the transition matrix, followed by spectral clustering on eigenspaces. We observed that
both our method and spectral clustering works very well under noise level less than $0.8$ (i.e. $|\epsilon|<0.8$).

\subsection{The Social Network of Les Mis\`{e}rables}

We consider the social network of  $33$ characters in Victor Hugo's novel {\sl Les Mis\`{e}rables}~\citep{knuth1993}. We represent this social network using a weighted graph (Figure \ref{fig:Lesmis}-(a)). The edge weights
are the co-appearance frequencies of the two corresponding characters.
Table 1 illustrates several social communities formed by relationships including {\it friendships, street gangs, kinships}, etc. The underlying social community, regarded as the ground truth for the data, is summarized in Figure 
\ref{fig:Lesmis}-(a) where several social communities arise. 
Figure \ref{fig:Lesmis}-(b) shows the spectral clustering result  in  which the first 
three red cuts are reasonable while 
the next three blue cuts destroyed a lot of community structures within the network.
\begin{figure*}[ht!]
\begin{center}
\begin{tabular}{cc}
\includegraphics[width=80mm]{./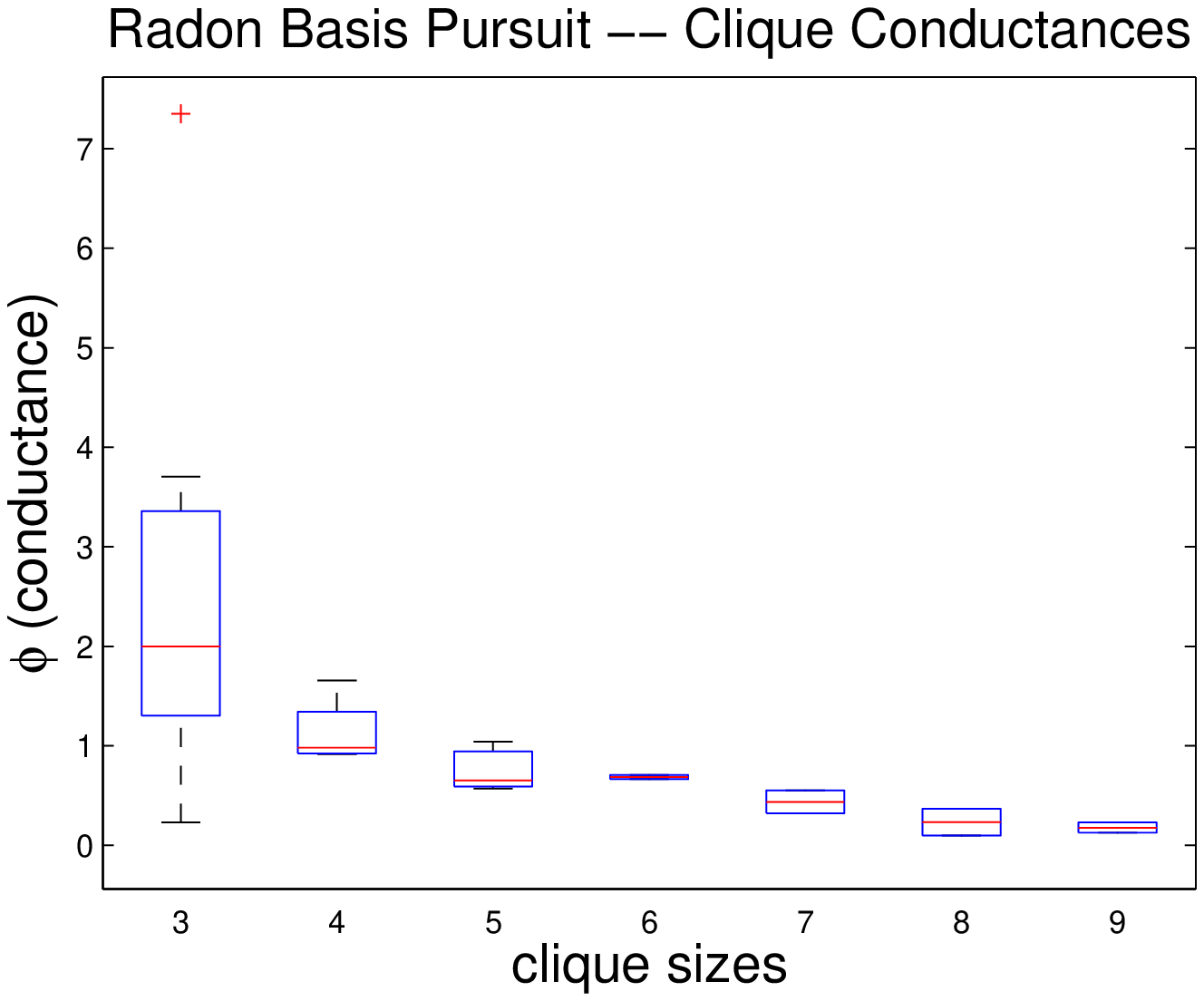} &
\includegraphics[width=80mm]{./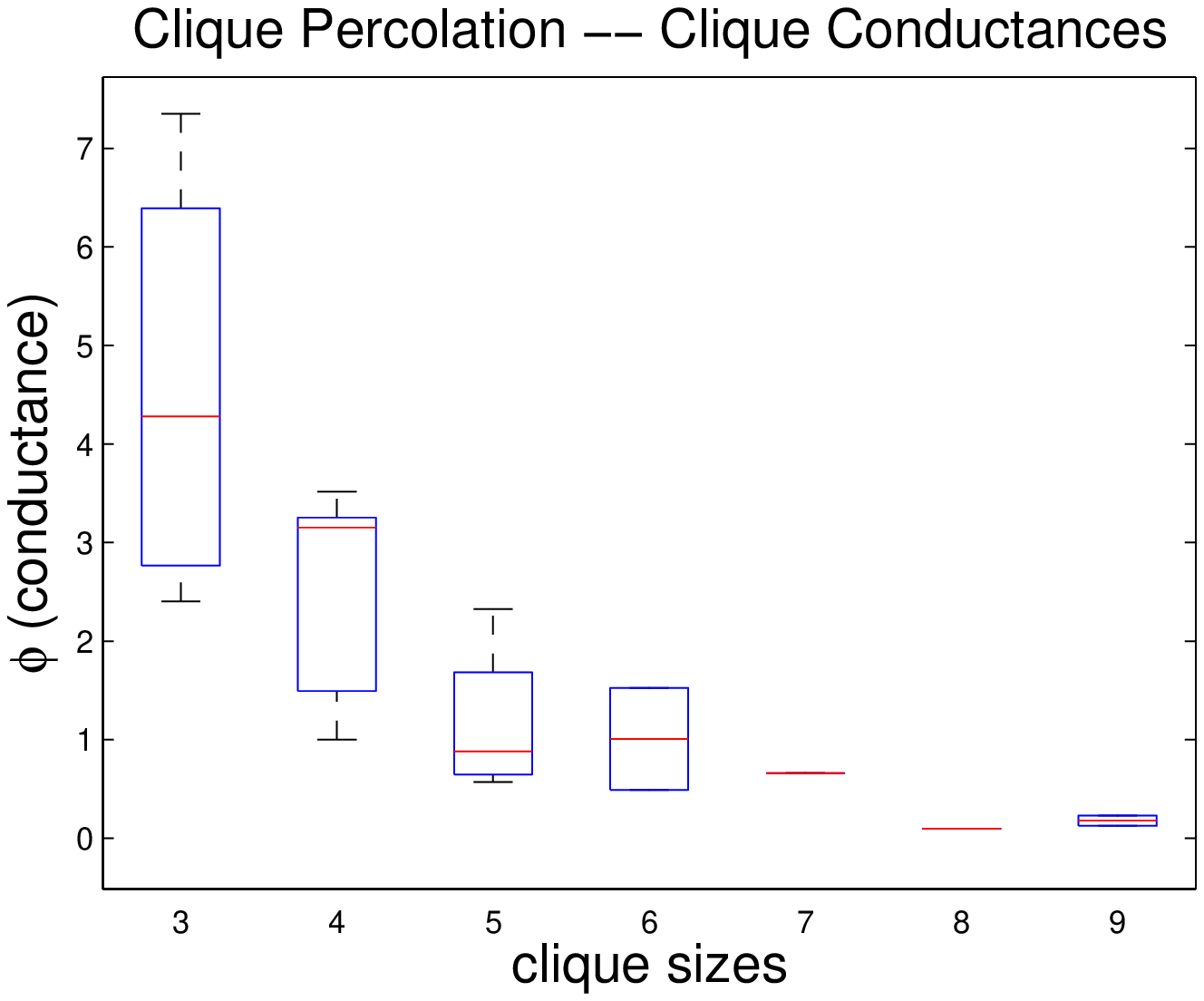} \\
(a) & (b) \\
\includegraphics[width=80mm]{./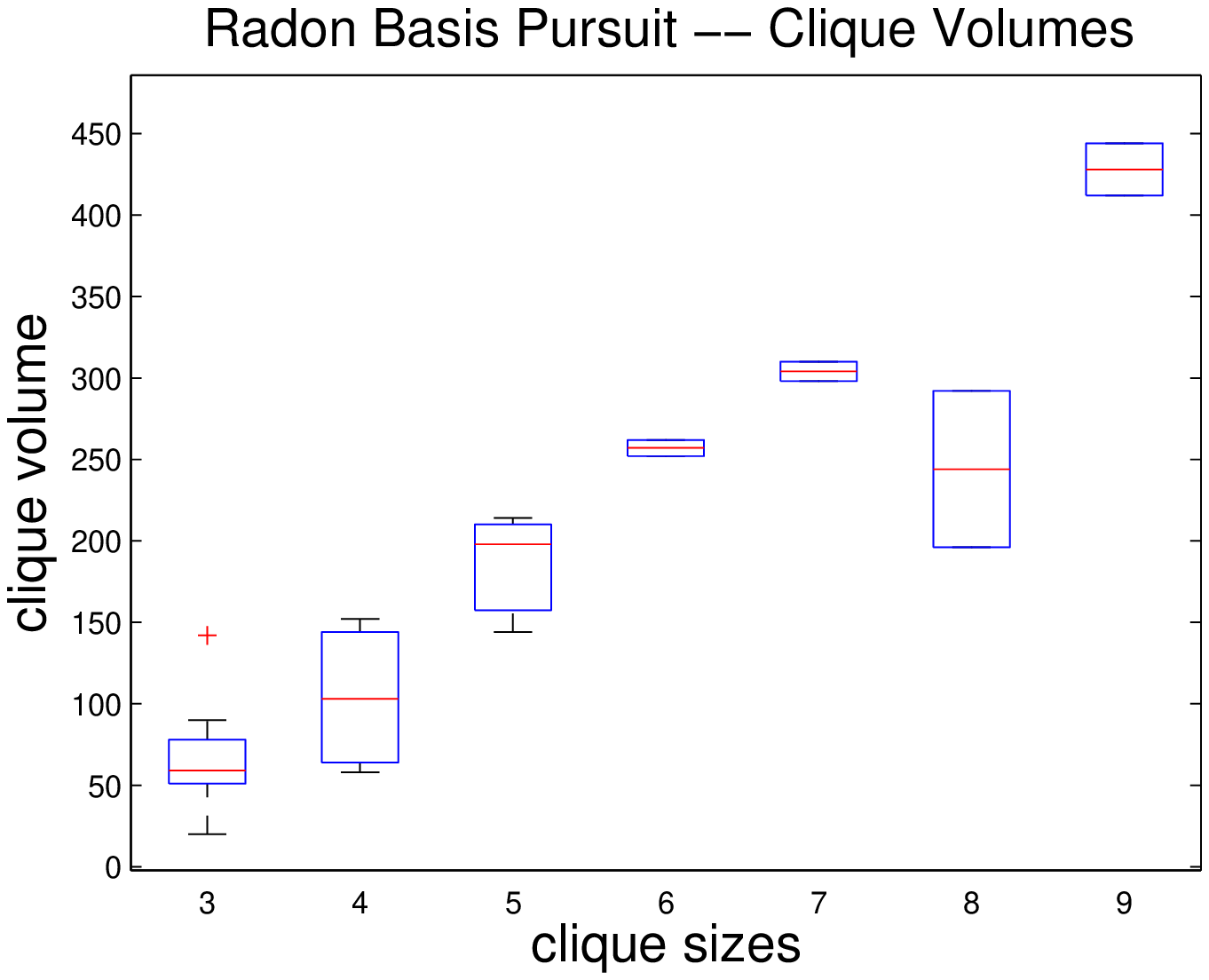} &
\includegraphics[width=80mm]{./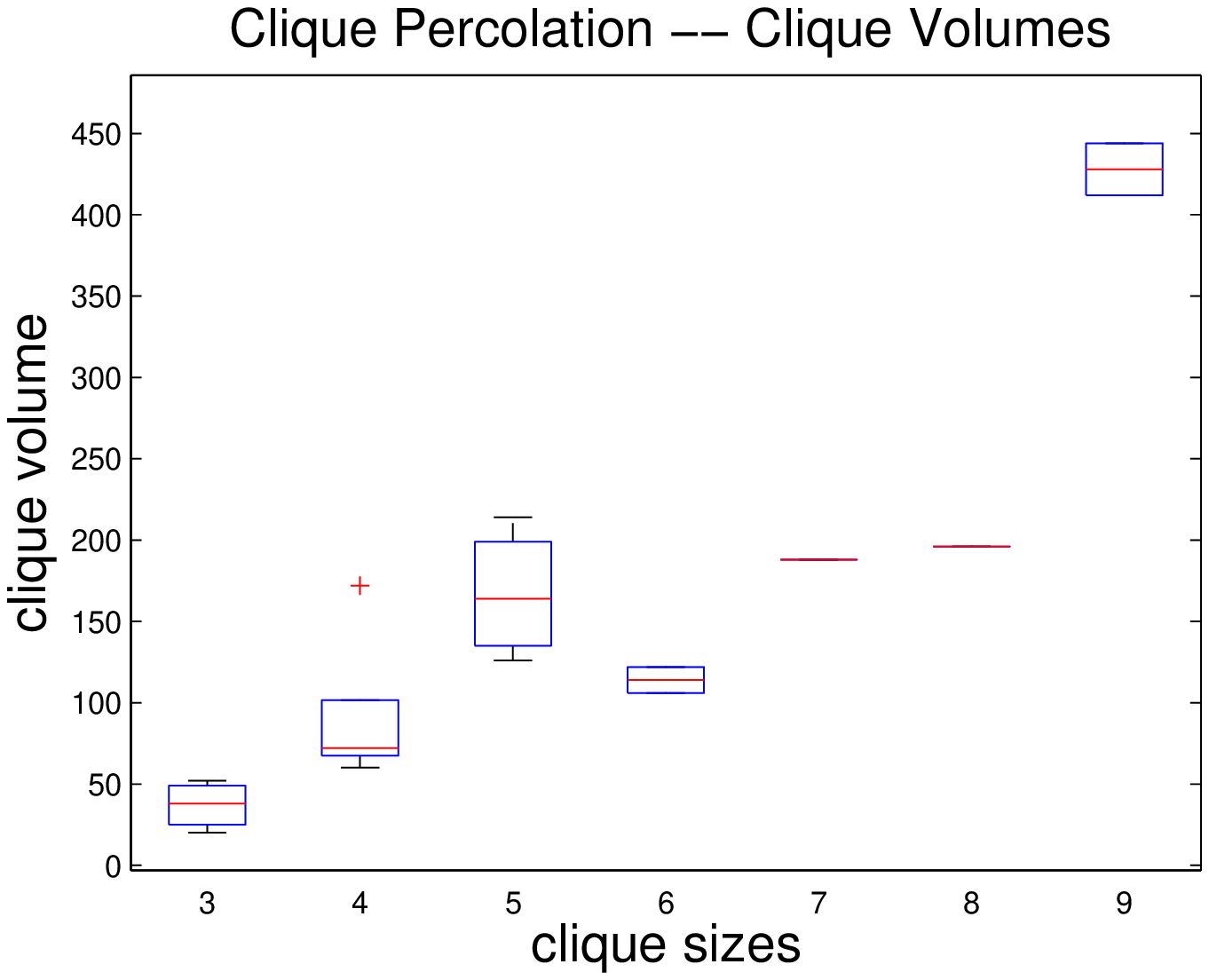} \\
(c) & (d) \\
\end{tabular}
\end{center}
\caption{\label{fig:LesmisStat} {\sl Les Miserables} social network:  Box plot of clique conductances and volumes for clique percolation method and our approach. Cliques identified by our approach have smaller conductances and larger volumes. }
\end{figure*}

We compare our method with the clique percolation method, $23$ and $19$ cliques were identified
respectively where our approach can identify more meaningful cliques --
see Figure \ref{fig:Lesmis} and Table \ref{table:lesmistable} where we verified the ground truth
from the novel.
For example, our method can correctly identify two separate cliques $\{4,15,22\}$ and $\{20,21,22\}$,
while the clique percolation method treats $\{4,15,20,21,22\}$ as a single clique. The interaction frequencies among those
characters, however, show that there are relatively smaller cross-community interactions, thus those two $3$-cliques should be separated.
Figure \ref{fig:Lesmis}-(c) and \ref{fig:Lesmis}-(d) depict important $3$-cliques and $4$-cliques identified by our algorithm. 
The sparsity patterns of those cliques satisfy the irrepresentable condition where overlaps between them are generally not large. 
However, they do not necessarily satisfy the condition in Lemma \ref{lemma:worstcase} which is 
based on a worst-case analysis. 
In Figure \ref{fig:LesmisStat}, we also compare both methods in terms of clique conductances and volumes and see that the cliques identified by Radon basis pursuit have slightly lower conductances and
larger volumes, which demonstrates advantages of our approach.

\begin{table}[ht]\footnotesize
\centering
\caption{\label{table:lesmistable}Social Networks of Les Mis\`{e}rables\vspace{0.1in}}
\begin{tabular}{c|c|c|c|c}
\hline\hline
Cliques & Names of Characters & Relationships & Perco. & Radon \\ [0.5ex]
\hline
$\{1,2,3\}$ & \{Myriel, Mlle Baptistine, Mme Magloire\} & Friendship & N & N\\
$\{4,13,14\}$ & \{Valjean, Mme Thenardier, Thenardier\} & Dramatic Conflicts & N & Y \\
$\{4,15,22\}$ & \{Valjean, Cosette, Marius\} & Dramatic Conflicts & N & Y\\
$\{20,21,22\}$ & \{Gillenormand, Mlle Gillenormand, Marius\} & Kinship & N & Y\\
$\{5,6,7,8\}$ & \{Tholomyes, Listolier, Fameuil, Blacheville\} & Friendship & Y & Y\\
$\{9,10,11,12\}$ & \{Favourite, Dahlia, Zephine, Fantine\}  & Friendship & Y & Y\\
$\{14,31,32,33\}$ & \{Thenardier, Gueulemer, Babet, Claquesous\} & Street Gang & N & Y\\
\hline
\end{tabular}
\end{table}

In summary,  our method obtains more abundant social structure information than the competing 
techniques. We  also obtain social communities with overlaps which is impossible for clustering methods. 
We note that some simple schemes will not work well. For example, one may think of scoring each large 
clique by the mean scores of the included small cliques. 
In this example, since two or three key characters appear very frequently, 
we will end up with finding that the top high order cliques always contain them. In fact, among the top ten 3-cliques,
seven of them contain node $4$ and six of them contain node $15$, which does not give us good results.

\subsection{Coauthorships in Network Science}

We also studied a medium size coauthorship network where there is a total of
1,589 scientists who come from a broad variety of fields. Part of this network is shown in Figure \ref{fig:Netsci}-(a).
$136$ and $166$ cliques are identified by our approach and the clique percolation method respectively. We also compare the two methods
in terms of clique conductances and volumes.
From Figure \ref{fig:NetsciStat}-(a),(b), we see that the cliques identified
by Radon basis pursuit have smaller conductances and comparable clique volumes than the clique percolation method.
Our approach can scale very well. In this example, it can identify the cliques up to size $9$ in $564$ seconds.
So this application example shows that our approach can be used to identify cliques in social networks with 
hundreds or even thousands of nodes.

\begin{figure}[ht]
\begin{center}
\begin{tabular}{cc}
\hspace{-1cm}\includegraphics[width=100mm]{./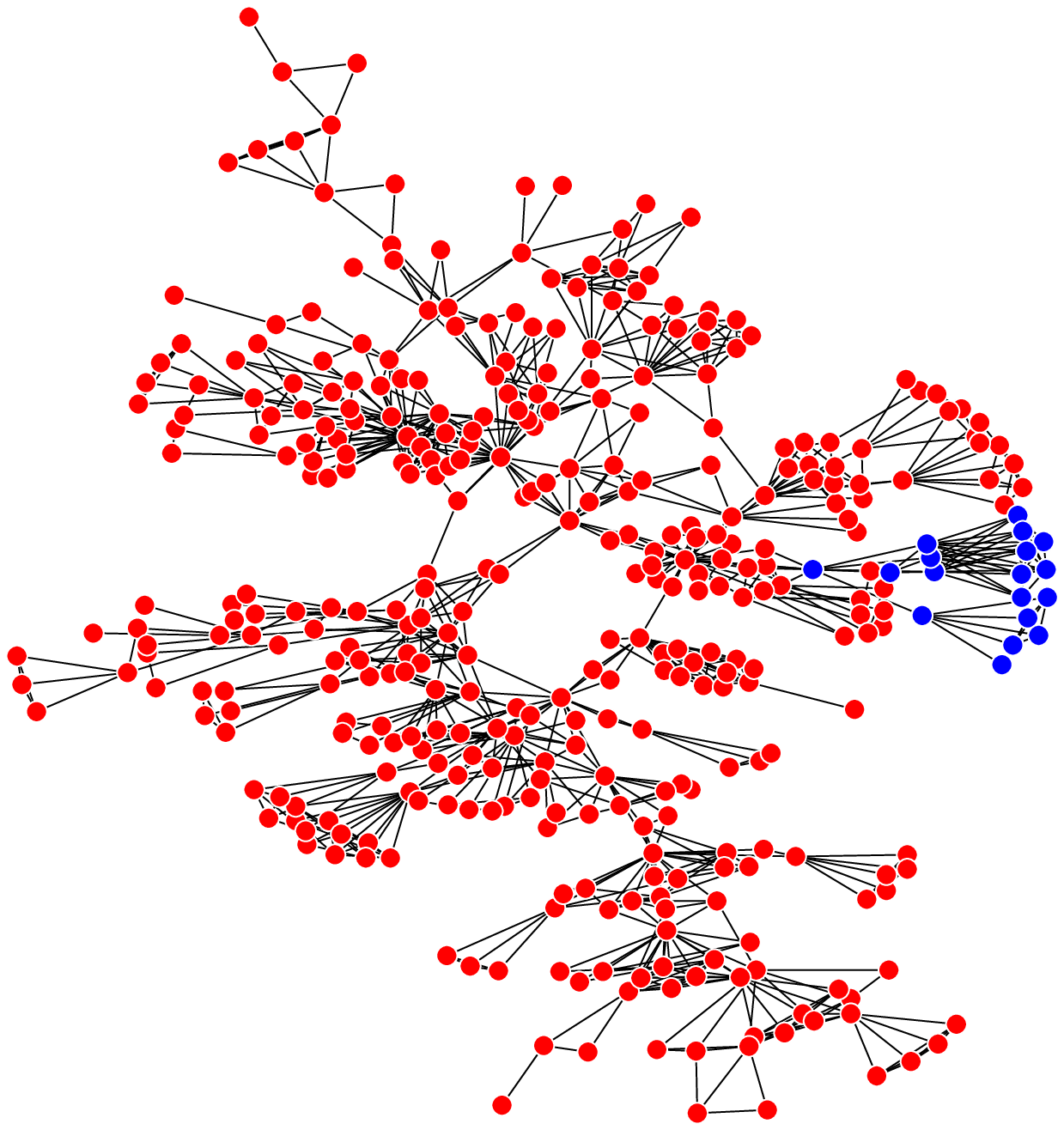} &
\hspace{-1cm}\includegraphics[height=85mm]{./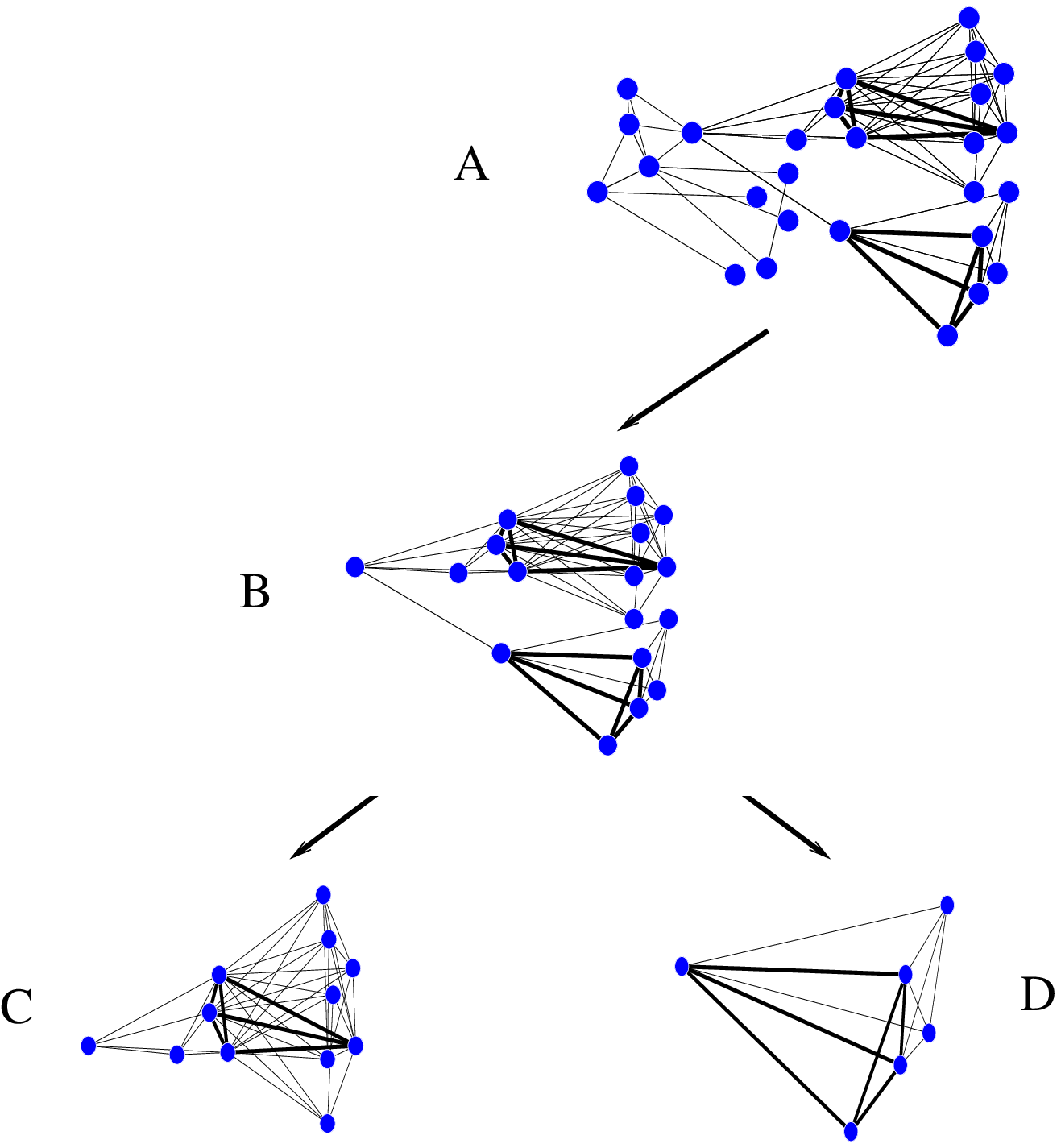}\\ 
(a) & (b)  
\end{tabular}
\end{center}
\caption{\label{fig:Netsci} (a) Coauthorships in Network Science, only a part of the network is shown; (b) Important cliques identified within clusters behave in a 
persistent way. Clustering node B is exactly the blue part in (a) } 
\end{figure}

Finally, we note that clustering techniques, e.g., spectral clustering, combined with our algorithm can provide a more  refined analysis of the network. 
We can look at the persistence of identified cliques in the binary tree
decomposition of bipartite spectral clustering of the network in a bottom-up way. 
Cliques which persist through more levels will give us meaningful community structural information.

In figure \ref{fig:Netsci}-(b), a small fraction of the binary tree decomposition of bipartite 
spectral clustering is depicted, 
where child nodes are spectral bipartition of the parent node. We can detect cliques 
within the child nodes. Once cliques within clusters $C, D$ are identified, we 
then backtrack to the parent nodes $B$ and $A$ to see if the identified cliques still persist.

\begin{figure*}[ht!]
\begin{center}
\begin{tabular}{cc}
\includegraphics[width=80mm]{./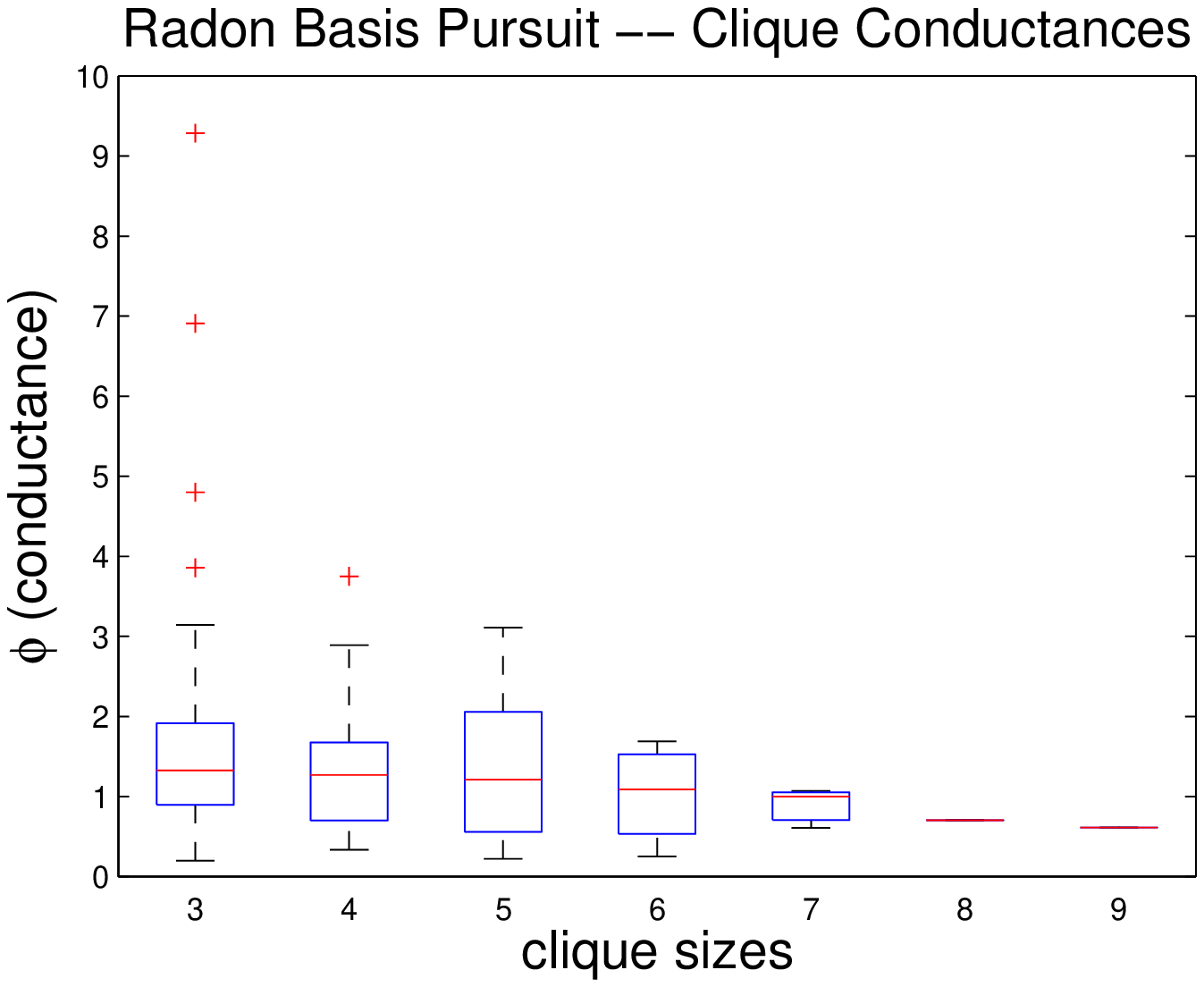} &
\includegraphics[width=80mm]{./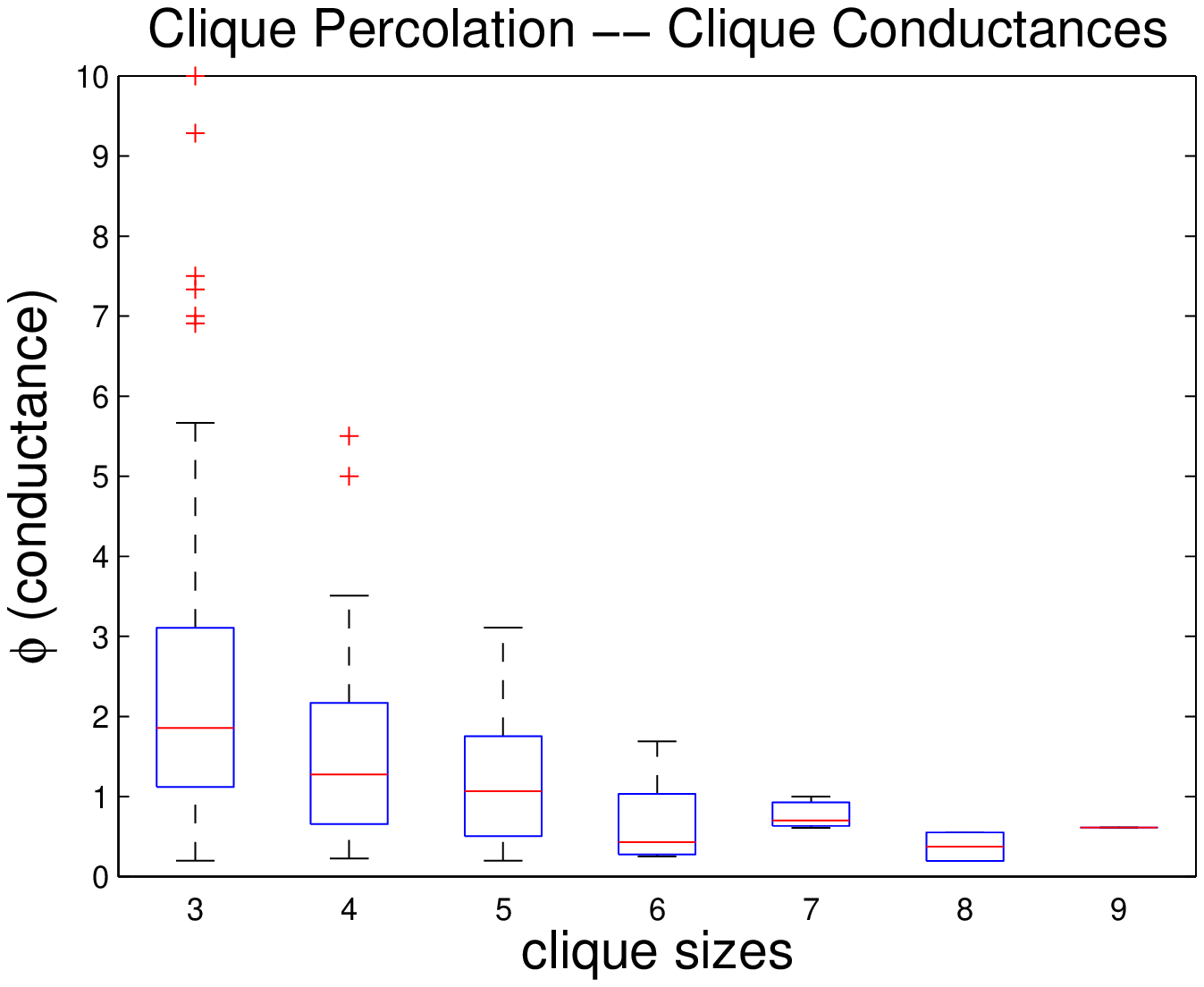} \\
(a) & (b) \\
\includegraphics[width=80mm]{./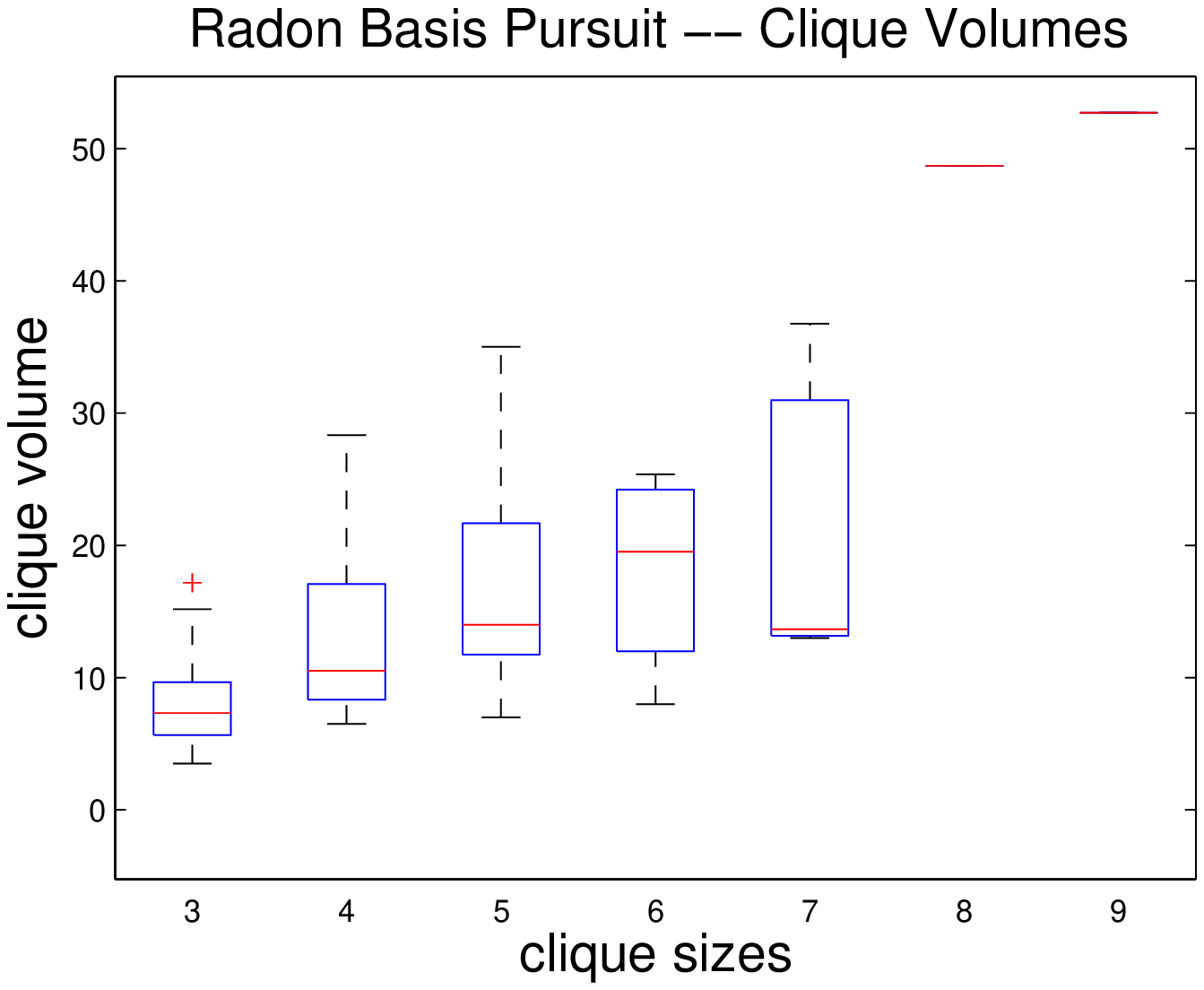} &
\includegraphics[width=80mm]{./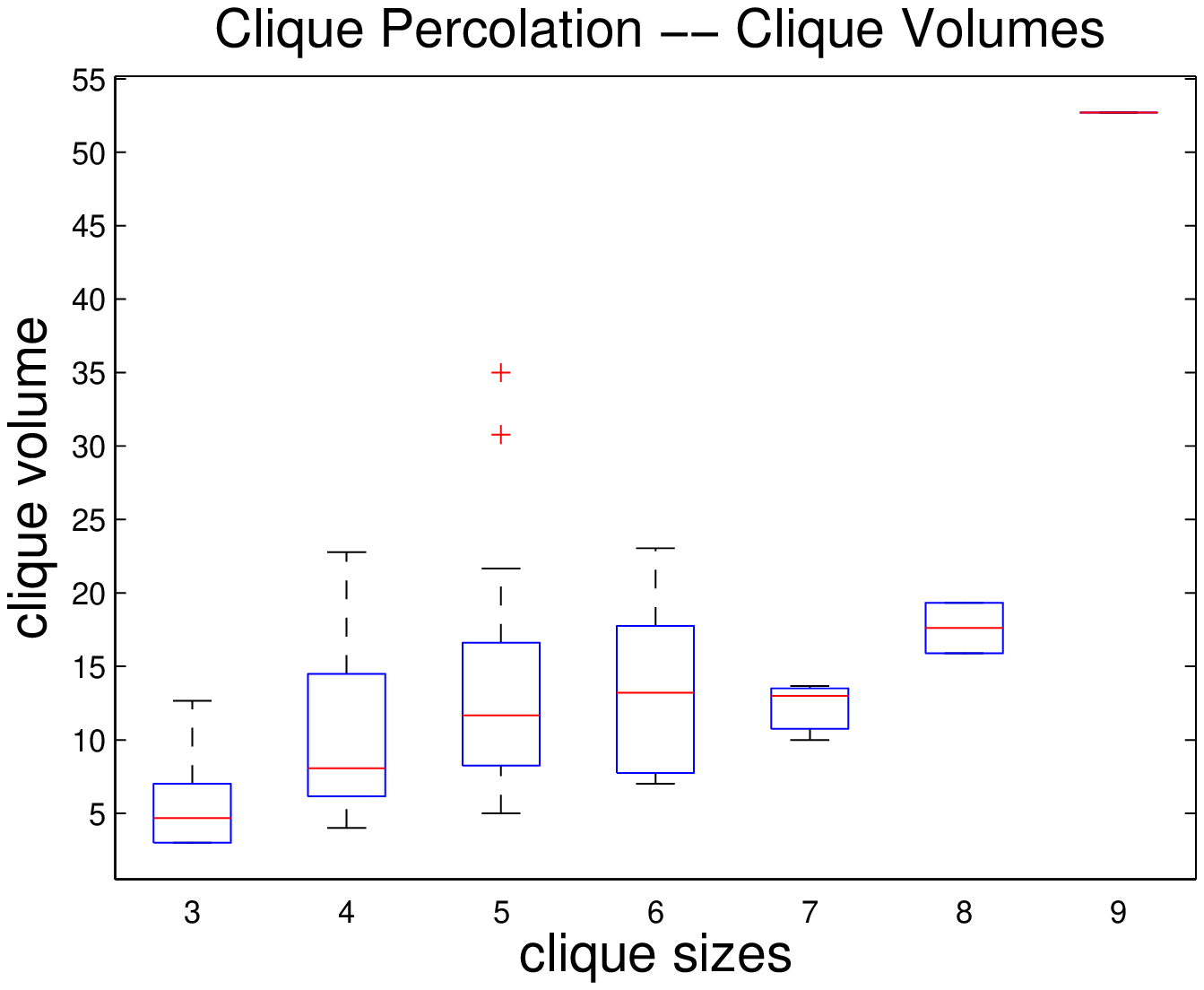} \\
(c) & (d) \\
\end{tabular}
\end{center}
\caption{\label{fig:NetsciStat} Coauthorship Network: Box plot of clique conductances and volumes for clique percolation method and our approach. Cliques identified by our approach have smaller conductances
and larger volumes. }
\end{figure*}

We can identify $3$ cliques ($c_1$=\{Kumar, Raghavan, Rajagopalan, Tomkins\}, $c_2$=\{Kumar. S, Raghavan, Rajagopalan\}, 
$c_3$=\{Raghavan, Rajagopalan, Tomkins, Kumar. S\}) 
within $C$ and $3$ cliques ($d_1$=\{Flake. G, Lawrence. S, Giles. C, Coetzee. F\}, $d_2$=\{Flake. G, Lawrence. S, 
Giles. C, Pennock. D, Glover. E\}, $d_3$=\{Flake. G, Lawrence. S, Giles. C\}) within $D$ which persist to parents $B$ and $A$. We can identify papers whose authors are exactly those cliques. 
Using only clustering will not get this result because those cliques have heavy overlaps between them.

In figure \ref{fig:Netsci}-(b), for simplicity, 
we only show two persistent cliques:  
$c_1$=\{Kumar, Raghavan, Rajagopalan, Tomkins\} and
$d_1$=\{Flake. G, Lawrence. S, Giles. C, Coetzee. F\} which are the most important cliques (having the largest weights when solving the
LP program) in clusters $C$ and $D$ respectively. These two cliques are also the most important two cliques in 
cluster $B$, and if we even further back track them to clustering $A$, they are still ranked as the first and the third
in terms of weights among all cliques identifiable in $A$.

\subsection{Inferring high order ranking}
Jester dataset \citep{jester2001} contains about $24,000$ users who give ratings
on $100$ jokes. Those ratings are of real value ranging from $-10.00$ to 
$+10.00$. We extract top $20$ jokes from the entire dataset according to mean scores.
Among those $20$ jokes, we count the voting on top $5$-jokes by each user and view them as the ground truth. Figure \ref{fig:jester}-(a) shows that there is a top $5$-set, $\{27,29,35,36,50\}$, with an overwhelming voting than the others.

Now suppose we only know information as top $3$ counts of the jokes and wonder if we can identify the most popular 5-joke group. By solving $\mathcal{P}_{1,\delta}$ with the whole 
regularization path by varying $\delta$, we are capable to detect this subset (Figure \ref{fig:jester}-(b)) in a robust way.

\begin{figure}[htp]
\begin{center}
\begin{tabular}{cc}
\includegraphics[width=0.5\textwidth]{./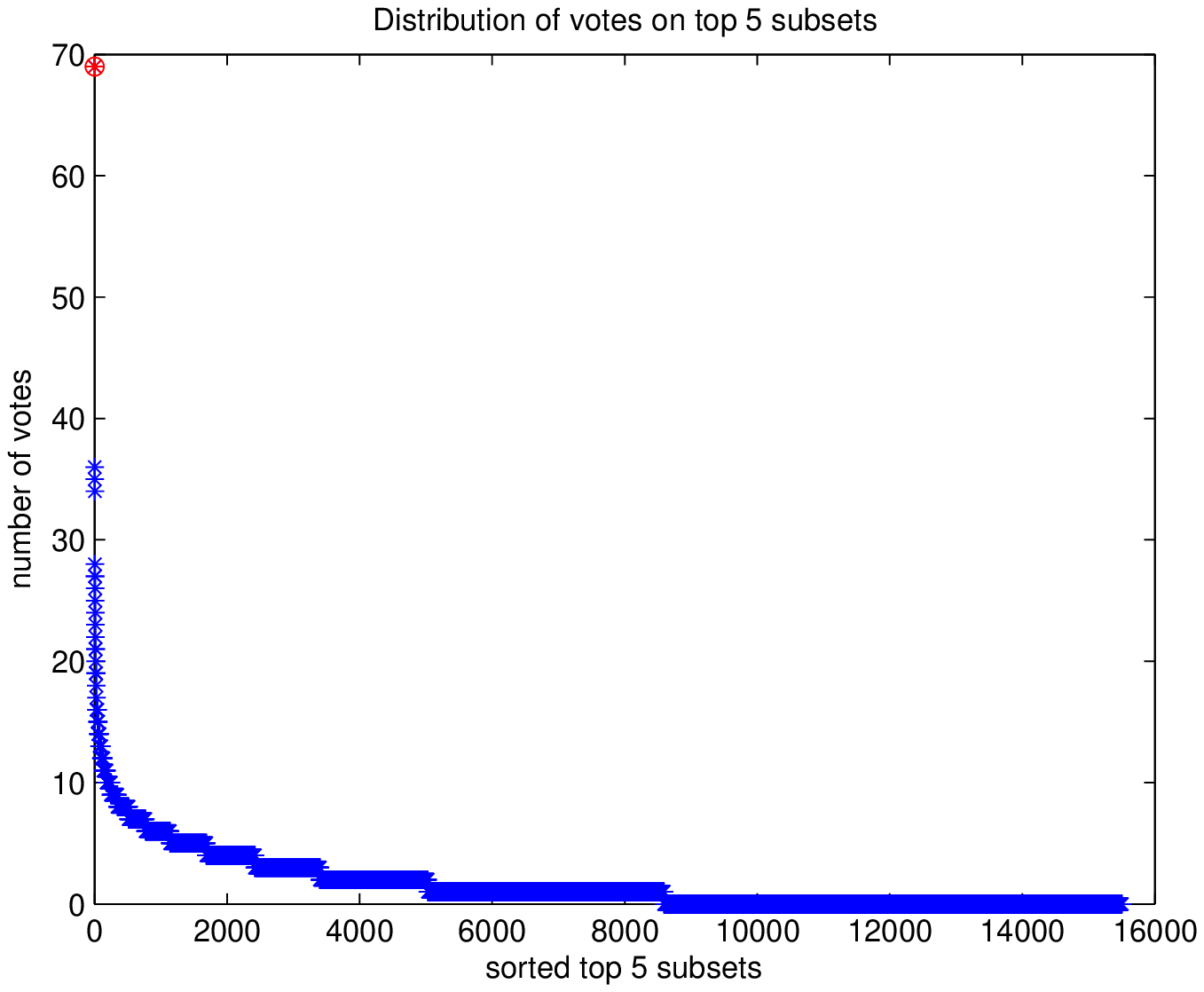} & 
\hspace{-0.4in}\includegraphics[width=0.5\textwidth]{./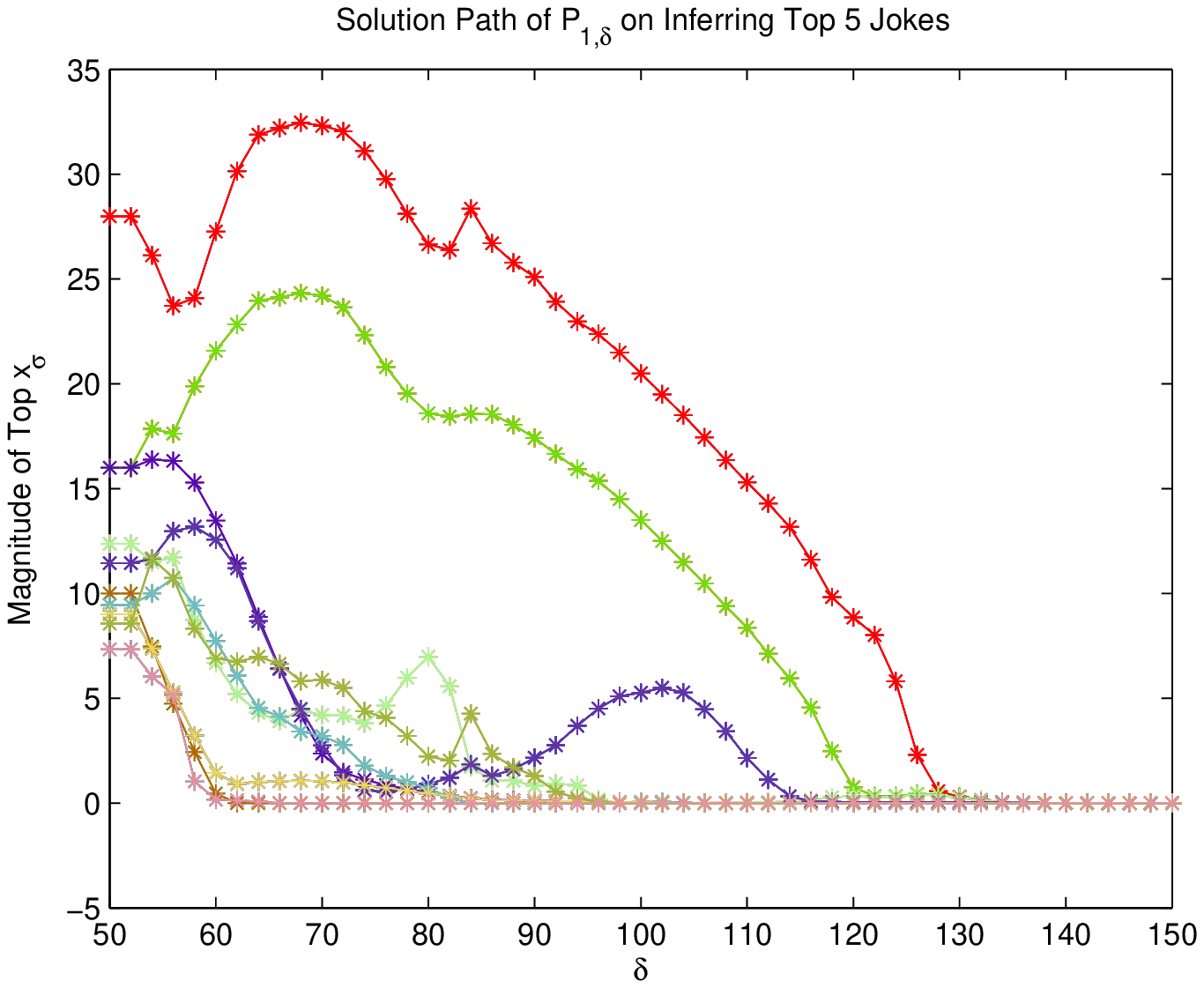} \\
(a) & (b) \\ 
\end{tabular}
\caption{ \label{fig:jester} (a) ÊThere is a significant top-5 jokes (in red) whose ID is $\{27,29,35,36,50\}$; (b) Regularization path where the top curve (red) selects this top group over $\delta\in [50,130]$. Note that the top $2^{nd}$ curve (green) also identifies the fourth 5-set in a persistent way. }
\end{center}
\end{figure}

\section{Conclusions}\label{sec:conclusion}

In this work, we present a novel approach to connect two seemingly different areas: {\it network data analysis} and {\it compressive sensing}.  By adopting a new algebraic tool, {\it Randon basis pursuit in homogeneous spaces}, we formulate the network clique detection problem into a compressed sensing problem.  Such a novel formulation allows us to construct rigorous conditions to  characterize the network clique recovery problems.   Instead of providing another heuristic method, we aim at contributing at the foundational level to network data analysis.  We hope that our work could build a bridge connecting the research communities of network modeling and compressive sensing, so that research results and tools from one area could be ported to another one to create more exciting results.

To illustrate the usefulness of this new framework, we present a novel approach to identify overlapped communities as cliques in social networks, based on compressed sensing with an new algebraic method, i.e. Radon basis pursuit in homogeneous spaces associated with permutation groups. Our approach starts from a general problem of compressive representation of low order interactive information from high order cliques, which firstly arises from identity management and statistical ranking, etc. Specifically applied to social networks, this approach studies bi-variate functions defined on pairs of nodes, and looks for compressive representations of such functions based on clique information in networks. It turns out that the sparse representation under Radon basis may disclose community structures, typically overlapped, in social networks. We have shown that noiseless exact recovery and stable recovery with uniformly bounded noise hold under some natural conditions. Though this paper is mainly methodological and theoretical, we also develop a polynomial-time approximation algorithm for solving empirical problems and demonstrate the usefulness of the proposed approach on real-world  networks.

\section{Acknowledgments}

Xiaoye Jiang and Leonidas Guibas wish to acknowledge the support of ARO grants W911NF-10-1-0037 and W911NF-07-2-0027, as well as NSF grant CCF 1011228 and a gift from the Google Corporation. Y. Yao acknowledges supports from the National Basic Research Program of China (973 Program 2011CB809105), NSFC (61071157), Microsoft Research Asia, and a professorship in the Hundred Talents Program at Peking University. The authors also thank Zongming Ma, Minyu Peng, Michael Saunders, Yinyu Ye for very helpful discussions and comments. Han Liu is  thankful for a faculty supporting package from Johns Hopkins University.


\bibliography{sample}

\end{document}